\documentclass[journal,twoside,web]{ieeecolor}

\usepackage{generic}
\usepackage[noadjust]{cite}
\usepackage{amsmath,amssymb,amsfonts}
\usepackage{algorithmic}
\usepackage{graphicx}
\usepackage{textcomp}
\usepackage{dsfont}

\usepackage{graphicx}
\usepackage{subfigure}
\usepackage{epsfig} %
\usepackage{doi}
\usepackage{comment}

\usepackage{tabularx}
\usepackage{layouts}
\usepackage{booktabs}
\usepackage{multirow}

\usepackage{amsthm}  %
\usepackage{lcsys}
\usepackage{etoolbox}
\usepackage{cancel}

\usepackage[capitalise]{cleveref}

\crefname{assumption}{Assumption}{Assumptions}

\newtheorem{theorem}{Theorem}
\newtheorem{lemma}{Lemma}

\newtheorem{corollary}{Corollary}
\newtheorem{definition}{Definition}

\newtheorem{assumption}{Assumption}

\begin{document}

\def\BibTeX{{\rm B\kern-.05em{\sc i\kern-.025em b}\kern-.08em
    T\kern-.1667em\lower.7ex\hbox{E}\kern-.125emX}}
\markboth{\journalname, VOL. XX, NO. XX, XXXX 2017}
{Author \MakeLowercase{\textit{et al.}}: Preparation of Papers for IEEE Control Systems Letters (August 2022)}

\newcommand{\kf}{k}
\newcommand{\Hk}{\mathcal{H}_k}
\newcommand{\Hkf}{\Hk}

\newcommand{\ks}{k^{\sigma}}
\newcommand{\kw}{k^w}
\newcommand{\ksum}{\kf + \ks}
\newcommand{\kd}{k^{\delta}}

\newcommand{\Hks}{\mathcal{H}_{\ks}}

\newcommand{\fmu}[1][\sigma]{\mu_{#1}}
\newcommand{\wmu}[1][\sigma]{w^\mu_{#1}}
\newcommand{\gmu}[1][\sigma]{g^\mu_{#1}}
\newcommand{\thetamu}[1][\sigma]{\theta^\mu_{#1}}
\newcommand{\fopt}{f^\star}
\newcommand{\wopt}{w^\star}
\newcommand{\gopt}{g^\star}
\newcommand{\thetaopt}{\theta^\star}
\newcommand{\ftr}{f^{\mathrm{tr}}}
\newcommand{\wtr}{w^{\mathrm{tr}}}
\newcommand{\thetatr}{\theta^{\mathrm{tr}}}
\newcommand{\ftrint}{f^{\mathrm{tr,int}}}
\newcommand{\wtrint}{w^{\mathrm{tr,int}}}

\newcommand{\Rgeq}{\mathbb{R}_{\geq 0}}
\newcommand{\Rge}{\mathbb{R}_{> 0}}

\newcommand{\covar}[1][\lambda]{\Sigma_{#1}}
\newcommand{\Mu}[1][\lambda]{m_{#1}}
\newcommand{\Sig}[1][\lambda]{S_{#1}}
\newcommand{\Gram}[1][\lambda]{M_{#1}}
\newcommand{\gram}[1][\lambda]{\hat{K}_{#1}}
\newcommand{\Sump}[1][\lambda]{Q_{#1}}

\newcommand{\Gam}[1][\lambda]{\Gamma_{#1}}
\newcommand{\gam}[1][\lambda]{\Gamma_{#1}}
\newcommand{\Beta}[1][\lambda]{\beta_{#1}}
\newcommand{\bet}[1][\lambda]{\beta_{#1}}

\newif\ifnotes
\notesfalse

\newtoggle{arxiv}
\toggletrue{arxiv}

\newif\ifarxiv
\iftoggle{arxiv}{\arxivtrue}{\arxivfalse}

\title{Optimal uncertainty bounds for multivariate kernel regression under bounded noise: \\ A Gaussian process-based dual function}

\author{
  Amon Lahr$^{a}$,
  Anna Scampicchio$^{b,*}$,
  Johannes Köhler$^{c,*}$,
  Melanie N. Zeilinger$^{a}$
  \thanks{$^{a}$ Institute for Dynamical Systems and Control, ETH Zurich, Zurich, Switzerland; \texttt{\{amlahr,mzeilinger\}@ethz.ch}}
  \thanks{$^{b}$ Department of Electrical Engineering, Chalmers University of Technology, Göteborg, Sweden; \texttt{anna.scampicchio@chalmers.se}}
  \thanks{$^{c}$ Department of Mechanical Engineering, Imperial College London, London, UK; \texttt{j.kohler@imperial.ac.uk}}
  \thanks{$^{*}$ Both co-authors contributed equally.}
}

\maketitle
\thispagestyle{empty}

\begin{abstract}
    Non-conservative uncertainty bounds are essential for making reliable predictions about latent functions from noisy data---and thus, a key enabler for safe learning-based control. In this domain, kernel methods such as Gaussian process regression are established techniques, thanks to their inherent uncertainty quantification mechanism. Still, existing bounds either pose strong assumptions on the underlying noise distribution, are conservative, do not directly apply in the multi-output case, or are difficult to integrate into downstream tasks. 
    This paper addresses these limitations by presenting a tight, deterministic bound for multi-output functions in Reproducing Kernel Hilbert Spaces~(RKHSs) subject to bounded noise.
    It is obtained through an
    unconstrained, duality-based formulation,
    which
    shares the same structure as classic Gaussian process confidence bounds,
    and can thus be straightforwardly integrated
    into downstream optimization pipelines. 
    We show that the proposed bound
    generalizes existing results 
    and illustrate 
    its application
    using an example inspired by quadrotor dynamics learning.
\end{abstract}

\begin{IEEEkeywords}
  Machine learning and control, 
  Estimation, 
  Uncertain systems
\end{IEEEkeywords}

\section{Introduction}

\IEEEPARstart{K}{ernel}
methods have 
attracted a lot of attention 
in control
due to their accurate estimation performance, flexibility and low tuning effort~\cite{scholkopf_learning_2002}. In particular, estimates from Gaussian process~(GP) regression~\cite{rasmussen_gaussian_2006}, being complemented by rigorous uncertainty bounds, are very promising in view of learning-based control with safety constraints~\cite{scampicchio_gaussian_2025,wabersich_data-driven_2023,martin_guarantees_2023}. However, standard confidence bounds for Gaussian process regression~\cite{srinivas_information-theoretic_2012,abbasi-yadkori_online_2013,chowdhury_kernelized_2017,fiedler_safety_2024-1,molodchyk_towards_2025-1} assume independent noise realizations and cannot handle correlated noise sequences. 
Deterministic bounds~\cite{maddalena_deterministic_2021,reed_error_2025,hashimoto_learning-based_2022-1,yang_kernel-based_2024,scharnhorst_robust_2023,lahr_optimal_2025} alleviate this independence assumption,
providing robust uncertainty envelopes for bounded noise realizations without asserting a particular noise distribution.
Yet, 
existing 
results
are either conservative~\cite{maddalena_deterministic_2021,hashimoto_learning-based_2022-1,yang_kernel-based_2024,reed_error_2025}, require constrained or non-smooth optimization~\cite{scharnhorst_robust_2023}, or are restricted to simple 
hyper-cube or ellipsoidal
uncertainty sets~\cite{maddalena_deterministic_2021,reed_error_2025,hashimoto_learning-based_2022-1,yang_kernel-based_2024,scharnhorst_robust_2023,lahr_optimal_2025}.

In this work, we derive
a tight, deterministic uncertainty bound 
for noise bounded by an intersection of ellipsoids,
encompassing
point-wise bounded and energy-bounded noise realizations.
The result is
given 
in terms of the
unconstrained optimizer of
an invex, Gaussian process-based dual function (\cref{sec:main_result}),
and is directly applicable in the multi-output case with partial measurements.
We provide a thorough comparison with 
other 
existing deterministic bounds,
showing that our bound
encompasses those in~\cite{hashimoto_learning-based_2022-1,yang_kernel-based_2024,scharnhorst_robust_2023,lahr_optimal_2025} as special cases (\cref{sec:discussion}).
Finally, we 
numerically 
compare the bounds' 
conservatism and solve time 
using an application example inspired by 
multivariate 
quadrotor dynamics learning
subject to direction-dependent wind disturbances (\cref{sec:application}).
We provide a further generalization of the proposed uncertainty bounds, including a convex formulation, in 
\iftoggle{arxiv}{%
\cref{sec:app_generalized_bounds}.
}{%
\cite[Appendix~E]{lahr_optimal_2026}.
}

\subsubsection*{Notation}

The space of (strictly) positive real numbers 
is denoted by $\Rgeq$ ($\Rge$).
We denote by $e_i \in \mathbb{R}^{N}$ the \mbox{$i$-th} unit vector in $\mathbb{R}^{N}$, 
and by $\mathds{1}_N \in \mathbb{R}^{N}$ a vector of ones.
The weighted Euclidean norm 
is denoted by $\| v \|_M \doteq \sqrt{v^\top M v}$.
The 
concatenation of
matrices $M_i \in \mathbb{R}^{n \times m}$ 
is written
as \mbox{$[M_i]_{i=1}^N \doteq [M_1^\top, \ldots, M_N^\top]^\top \in \mathbb{R}^{N n \times m}$}. 
Likewise,
for two 
ordered 
index sets
\mbox{$\mathbb{I},\mathbb{J}\subseteq\mathbb{N}$} with cardinalities $|\mathbb{I}| = N$, $|\mathbb{J}| = M$, 
particularly for e.g. $\mathbb{I}_1^N \doteq 1 : N \doteq \{1,\dots,N\}$,
the matrix \mbox{$K^{f}_{\mathbb{I},\mathbb{J}} \doteq [k(x_i,x_j)]_{i \in \mathbb{I}, j \in \mathbb{J}} \in \mathbb{R}^{N n_f \times M n_f}$} 
collects
the evaluations of the (matrix-valued) kernel function \mbox{$k: \mathbb{R}^{n_x} \times \mathbb{R}^{n_x} \rightarrow \mathbb{R}^{n_f \times n_f}$} at pairs of input locations $x_i$, $x_j$.

\section{Problem setting}\label{sec:problem_setting}

We 
consider the problem of
computing uncertainty bounds
for the value of an
unknown
multivariate
function 
$\ftr: \mathbb{R}^{n_x} \rightarrow \mathbb{R}^{n_f}$ at an arbitrary input location~$x_{N+1} \in \mathbb{R}^{n_x}$.
Information about the latent function is given
in terms of a data set
of $N$
noisy, partial measurements $y = [y_i]_{i=1}^N \in \mathbb{R}^{N}$,
\begin{align}
  \label{eq:data}
  y_i = c_i^\top \ftr(x_i) + w_i, \> \forall i \in \mathbb{I}_1^N
\end{align}
corrupted by
additive noise~$w_i$.
We assume that the disturbances 
$w = [w_i]_{i=1}^{N} \in \mathbb{R}^{N}$
are jointly bounded by a collection of ellipsoidal uncertainty sets, 
defined as follows.
\begin{assumption}
  \label{ass:bounded_noise}
  The noise realizations are bounded by
  \begin{align}
    \label{eq:noise_constraints}
    w^\top P^w_j w  < \Gamma_{w,j}^2 \> \forall 
    j \in \mathbb{I}_1^{n_{\mathrm{con}}},
  \end{align}
  with known
  constants 
  $\Gamma_{w,j} > 0$ and positive-semidefinite matrices 
  $P_j^w \in \mathbb{R}^{N \times N}$.
  Additionally, 
  the matrix 
  $P^w \doteq \sum_{j=1}^{n_\mathrm{con}} P^w_j$ is positive definite.
\end{assumption}
\noindent 
The constraints~\eqref{eq:noise_constraints}
can model
correlated disturbances 
across input locations and output components.
Special cases 
include:
\begin{enumerate}
  \item point-wise bounded noise:
        For $n_{\mathrm{con}} = N$ and 
        $P^w_j = e_j e_j^\top$, 
        \cref{eq:noise_constraints}
        reads as 
        $w_j^2 < \Gamma_{w,j}^2$, 
        $j \in \mathbb{I}_1^N$.
  \item energy-bounded noise:
        For $n_{\mathrm{con}} = 1$, 
        \cref{eq:noise_constraints}
        reads as 
        $w^\top P^w_1 w < \Gamma_{w,1}^2$, 
        with $P^w_1$ positive definite. The special case $P^w_1 = I_{N}$ recovers the setting of energy-bounded noise realizations, i.e., the constraint 
        $\sum_{i=1}^{N} w_i^2 < \Gamma_{w,1}^2$.
\end{enumerate}

To infer uncertainty bounds about the latent function~$f$,
we make use of the following standard regularity assumption~\cite{srinivas_information-theoretic_2012,abbasi-yadkori_online_2013,chowdhury_kernelized_2017,fiedler_safety_2024-1,molodchyk_towards_2025-1,maddalena_deterministic_2021,reed_error_2025,hashimoto_learning-based_2022-1,yang_kernel-based_2024,scharnhorst_robust_2023,lahr_optimal_2025}, 
cf.~\cite[Remark~1]{scharnhorst_robust_2023} for a discussion.
\begin{assumption}
  \label{ass:rkhs_norm_f}
  Let 
  $\ftr \in \mathcal{H}_k$ be an element of the Reproducing Kernel Hilbert Space (RKHS) $\Hk$, defined by a given positive-semidefinite, matrix-valued kernel function \mbox{$\kf: \mathbb{R}^{n_x} \times \mathbb{R}^{n_x} \rightarrow \mathbb{R}^{n_f} \times \mathbb{R}^{n_f}$}.
  Let the norm of $\ftr$
  be bounded, i.e.,
  $\| \ftr \|_{\Hk} < \Gamma_{f}$,
  for a known constant $\Gamma_f > 0$.
\end{assumption}
This general set-up 
encompasses latent-function estimation using 
multivariate finite-dimensional features (through positive semi-definiteness of~$\kf$), 
independent scalar outputs (through a diagonal structure of $\kf$),
or any combination thereof.

\section{Main result}
\label{sec:main_result}

We formulate the desired uncertainty bound for the vector-valued unknown function 
in terms of
its test-point-dependent, worst-case realization for an arbitrary direction $h \in \mathbb{R}^{n_f} \setminus \{ 0 \}$.
To this end, we transcribe \cref{ass:bounded_noise,ass:rkhs_norm_f} as well as \cref{eq:data}
into the following optimization problem: 
\begin{subequations}
  \label{eq:inf_opt_infdim}
  \begin{align}
    \overline{f}_h(x_{N+1}) = \sup_{\substack{f \in \Hkf                                        \\ w \in \mathbb{R}^{N}}} \quad & h^\top f(x_{N+1}) \\
    \mathrm{s.t.} \quad 
    &
    c_i^\top f(x_i) + w_i = y_i, \> i \in \mathbb{I}_1^N, \label{eq:inf_opt_infdim_data} \\
    & w^\top P^w_j w \leq \Gamma_{w,j}^2, \label{eq:inf_opt_infdim_noise} 
    \> j \in \mathbb{I}_1^{n_{\mathrm{con}}}, \\
    & \| f \|_{\Hkf}^2 \leq \Gamma_{f}^2. \label{eq:inf_opt_infdim_rkhs}
  \end{align}
\end{subequations}
By definition, an optimal solution of Problem~\eqref{eq:inf_opt_infdim} 
determines the optimal worst-case bound for the value of 
\mbox{$\ftr(x_{N+1})$}
in the direction~$h$, 
subject to the available information.
Consequently,
the tightest containment interval
along 
this axis 
is 
$-\overline{f}_{-h}(x_{N+1}) \leq h^\top \ftr(x_{N+1}) \leq \overline{f}_h(x_{N+1})$.

Our main result derives an optimal solution to Problem~\eqref{eq:inf_opt_infdim} 
using
familiar terms from GP regression,
using a particular 
measurement noise covariance.
To this end,
we
denote 
the multi-output 
GP
posterior mean and covariance 
by
\begin{align}
  \fmu(x_{N+1}) &\doteq K^f_{N+1,1:N} C \hat{K}_\sigma^{-1} y, 
  \label{eq:gp_definitions}
    \\
    \Sigma_\sigma(x_{N+1}) &\doteq 
    K^f_{N+1,N+1} - K^f_{N+1,1:N} C \hat{K}_\sigma^{-1} C^\top K^f_{1:N,N+1}, \notag
\end{align}
respectively.
The matrix
\mbox{$\hat{K}_\sigma \doteq C^\top K^f_{1:N,1:N} C + K^w_\sigma$}
denotes the Gram matrix,
where the matrix~$K^f_{1:N,1:N}$ of kernel evaluations on the training inputs is projected 
using the measurement matrix~$C^\top \doteq \mathrm{blkdiag}\left( 
    c_1^\top, \ldots, c_N^\top
    \right) \in \mathbb{R}^{N \times n_f N}$, 
defined according to the linear measurement model~\eqref{eq:data}.
While~$K^w_\sigma$ commonly denotes
the 
covariance 
of (possibly correlated) Gaussian measurement noise,
here
we define
$
K^w_\sigma \doteq \left( P^w_\sigma \right)^{-1} 
$,
with
$P^w_\sigma
\doteq  
  \sum_{j=1}^{n_{\mathrm{con}}} \sigma_j^{-2} P^w_j 
$
based on the ellipsoidal noise bounds~\eqref{eq:noise_constraints}
and a \emph{free}
vector of noise parameters
\mbox{$\sigma = [\sigma_j]_{j=1}^{n_{\mathrm{con}}} \in \Rge^{n_\mathrm{con}}$}.
For special case~1 of \cref{ass:bounded_noise}, 
note that 
\mbox{$K^w_\sigma = \mathrm{diag}(\sigma_1^2, \ldots, \sigma_N^2)$} 
corresponds to 
assuming independent, heteroscedastic noise;
for special case~2,
i.e.,
\mbox{$K^w_\sigma = \sigma_1^2 \left( P^w_1 \right)^{-1}$},
the scalar parameter~$\sigma_1$ can be interpreted as the output scale for an unknown noise-generating process~\cite[Sec.~3.1]{lahr_optimal_2025}.

We are now ready to state our main result.
\begin{theorem}
  \label{thm:optimal_bound}
  Let \cref{ass:bounded_noise,ass:rkhs_norm_f} hold and define
  \begin{align}
    \label{eq:optimal_bound_expression}
    \overline{f}_h^\sigma(x_{N+1}) &\doteq h^\top \fmu(x_{N+1}) + \beta_\sigma \sqrt{h^\top \Sigma_\sigma(x_{N+1}) h},
  \end{align}
  with
  \begin{align}
    \beta_\sigma &\doteq \sqrt{
        \Gamma_f^2 + \sum_{j=1}^{n_{\mathrm{con}}} \frac{\Gamma_{w,j}^2}{\sigma_j^2} - \| y \|_{\hat{K}_\sigma^{-1}}^2.
      }
  \end{align}
  Then, for any $h \in \mathbb{R}^{n_f} \setminus \{ 0 \}$ and any $x_{N+1} \in \mathbb{R}^{n_x}$,
  \begin{enumerate}
    \item[i)] $h^\top \ftr(x_{N+1}) \leq \overline{f}_h^\sigma(x_{N+1})$
      for all 
      $\sigma \in \Rge^{n_{\mathrm{con}}}$; 
    \item[ii)]
      the optimal 
      uncertainty bound~\eqref{eq:inf_opt_infdim} 
      is given by
      \begin{align}
        \label{eq:inf_opt_result}
        \overline{f}_h(x_{N+1}) = \inf_{
          \sigma \in \Rge^{n_{\mathrm{con}}}
        } \quad & 
        \overline{f}_h^\sigma(x_{N+1});
      \end{align}
    \item[iii)] 
      $\overline{f}_h^\sigma(x_{N+1})$ is invex: $\sigma^\star \in \Rge^{n_{\mathrm{con}}}$ is a global minimizer of Problem~\eqref{eq:inf_opt_result} if and only if $\left. \frac{\partial}{\partial \sigma}  \overline{f}_h^{\sigma} (x_{N+1}) \right|_{\sigma=\sigma^\star} = 0$.
  \end{enumerate}
\end{theorem}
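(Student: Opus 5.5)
The plan is to reduce Problem~\eqref{eq:inf_opt_infdim} to a finite-dimensional problem, bound it from above by aggregating the constraints with weights (i), and then show that this aggregation is lossless after optimizing the weights (ii), with invexity (iii) coming out of the same computation.

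\textbf{Finite-dimensional reduction.} By the representer theorem for vector-valued RKHSs, the optimizer can be restricted to the span of $k(\cdot,x_i)c_i$, $i\in\mathbb{I}_1^N$, and $k(\cdot,x_{N+1})$. Adding any component orthogonal to this span leaves the objective and the data constraint unchanged and only increases the norm. The problem thus becomes a quadratically constrained linear program in a coefficient vector and in $w$.

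\textbf{Upper bound (i).} Fix $\sigma\in\Rge^{n_{\mathrm{con}}}$. Multiply each noise constraint by $\sigma_j^{-2}$ and add it to the RKHS constraint, which gives the single aggregated constraint
\begin{align*}
\|f\|_{\Hk}^2+\|w\|_{P^w_\sigma}^2\le \Gamma_f^2+\textstyle\sum_j \Gamma_{w,j}^2/\sigma_j^2 .
\end{align*}
The supremum over this relaxed feasible set is at least $\overline{f}_h(x_{N+1})$, and it equals the supremum of a linear functional over an ellipsoid intersected with an affine set. To solve it, I would interpret $(f,w)$ as an element of the product RKHS with kernel $k\oplus K^w_\sigma$. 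The data constraint $y=C^\top f(x_{1:N})+w$ is then a set of linear evaluations whose Gram matrix is exactly $\hat K_\sigma$.

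The standard minimum-norm decomposition applies. The minimum-norm interpolant has squared norm $\|y\|^2_{\hat K_\sigma^{-1}}$ and attains the value $h^\top\fmu(x_{N+1})$. The orthogonal complement of the constraint directions contributes a linear functional whose squared norm is $h^\top\Sigma_\sigma h$. This is a Schur-complement computation in the product space, the usual GP posterior-variance identity. Cauchy--Schwarz on that complement, with remaining budget $\beta_\sigma^2$, yields exactly $\overline{f}_h^\sigma$. Since $\ftr$ together with the true $w$ is feasible (strict inequalities), we get $\beta_\sigma^2>0$, and (i) follows.

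\textbf{Tightness (ii) and invexity (iii).} This is where I expect the main difficulty. The relaxed value is the Lagrangian-type dual of the finite-dimensional QCQP, with multipliers $\lambda_0$ on the RKHS constraint and $\lambda_j$ on the noise constraints. Maximizing the linear objective gives the dual
\begin{align*}
\inf_{\lambda\ge0}\ \text{linear} + \tfrac{1}{4}(\cdot)^\top(\cdot)^{-1}(\cdot) + \textstyle\sum_j\lambda_j\Gamma_{w,j}^2 + \lambda_0\Gamma_f^2 .
\end{align*}
Minimizing over the scaling $\lambda_0$ in closed form returns the square-root expression, with $\sigma_j^{-2}=\lambda_j/\lambda_0$. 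The original problem is convex, and Slater's condition holds: take $f=\ftr$ and the true $w$, which satisfy all inequalities strictly. Strong duality therefore gives equality of the infimum with $\overline{f}_h$.

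The delicate points are these:
\begin{itemize}
\item[a)] multipliers $\lambda_j=0$ correspond to $\sigma_j\to\infty$, so the infimum over the open set $\Rge^{n_{\mathrm{con}}}$ is still correct, but I must argue it by continuity and a limiting argument;
\item[b)] $\lambda_0=0$ must be excluded, which I would do using $P^w\succ0$ and $h\neq0$ (or treat it via a limit);
\item[c)] for (iii), the convex dual function $g(\lambda)$ relates to $\overline{f}^\sigma_h$ through the smooth bijection $\sigma_j=(\lambda_0/\lambda_j)^{1/2}$ followed by the exact partial minimization over $\lambda_0$.
\end{itemize}
A stationary point of $\overline{f}_h^\sigma$ then corresponds to a stationary point of the convex function $\lambda\mapsto g(\lambda)$ in the interior, hence to a global minimizer. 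Conversely, an interior minimizer is stationary. This chain-rule argument with a nonsingular Jacobian establishes invexity.
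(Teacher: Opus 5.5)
Your proposal is correct and follows essentially the paper's proof. You relax to the single aggregated constraint weighted by $\sigma_j^{-2}$ and solve it in closed form; your product-RKHS minimum-norm/Cauchy--Schwarz computation is a coordinate-free version of the paper's completion of squares in feature space. You then obtain tightness from Slater and strong duality via $\sigma_j^2=\lambda_0/\lambda_j$ and elimination of $\lambda_0$, and invexity from the chain rule through the nonsingular Jacobian of this reparametrization.

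One caution on point b): $P^w\succ 0$ and $h\neq 0$ do not rule out $\lambda_0=0$. For example, if $x_{N+1}=x_i$, $h=c_i$, and the RKHS constraint is inactive at the optimum, the dual function is finite at $\lambda_0=0$ and is minimized only there, so the infimum in ii) is reached only in a limit with some $\sigma_j\to 0$. You should therefore use your fallback: by convexity of the dual, its infimum over the closed orthant equals that over the open one. This covers $\lambda_0=0$ and $\lambda_j=0$ simultaneously, as in the paper.
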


The proof of \cref{thm:optimal_bound}, reported in \cref{sec:app_optimal_proof}, mainly relies on strong duality of a finite-dimensional representation of Problem~\eqref{eq:inf_opt_infdim}. 
In particular,
the function
$\overline{f}_h^\sigma(x_{N+1})$, 
which corresponds to \mbox{$\beta_\sigma$-confidence} bounds of the multivariate GP posterior in the direction~$h$,
is shown to be
a reparametrization of the
\emph{dual function} of Problem~\eqref{eq:inf_opt_infdim},
establishing
statements i) and ii) by weak and strong duality, respectively.
For the reparametrization,
the optimal dual variable~$\lambda_0$ for the complexity constraint~\eqref{eq:inf_opt_infdim_rkhs} is determined in closed form and 
the noise parameters 
$[\sigma_j^2]_{j=1}^{n_{\mathrm{con}}} \doteq [\frac{\lambda_0}{\lambda_j}]_{j=1}^{n_{\mathrm{con}}}$ 
are 
inversely proportional to
the 
dual variables 
$[\lambda_j]_{j=1}^{n_{\mathrm{con}}}$ 
for each respective constraint~\eqref{eq:inf_opt_infdim_noise}.
The scaling factor~$\beta_\sigma$ can be interpreted as the total RKHS norm in a data-generating RKHS, 
subtracted by the RKHS norm of the corresponding minimum-norm interpolant, cf.~\cite[Eq.~(6)]{lahr_optimal_2025}.

Statement i) of \cref{thm:optimal_bound}
establishes 
a valid, 
potentially conservative uncertainty bound for all noise parameters~$\sigma \in \Rge^{n_{\mathrm{con}}}$.
Noting that 
it
equals
the 
support of a GP confidence interval
in 
direction~$h$ for a fixed vector of noise parameters, we can derive the following ellipsoidal uncertainty bound.
\begin{corollary}
  \label{thm:suboptimal_bound_multivariate}
  Let \cref{ass:bounded_noise,ass:rkhs_norm_f} hold. Then, $\forall \sigma \in \Rge^{n_{\mathrm{con}}}$
  \begin{align}
    \label{eq:suboptimal_bound_multivariate}
    \| \ftr(x_{N+1}) - \fmu(x_{N+1}) \|_{\Sigma_\sigma(x_{N+1})^{-1}} \leq \beta_\sigma.
  \end{align}
\end{corollary}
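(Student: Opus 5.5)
The corollary follows from statement~i) of \cref{thm:optimal_bound} by taking the supremum over all unit directions $h$. The key identity is the support function of an ellipsoid. For a positive definite matrix $\Sigma$ and any vector $v$,
\begin{align*}
\sup_{h \neq 0} \frac{h^\top v}{\sqrt{h^\top \Sigma h}} = \| v \|_{\Sigma^{-1}},
\end{align*}
which is a Cauchy--Schwarz argument in the coordinates $\Sigma^{1/2} h$. The supremum is attained at $h = \Sigma^{-1} v$.

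Fix $\sigma \in \Rge^{n_{\mathrm{con}}}$ and $x_{N+1}$, and write $v \doteq \ftr(x_{N+1}) - \fmu(x_{N+1})$ and $\Sigma \doteq \Sigma_\sigma(x_{N+1})$. Statement~i) of \cref{thm:optimal_bound} gives
\begin{align*}
h^\top v \leq \beta_\sigma \sqrt{h^\top \Sigma h} \quad \text{for all } h \neq 0.
\end{align*}
Note that $\beta_\sigma$ must then be real and nonnegative. Indeed, statement~i) applied to both $h$ and $-h$ yields an interval for $h^\top \ftr(x_{N+1})$, and this interval is nonempty only if $\beta_\sigma \sqrt{h^\top\Sigma h} \geq 0$. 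Alternatively, nonnegativity of the radicand follows directly because the true pair $(\ftr, w)$ is feasible for Problem~\eqref{eq:inf_opt_infdim}, and the minimum-norm interpolation argument behind the theorem bounds $\|y\|^2_{\hat K_\sigma^{-1}}$ by $\Gamma_f^2 + \sum_j \Gamma_{w,j}^2/\sigma_j^2$.

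If $\Sigma$ is positive definite, choose $h = \Sigma^{-1} v$ (assuming $v \neq 0$; the case $v = 0$ is trivial). Then
\begin{align*}
\|v\|^2_{\Sigma^{-1}} \leq \beta_\sigma \|v\|_{\Sigma^{-1}},
\end{align*}
and dividing by $\|v\|_{\Sigma^{-1}}$ gives \eqref{eq:suboptimal_bound_multivariate}.

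The only real obstacle is the degenerate case where $\Sigma_\sigma(x_{N+1})$ is singular. This can happen because $k$ is only positive semidefinite, for example with finite-dimensional features. In that case $\|\cdot\|_{\Sigma^{-1}}$ has to be read as the generalized (extended-valued) norm: it equals $v^\top \Sigma^{\dagger} v$ on $\mathrm{range}(\Sigma)$ and $+\infty$ otherwise. I would handle this as follows.
\begin{enumerate}
\item[(a)] For any $h \in \ker \Sigma$, statement~i) applied to $\pm h$ gives $h^\top v \leq 0$ and $-h^\top v \leq 0$, so $h^\top v = 0$. Hence $v \perp \ker\Sigma$, i.e., $v \in \mathrm{range}(\Sigma)$.
\item[(b)] Repeat the argument above with $h = \Sigma^{\dagger} v$ to get $v^\top\Sigma^\dagger v \leq \beta_\sigma^2$.
\end{enumerate}
With this interpretation stated, the corollary holds for every $\sigma \in \Rge^{n_{\mathrm{con}}}$.
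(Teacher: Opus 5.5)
Your proof is correct and takes the same approach as the paper, which substitutes $h = \Sigma_\sigma(x_{N+1})^{-1}(\ftr(x_{N+1}) - \fmu(x_{N+1}))$ into statement~i) of \cref{thm:optimal_bound} and rearranges. Your extra handling of a singular $\Sigma_\sigma(x_{N+1})$, which can occur for positive-semidefinite kernels when $k(x_{N+1},x_{N+1})$ is singular, via $v \perp \ker\Sigma$ and the pseudo-inverse is a valid refinement that the paper leaves implicit.
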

\begin{proof}
  Using
  $h = \Sigma_\sigma(x_{N+1})^{-1} (\ftr(x_{N+1}) - \fmu(x_{N+1}))$, the assertion 
  is obtained 
  by
  rearranging \cref{thm:optimal_bound}, point~i).
\end{proof}
An essential advantage of 
the 
duality-based,
\emph{unconstrained} bound formulation
in \cref{thm:optimal_bound}, i) and \cref{thm:suboptimal_bound_multivariate}
is their straightforward integration into downstream optimization tasks: the dual (noise) parameters~$\sigma$ can be simultaneously optimized alongside the primary objective, tightening the uncertainty bound as necessary for the downstream task while guaranteeing safe predictions at each iteration. 

Invexity
in statement iii) is derived from the unique mapping between the original dual variables and the noise parameters~$\sigma$, and the convexity of the original dual function.
It guarantees 
that any local minimizer in the open domain~$\Rge^{n_{\mathrm{con}}}$ is globally optimal, 
rendering the bound attractive for scalable gradient-based optimization. 
Note, however, that the invexity property does not guarantee attainment of the optimizer in the open domain~$\Rge^{n_{\mathrm{con}}}$: indeed, the optimal bound in~\cref{eq:inf_opt_result} might be attained only in the limit for some $\sigma_j \rightarrow 0$ or $\sigma_j \rightarrow \infty$.

\subsubsection*{Illustrative example}

\cref{fig:illustrative_example} illustrates \cref{thm:optimal_bound} for a simple example,
where the proposed uncertainty bound is computed for a random unknown function $\ftr \in \Hkf$ with the squared-exponential kernel $\kf(x, x') = \exp(-\| x - x' \|_2^2)$, an RKHS-norm bound $\Gamma_f^2 = 1$ and $N=2$ training points corrupted by point-wise bounded noise realizations $| w_j | \leq 0.2$ (equivalent to special case 1 of \cref{ass:bounded_noise} with $\Gamma_{w,j} = 0.2$).
The optimal value of the dual function coincides with the optimal uncertainty bound at each respective test point due to strong duality;
additionally,
weak duality ensures that it also provides a conservative bound for the whole domain.
The incurred conservatism 
depends on the value of the noise parameters~$\sigma$:
for $[\sigma]_{j=1}^{n_{\mathrm{con}}} \rightarrow \infty$, the 
uncertainty bound~\eqref{eq:optimal_bound_expression} converges to the prior bound, \mbox{$\overline{f}^{\sigma}_h(x) \rightarrow \Gamma_f \sqrt{h^\top k(x, x) h}$};
if any $\sigma_j \rightarrow 0$, 
it \emph{may} grow unbounded;
cf.~\cite[Sec.~3.3]{lahr_optimal_2025} for a detailed discussion.

\begin{figure}[h]
  \centering
  \includegraphics{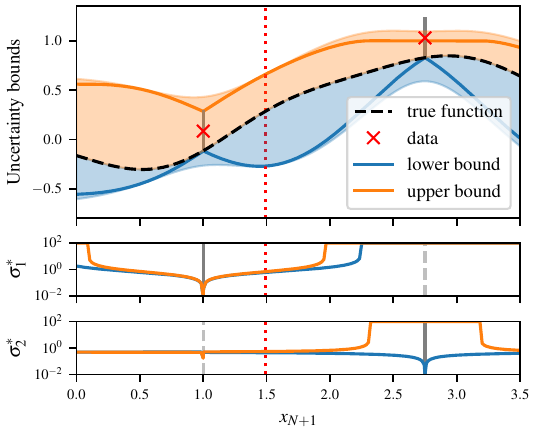}
  \caption{
  Illustrative example of proposed uncertainty bound.
  The top 
  plot shows
  the optimal uncertainty bounds (solid),
  as well as the corresponding dual functions (shaded), 
  evaluated for the optimal dual (noise) parameters~$\sigma^\star$ 
  at
  the 
  test point $x_{N+1} = 1.5$ (dotted red).
  The bottom two plots show the corresponding optimal value of the dual (noise) parameters $\sigma^\star$ for all test points.
  }
  \label{fig:illustrative_example}
\end{figure}

\section{Discussion}
\label{sec:discussion}

Next,
we compare the proposed bound with available kernel regression bounds. 
Since existing results are presented 
for scalar outputs, 
we restrict 
the discussion
to 
\mbox{$n_f = 1$}.

\subsubsection{Optimal deterministic bounds (optimization-based)}
\label{sec:discussion_optimal_scharnhorst}

The proposed bound generalizes the \emph{optimal} deterministic bounds developed by~\cite{scharnhorst_robust_2023} and \cite{lahr_optimal_2025} for point-wise bounded and energy-bounded noise realizations, 
which are both recovered as special cases~1 and~2 of \cref{ass:bounded_noise}, respectively.

For point-wise bounded noise,
assuming a uniform noise bound~$|w_i| \leq \bar{\Gamma}_w$ 
and a positive-definite kernel function~$\kf$,
\cite{scharnhorst_robust_2023} also provides a duality-based formulation of their bound, 
\begin{align}
  \label{eq:dual_scharnhorst}
    \overline{f}_h(x_{N+1}) = \inf_{\substack{\nu \in \mathbb{R}^{N}, \\ \lambda > 0}} \quad & 
    y^\top \nu + \bar{\Gamma}_{w} \| \nu \|_1 + \lambda \Gamma_f^2 \\
    & \hspace{-20ex} + \frac{1}{4 \lambda} \bigg( K^f_{N+1,N+1} + \| C \nu \|_{K^f_{1:N,1:N}}^2- 2 K^f_{N+1,1:N} C \nu \bigg). \notag
\end{align}
Compared with the proposed dual objective in \cref{thm:optimal_bound}, which is invex and smooth,
the dual objective~\eqref{eq:dual_scharnhorst} is convex and non-smooth.
To address non-smoothness, \cite[Alg.~1]{scharnhorst_robust_2023} proposes an alternating-optimization procedure, 
iterating
closed-form optimization of the scalar multiplier~$\lambda$ 
and 1-norm-constrained quadratic minimization of~$\nu \in \mathbb{R}^{N}$.
While 
this algorithm
is found to be effective for iterating towards the optimal uncertainty bound (see \cref{sec:application}), 
it remains difficult to integrate into downstream (optimization) tasks as it still involves inequality constraints.
In contrast, the proposed dual objective~\eqref{eq:optimal_bound_expression} 
is straightforwardly
integrated into optimization pipelines, optimizing~$\sigma$ as part of the downstream objective.

For energy-bounded noise, 
\cref{thm:optimal_bound} 
encompasses
\cite[Theorem~1]{lahr_optimal_2025},
which equivalently
expresses
the optimal uncertainty bound in terms of unconstrained optimization of the (scalar) noise parameter~$\sigma$.
This paper 
extends
these results
by additionally showing invexity of the dual function~\eqref{eq:optimal_bound_expression}. 
Moreover, the duality-based viewpoint of the upper bound
simplifies the proof, cf.~\cite[Appendix~C]{lahr_optimal_2025}, 
while 
explaining
the potential conservatism of existing suboptimal, closed-form bounds as discussed in the following.

\subsubsection{Suboptimal deterministic bounds (closed-form)}

Available closed-form, suboptimal bounds in the bounded-noise setting can be classified into two 
types. 

First, 
utilizing the Cauchy-Schwarz inequality,
the works \cite[Lemma~2]{hashimoto_learning-based_2022-1}, \cite[Lemma~2]{yang_kernel-based_2024}
obtain a bound of the form
\begin{align}
    \label{eq:suboptimal_bound_scalar}
  | \ftr(x_{N+1}) - \fmu[\sigma](x_{N+1}) | \leq \beta_{\sigma} \sqrt{\Sigma_{\sigma}(x_{N+1})}
\end{align}
for 
a \emph{fixed} value of 
$\sigma = \bar{\sigma} \mathds{1}_N$
and 
a \emph{uniform} noise bound $|w_i| \leq \bar{\Gamma}_w$ for all $i \in \mathbb{I}_1^N$.
In this case, the scaling factor $\beta_\sigma$ in \cref{eq:gp_definitions} simplifies to 
\mbox{$\beta_\sigma^2 = \Gamma_f^2 + N \bar{\Gamma}_w^2 \bar{\sigma}^{-2} - \| y \|_{\hat{K}_{ \sigma }^{-1}}^2$},
with $\bar{\sigma}^2 = \bar{\Gamma}_w^2$ and $\bar{\sigma}^2 = N \bar{\Gamma}_w^2$ 
for \cite{hashimoto_learning-based_2022-1} and \cite{yang_kernel-based_2024}, respectively. 
\cref{thm:suboptimal_bound_multivariate} 
recovers these bounds
for 
point-wise bounded 
noise (special case 1 of \cref{ass:bounded_noise}).
Their
conservatism
can 
thus
be explained by 
their equivalence with
the dual function~\eqref{eq:optimal_bound_expression}, evaluated for a fixed set of dual (noise) parameters.

Second, 
in \cite{maddalena_deterministic_2021,scharnhorst_robust_2023,reed_error_2025}, 
suboptimal, closed-form uncertainty bounds are obtained by
separating the 
prediction error
in an interpolation- and a noise-error term.
For example, 
the bound formulated in~\cite[Theorem~2]{reed_error_2025} reads as
\begin{align}
  \label{eq:bound_reed}
  \begin{aligned}
    | \ftr(x_{N+1}) - \fmu[\sigma](x_{N+1}) | \leq &\> \beta^f_{\max} \sqrt{\Sigma_{\sigma}(x_{N+1})} \\
    & + \bar{\Gamma}_w \| \hat{K}_\sigma^{-1} K^f_{1:N,N+1} \|_1,
  \end{aligned}
\end{align}
where 
$\beta^f_{\max} \doteq \sqrt{\Gamma_f^2 - \min_{| w_i | \leq \bar{\Gamma}_w} \| y - w \|_{\hat{K}_\sigma^{-1}}^2 }$.
This formulation 
generalizes
those in \cite[Theorem~1]{maddalena_deterministic_2021},~\cite[Proposition~3]{scharnhorst_robust_2023} 
in 
that it allows for an arbitrary noise parameter~$\bar{\sigma} > 0$, with $\sigma = \bar{\sigma} \mathds{1}_N$.
In contrast, 
the preceding works 
can only deal with~$\bar{\sigma} = 0$ in the interpolation term 
(which comes with the additional 
 restriction 
to positive-definite kernels and unique training input locations), 
and bound the difference to the mean prediction~$\fmu[\sigma]$ separately using the triangle inequality.
Despite the additional conservatism incurred by the separation of errors, 
one advantage of these bounds is that 
the uncertainty bound stays bounded as $\bar{\sigma} \rightarrow 0$. 
Still, the evaluation of~$\beta^f_{\max}$ requires solving a quadratic program comparable in complexity to~\eqref{eq:dual_scharnhorst} whenever the data set is modified.

\subsubsection{Finite-dimensional hypothesis spaces}
Since our theoretical analysis applies to general  positive-semidefinite kernels (cf. \cref{ass:rkhs_norm_f}), it applies to the classical special case of linear regression with a finite-dimensional hypothesis space, i.e., $k(x,x')= \Phi(x) \Phi(x')^\top$ for some features $\Phi:\mathbb{R}^{n_x} \rightarrow \mathbb{R}^{n_f \times r}$. 
In terms of application to control, 
this can, e.g., be used for estimating (non-falsified) nonlinear system dynamics from noisy measurements, where the noise at each time step is bounded by a quadratic constraint~\eqref{eq:noise_constraints}, cf.~\cite[Sec.~5]{martin_guarantees_2023}. 
In the special case of a single quadratic 
noise constraint,
our result recovers classical analytical regression bounds under energy-bounded noise~\cite{fogel_system_1979}; see also~\cite[Sec.~4.1]{lahr_optimal_2025} for a discussion.

\subsubsection{Probabilistic bounds} 
The proposed bound shares the same structure with many probabilistic results, where~\eqref{eq:suboptimal_bound_scalar} holds with some user-defined confidence $1-\delta$, and measurement noise realizations are assumed to be \emph{conditionally zero-mean $R$-sub-Gaussian}~\cite{fiedler_practical_2021}. This different setting is reflected in the expressions for the scaling factor $\beta_\sigma$, 
which commonly feature both the variance proxy $R$ and the parameter $\delta$ enforcing the desired confidence level, cf., e.g.,~\cite[Eq.~(7)]{fiedler_safety_2024-1}. 
The
main 
limitation
of these bounds 
is
their 
strong assumption on the measurement noise, 
preventing them
from rigorously handling biased, correlated and adversarial disturbances---all scenarios encompassed by the proposed deterministic, distribution-free bounds. 
Moreover, they can handle only increasing training data-set sizes: as such, differently from the proposed approach, they cannot be applied to subset-of-data strategies to improve scalability.
Further discussions can be found in~\cite[Section~VI,~Example~1]{scharnhorst_robust_2023} and \cite[Section~4.3]{lahr_optimal_2025}.

\phantom{This fixes erroneous blue text}

\section{Application example}
\label{sec:application}

\begin{figure}
  \centering
  \includegraphics[trim=0 30 0 50, clip]{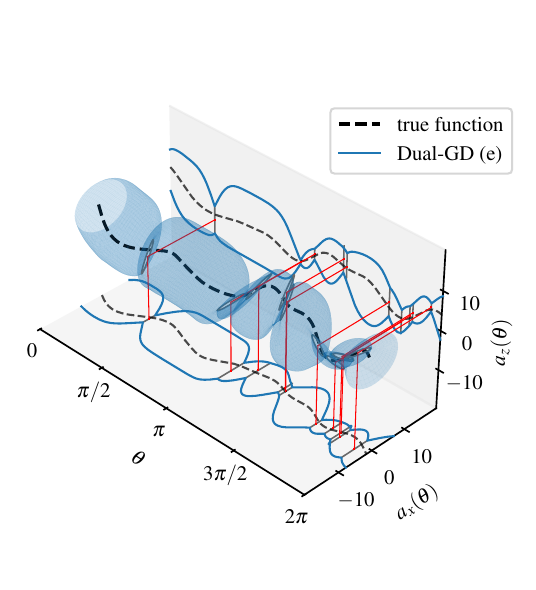}
  \caption{
    Proposed multivariate uncertainty bound for quadrotor example with $n_{\mathrm{data}} = 10$ training points.
    The latent function (dashed black) is tightly bounded by the optimal uncertainty bounds 
    evaluated for both output dimensions
    (\cref{thm:optimal_bound});
    the multivariate ellipsoidal tube is generated using $\sigma$-values corresponding to the optimal upper bound in $x$-direction (\cref{thm:suboptimal_bound_multivariate}). 
    The (projected) data points and (projected) uncertainty bounds are shown in red and gray, respectively. 
  }
  \label{fig:quadrotor}
\end{figure}

We compare the proposed bound with existing bounds using an 
example inspired by dynamics learning in control:
estimation of the residual acceleration $\ftr(x) = (a_x(\theta), a_z(\theta)) \in \mathbb{R}^{2}$ of a two-dimensional quadrotor~$(n_f = 2)$ in body frame as a function of its tilt angle~$\theta = x \in [0, 2\pi]$ $(n_x = 1)$. 
The 
$n_{\mathrm{data}} \in \{100, 1000\}$
noisy measurements of its accelerations $a_x, a_z$ 
are 
corrupted by an unknown wind force.
The wind force
is 
bounded by
a 2D ellipsoid
in the global coordinate frame,
modeling a direction-dependent
bound on the maximum wind magnitude, with stronger winds parallel to the ground\footnotemark.
This results
in tilt-angle-dependent, ellipsoidal noise bounds $P_j^w(\theta)$, $j \in \mathbb{I}_1^{n_{\mathrm{data}}}$,
each of which bounds $n_f$~measurements jointly in the $x$- and $z$-coordinate,
for a total of $N = n_f n_{\mathrm{data}}$ measurements.
The components of each measurement are extracted by defining 
$c_i$ accordingly as unit vectors.
We compute the optimal 
uncertainty bounds 
in 
$x$- and $z$-direction 
by setting $h$ as the corresponding 
unit vector,
leading to a box-shaped uncertainty set.
\cref{fig:quadrotor} depicts this 
setup
for $n_{\mathrm{data}} = 10$ measurements;
\footnotetext{%
In general, worst-case uncertainty bounds according to \cref{ass:bounded_noise} 
can be determined using domain knowledge, 
or may be estimated
from data~\cite{lauricella_set_2020}. 
}%
the full source code and implementation details are available
online\footnote{%
  Code available at \href{https://gitlab.ethz.ch/ics/bounded-rkhs-bounds-duality}{https://gitlab.ethz.ch/ics/bounded-rkhs-bounds-duality} or at \doi{10.3929/ethz-c-000801880}.%
}
and 
\iftoggle{arxiv}{in \cref{sec:app_implementation}%
}{in~\cite[Appendix~D]{lahr_optimal_2026}%
}, 
respectively.

\newcolumntype{R}{>{\raggedleft\arraybackslash}p{3.5ex}}
\newcolumntype{S}{>{\raggedleft\arraybackslash}p{6ex}}
\newcolumntype{T}{>{\raggedleft\arraybackslash}p{5ex}}
\newcolumntype{U}{>{\raggedleft\arraybackslash}p{6ex}}
\begin{table}
  \centering
  \caption{Experimental results for the quadrotor example
  }
  \begin{tabularx}{\columnwidth}{l|l|RRR|UUU}
  \toprule
  & \multirow{2}{*}{Method} & \multicolumn{3}{c|}{Suboptimality} & \multicolumn{3}{c}{Time [s]} \\
  & & min. & avg. & max. & min. & avg. & max. \\
  \midrule
  \multirow{6}{*}{\rotatebox[origin=c]{90}{$n_{\mathrm{data}} = 100$}} 
  & CVX-full (e) &   0.00 &  0.00 &  0.00 &   0.55 &  0.66 &  3.16 \\
  & CVX-full (p) &   0.01 &  0.22 &  0.70 &   0.13 &  0.19 &  0.27 \\
  & \cite{scharnhorst_robust_2023} (p) &   0.01 &  0.22 &  0.70 &   0.00 &  0.02 &  0.07 \\
  & \cite{reed_error_2025} (p) &   0.31 &  0.70 &  2.09 &   0.01 &  0.01 &  0.03 \\
  & Dual-GD (e) &   0.00 &  0.00 &  0.03 &   0.40 &  1.46 &  1.84 \\
  & Dual-GD (p) &   0.01 &  0.23 &  0.70 &   0.21 &  0.76 &  1.55 \\
  \midrule
  \multirow{6}{*}{\rotatebox[origin=c]{90}{$n_{\mathrm{data}} = 1000$}}  
  & CVX-full (e) &   0.00 &  0.00 &  0.00 & 289.66 &302.84 &326.85 \\
  & CVX-full (p) &   0.00 &  0.19 &  0.62 &  32.88 & 34.19 & 49.30 \\
  & \cite{scharnhorst_robust_2023} (p) &   0.00 &  0.19 &  0.62 &   0.92 &  2.68 & 26.83 \\
  & \cite{reed_error_2025} (p) &   0.46 &  0.71 &  1.30 &   0.04 &  0.05 &  0.11 \\
  & Dual-GD (e) &   0.03 &  0.15 &  0.39 &   3.45 &  5.10 & 16.70 \\
  & Dual-GD (p) &   0.18 &  0.37 &  0.79 &   0.76 &  1.07 &  2.27 \\
  \bottomrule
  \end{tabularx}
  \label{tab:quadrotor}
\end{table}

The following methods are compared in \cref{tab:quadrotor}: ``CVX-full'' solves the finite-dimensional version of Problem~\eqref{eq:inf_opt_infdim}, see \cref{sec:app_auxiliary}, using \texttt{CVXPY}~\cite{diamond_cvxpy_2016};
``\cite{scharnhorst_robust_2023}'' implements the iterative algorithm discussed in~\cref{sec:discussion_optimal_scharnhorst} in \texttt{CVXPY}~\cite{diamond_cvxpy_2016};
``\cite{reed_error_2025}'' computes the closed-form bound in~\cref{eq:bound_reed} with $\bar{\sigma} = \bar{\Gamma}_w$ set to the uniform noise error bound%
\footnote{Different tested choices $\bar{\sigma} \in \{10^{-3}, \bar{\Gamma}_w, 10^{3}\}$ lead to similar results, with $\bar{\sigma} = \bar{\Gamma}_w$ achieving the lowest suboptimality.};%
``Dual-GD'' optimizes Problem~\eqref{eq:inf_opt_result} with \texttt{GPyTorch}~\cite{gardner_gpytorch_2018} using the accelerated gradient descent method \texttt{Adam} with a fixed learning rate of 0.1.
We compare the conservatism resulting from 
the ellipsoidal ``(e)'' or
the corresponding smallest uniform point-wise ``(p)'' noise bound~$\bar{\Gamma}_w$ imposed by~\cite{reed_error_2025,scharnhorst_robust_2023}. 
This 
variant 
results in
a reduced number of \mbox{$N = n_{\mathrm{data}}$} data points since only measurements for the respective coordinate are utilized.
\cref{tab:quadrotor} shows the suboptimality, 
indicating the 
difference to the optimal 
bound ``CVX-full (e)'',
normalized by the difference of the prior uncertainty bound,
and the computation time (minimum, mean, and maximum).
The following stopping times are selected: for ``CVX-full'', until convergence; for \cite[Alg.~1]{scharnhorst_robust_2023}, until $10^{-2}$ suboptimality with respect to ``CVX-full (p)''; for ``Dual-GD'', after 100 gradient steps.

The results indicate that~\cite[Alg.~1]{scharnhorst_robust_2023} converges reliably and quickly to the optimal uncertainty bound assuming pointwise-bounded noise realizations, explaining the remaining difference to the optimal solution utilizing the ellipsoidal noise bound. 
The closed-form bound in \cite[Theorem~2]{reed_error_2025} naturally achieves the lowest computational time;
however, it also shows the highest 
conservatism.
For Dual-GD, 
reliable convergence to the optimal solution is observed for $n_{\mathrm{data}} = 100$. 
For $n_{\mathrm{data}} = 1000$,
convergence
is challenged by the high density of training points, 
which leads to more than 95\% of the optimal values $\sigma_j^\star \rightarrow \infty$ at the boundary of the domain for both noise assumptions. 
Nevertheless, 
it
effectively reduces the size of the uncertainty bound in the limited number of iterations, 
leading to competitive uncertainty estimates.

\section{Conclusions}

We have derived a tight, deterministic uncertainty bound for kernel-based worst-case uncertainty quantification.
The duality-based formulation 
extends 
kernel-based uncertainty bounds to 
more complex noise descriptions, 
and unifies
a range of existing results. 
Enabled by the unconstrained problem formulation, 
the 
numerical results 
highlight the 
effectiveness of the bound in 
reducing the size of the uncertainty envelope, %
motivating
its 
application
to downstream tasks such as Bayesian optimization or safe learning-based control.

\bibliographystyle{IEEEtran}
\bibliography{bibliography_amon_do_not_edit.bib}

\appendix

\subsection{Auxiliary results for \cref{thm:optimal_bound}}
\label{sec:app_auxiliary}

\begin{lemma}\emph{(cf.~\cite[Lemma~A.2]{lahr_optimal_2025})}
  \label{thm:findim}
  The optimal solution of~\eqref{eq:inf_opt_infdim} equals that of the finite-dimensional convex program%
  \begin{subequations}
    \label{eq:inf_opt_findim}
    \begin{align}
      \overline{f}_h(x_{N+1}) = \sup_{\substack{\theta \in\mathbb{R}^{r}}} \quad & h^\top \Phi_{N+1} \theta                                                                                    \\
      \mathrm{s.t.} \quad  
      & \| \theta \|_2^2 \leq \Gamma_f^2,                                                                        \\
      & \| y - C^\top \Phi_{1:N} \theta \|_{P^w_j}^2 \leq \Gamma_{w,j}^2,
    \end{align}
  \end{subequations}
  for $j \in \mathbb{I}_{1}^{n_{\mathrm{con}}}$.
  The feature matrix
  $\Phi_{1:N+1} \doteq [\Phi_i]_{i=1}^{N+1} \in \mathbb{R}^{n_f (N+1) \times r}$,
  with full column rank $r \leq n_f (N+1)$,
  composed of feature evaluations
  $\Phi_{i} \in\mathbb{R}^{n_f \times r}$
  at input locations~$x_i$,
  is given by 
  the Gram matrix factorization
  $K^f_{1:N+1,1:N+1} \doteq [k(x_i, x_j)]_{i,j=1}^{N+1} = \Phi_{1:N+1} \Phi_{1:N+1}^\top \in \mathbb{R}^{n_f (N+1) \times n_f (N+1)} $.
\end{lemma}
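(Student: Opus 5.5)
The plan is a representer-type argument: show that only the component of $f$ in the finite span of kernel sections at $x_1,\dots,x_{N+1}$ matters, and then coordinatize that span through the Gram factorization. Let $V \doteq \mathrm{span}\{k(\cdot,x_i)v : i\in\mathbb{I}_1^{N+1},\, v\in\mathbb{R}^{n_f}\}\subseteq\Hkf$. Any $f\in\Hkf$ splits as $f=f_V+f_\perp$ with $f_\perp\perp V$. By the reproducing property $v^\top f(x_i)=\langle f, k(\cdot,x_i)v\rangle_{\Hkf}$, so $f_\perp(x_i)=0$ for all $i\in\mathbb{I}_1^{N+1}$.

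\emph{Step 1 (reduction to $V$).} The objective $h^\top f(x_{N+1})$ and the data constraints~\eqref{eq:inf_opt_infdim_data} depend only on $f_V$. The constraint $\|f\|_{\Hkf}^2=\|f_V\|^2+\|f_\perp\|^2\le\Gamma_f^2$ is only relaxed by setting $f_\perp=0$. Hence the supremum in~\eqref{eq:inf_opt_infdim} may be restricted to $f\in V$ without changing its value.

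\emph{Step 2 (isometric parametrization).} Write $f=\sum_{i} k(\cdot,x_i)\alpha_i=k(\cdot,x_{1:N+1})\alpha$. Then
\[
[f(x_i)]_{i=1}^{N+1}=K^f_{1:N+1,1:N+1}\alpha, \qquad \|f\|^2_{\Hkf}=\alpha^\top K^f_{1:N+1,1:N+1}\alpha .
\]
Using $K^f_{1:N+1,1:N+1}=\Phi_{1:N+1}\Phi_{1:N+1}^\top$ with $\Phi_{1:N+1}$ of full column rank $r$, set $\theta\doteq\Phi_{1:N+1}^\top\alpha$. This gives $f(x_i)=\Phi_i\theta$ and $\|f\|^2_{\Hkf}=\|\theta\|_2^2$. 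Full column rank makes $\Phi_{1:N+1}^\top$ surjective onto $\mathbb{R}^r$, so every $\theta\in\mathbb{R}^r$ arises from some $\alpha$, i.e. some $f\in V$. The map $\alpha\mapsto\theta$ may be non-injective when $K^f$ is singular, but this is harmless: distinct $\alpha$ with the same $\theta$ give the same function values and the same norm, indeed the same element of $\Hkf$. Such a factorization exists, e.g.\ from an eigendecomposition of the PSD Gram matrix, keeping only the positive eigenvalues.

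\emph{Step 3 (eliminating $w$).} The equality constraints~\eqref{eq:inf_opt_infdim_data} fix $w=y-C^\top\Phi_{1:N}\theta$. Substituting into~\eqref{eq:inf_opt_infdim_noise} yields exactly the constraints of~\eqref{eq:inf_opt_findim}, and the objective becomes $h^\top\Phi_{N+1}\theta$. Convexity of~\eqref{eq:inf_opt_findim} is immediate: the objective is linear and each constraint is a convex quadratic, since $P^w_j\succeq0$.

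\emph{Main obstacle.} The only delicate point is the orthogonal decomposition in Step 1 together with the surjectivity and well-definedness of the map to $\theta$ in Step 2 when $k$ is merely positive semidefinite. I would handle both by working with elements of $V$ rather than with coefficient vectors $\alpha$: the values $f(x_i)$ and the norm $\|f\|_{\Hkf}$ are determined by $f$ itself, hence by $\theta$, so no injectivity of $\alpha\mapsto\theta$ is needed.
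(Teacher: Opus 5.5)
Your proposal is correct and follows the paper's route. The paper defers the proof to \cite[Lemma~A.2]{lahr_optimal_2025}, and the argument it reuses in the proof of \cref{thm:strong_duality} is the same as yours: a representer-theorem reduction to the span of kernel sections at $x_1,\dots,x_{N+1}$, then an isometric parametrization by $\theta\in\mathbb{R}^r$ from $K^f_{1:N+1,1:N+1}=\Phi_{1:N+1}\Phi_{1:N+1}^\top$, then elimination of $w$ through the data equalities. Your orthogonal-projection step is the standard proof of that representer step, and your treatment of the possibly non-injective map $\alpha\mapsto\theta$ for positive-semidefinite kernels is sound.
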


\begin{lemma}
  \label{thm:strong_duality}
  Slater's condition is satisfied for Problem~\eqref{eq:inf_opt_findim}. 
\end{lemma}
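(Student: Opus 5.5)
To verify Slater's condition for Problem~\eqref{eq:inf_opt_findim}, I will exhibit one point $\theta_0 \in \mathbb{R}^r$ that satisfies all inequality constraints strictly. Since every constraint is convex quadratic, such a point suffices. The natural candidate is a feature representation of the true function. By \cref{ass:rkhs_norm_f} we have $\|\ftr\|_{\Hk} < \Gamma_f$, and by \cref{ass:bounded_noise} the true noise satisfies $w^\top P^w_j w < \Gamma_{w,j}^2$ strictly. This strictness is why both assumptions were stated with strict inequalities.

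First, I recall from the construction behind \cref{thm:findim} (cf.~\cite[Lemma~A.2]{lahr_optimal_2025}) how $\theta$ arises from an RKHS element. Decompose $\ftr = f_\parallel + f_\perp$, where $f_\parallel$ lies in the finite-dimensional subspace spanned by $\{k(\cdot,x_i)v : i \in \mathbb{I}_1^{N+1},\, v\in\mathbb{R}^{n_f}\}$ and $f_\perp$ is orthogonal to it. By the reproducing property, $f_\perp(x_i)=0$ for all $i\in\mathbb{I}_1^{N+1}$, so $f_\parallel$ has the same values as $\ftr$ at all $N+1$ points. Moreover, $\|f_\parallel\|_{\Hk} \le \|\ftr\|_{\Hk} < \Gamma_f$.

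Next, I identify $f_\parallel$ with a vector $\theta_0\in\mathbb{R}^r$. Write $f_\parallel = \sum_i k(\cdot,x_i)\alpha_i$, so that $\|f_\parallel\|^2_{\Hk} = \alpha^\top \Phi_{1:N+1}\Phi_{1:N+1}^\top \alpha$ and $[f_\parallel(x_i)]_{i=1}^{N+1} = \Phi_{1:N+1}\Phi_{1:N+1}^\top\alpha$. Setting $\theta_0 \doteq \Phi_{1:N+1}^\top \alpha$ then gives $\|\theta_0\|_2 = \|f_\parallel\|_{\Hk} < \Gamma_f$ and $\Phi_i\theta_0 = \ftr(x_i)$ for every $i$.

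It remains to check the noise constraints. From the data model~\eqref{eq:data}, $y - C^\top\Phi_{1:N}\theta_0 = [y_i - c_i^\top \ftr(x_i)]_{i=1}^N = w$. Hence $\|y - C^\top\Phi_{1:N}\theta_0\|_{P^w_j}^2 = w^\top P^w_j w < \Gamma_{w,j}^2$ for every $j$, and $\theta_0$ is strictly feasible. This establishes Slater's condition.

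The only delicate step is the identification in the second step: one must verify that the norm equality and the evaluation identities hold with the specific full-column-rank factor $\Phi_{1:N+1}$. This is routine linear algebra given the Gram factorization, and it is essentially the content of \cref{thm:findim}, which I take as given.
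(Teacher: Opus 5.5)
Your proof is correct and follows the paper's argument: your projection $f_\parallel$ is exactly the minimum-norm interpolant $\ftrint$ used in the paper, and its feature representation $\theta_0$ is strictly feasible because of the strict inequalities in \cref{ass:rkhs_norm_f,ass:bounded_noise}. Your version is slightly more complete than the paper's. The paper verifies only the RKHS-norm constraint explicitly, whereas you also check the identity $y - C^\top\Phi_{1:N}\theta_0 = w$ that makes the noise constraints hold strictly.
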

\begin{proof}
  Let
  $\ftrint$
  be the minimum-norm interpolant of $\ftr$ at the training- and test-inputs,
  i.e., the unique minimizer of
  $\min_{f \in \Hkf} \> \left\{ \| f \|_{\Hkf}^2 \> \middle | \> f(x_i) = \ftr(x_i), \> i \in \mathbb{I}_1^{N+1} \right\}$.
  By the representer theorem~\cite[Ch.~4.2]{scholkopf_learning_2002}, 
  $\exists \, \thetatr$ such that
  $\ftrint(x_{N+1}) = \Phi_{N+1} \thetatr$. Hence, 
  $\| \thetatr \|_2 = \| \ftrint \|_{\Hkf} \leq \| \ftr \|_{\Hkf} < \Gamma_f$, where the final inequality follows from \cref{ass:rkhs_norm_f}, establishing $\thetatr$ as a strictly feasible point of 
  Problem~\eqref{eq:inf_opt_findim}. 
  Thus, Slater's condition~\cite[Chap.~5.2.3]{boyd_convex_2004} is satisfied. 
\end{proof}%

\subsection{Proof of \cref{thm:optimal_bound}}
\label{sec:app_optimal_proof}

\subsubsection{Proof of statements i) and ii)}
\noindent
The optimal solution to the infinite-dimensional problem~\eqref{eq:inf_opt_infdim} coincides with the optimal solution of the finite-dimensional, convex program~\eqref{eq:inf_opt_findim} (cf.~\cref{thm:findim}).
The Lagrangian of~\eqref{eq:inf_opt_findim} is
\begin{align}
\label{eq:lagrangian_optimal}
\begin{aligned}
    \mathcal{L}(\theta, \lambda_{0:n_{\mathrm{con}}}) =& \> h^\top \Phi_{N+1} \theta - \lambda_0 \left( \| \theta \|_2^2 - \Gamma_f^2 \right) \\
    & \hspace{-4ex} - \sum_{j=1}^{n_{\mathrm{con}}} \lambda_j 
    \left(
    \| y - C^\top \Phi_{1:N} \theta \|_{P^w_j}^2 - \Gamma_{w,j}^2
    \right),
\end{aligned}
\end{align}
and
$d(\lambda_{0:n_{\mathrm{con}}}) = \sup_{\theta} \quad \mathcal{L}(\theta, \lambda_{0:n_{\mathrm{con}}})$,
its 
dual function.
Slater's condition
(\cref{thm:strong_duality})
implies 
that 
\begin{align}
\overline{f}_h(x_{N+1}) &= \inf_{\lambda_{0:n_{\mathrm{con}}} \in \Rgeq^{n_{\mathrm{con}}+1}} \quad d(\lambda_{0:n_{\mathrm{con}}}) \label{eq:dual_optimal_geq} \\
&= \inf_{\lambda_{0:n_{\mathrm{con}}} \in \Rge^{n_{\mathrm{con}}+1}} \quad d(\lambda_{0:n_{\mathrm{con}}}),
\label{eq:dual_optimal_ge}
\end{align}
where the first equality follows from strong duality and the second,
from $d(\lambda_{0:n_{\mathrm{con}}})$ being a proper, lower semi-continuous (closed), convex function~\cite[Corollary~7.5.1]{rockafellar_convex_1970}.
Hence, 
in the following 
we can restrict our analysis to $\lambda_j > 0$, $j \in \mathbb{I}_0^{n_\mathrm{con}}$.
Let
$d_0(\lambda_{1:n_{\mathrm{con}}}) = \inf_{\lambda_0 > 0} \> \sup_{\theta} \>  \mathcal{L}(\theta, \lambda_{0:n_{\mathrm{con}}})$
be the
``partially-minimized'' dual function.
We can write Problem~\eqref{eq:dual_optimal_ge} as
$\overline{f}_h(x_{N+1}) = \inf_{\lambda_{1:n_{\mathrm{con}}} \in \Rge^{n_{\mathrm{con}}}} \> 
d_0(\lambda_{1:n_{\mathrm{con}}})$
and
define
\begin{align}
    \mathcal{L}^\sigma(\theta, \lambda_0, \sigma) =& 
    \begin{aligned}[t]
    \> & h^\top \Phi_{N+1} \theta - \lambda_0 \left( \| \theta \|_2^2 - \Gamma_f^2 \right) \\
    & \hspace{-4ex} - \lambda_0 \left( 
    \| y - C^\top \Phi_{1:N} \theta \|_{P^w_\sigma}^2 - \sum_{j=1}^{n_{\mathrm{con}}} \frac{\Gamma_{w,j}^2}{\sigma_j^2}
    \right),
    \end{aligned}
    \label{eq:lagrangian_relaxed}
\end{align}
reparametrizing
\cref{eq:lagrangian_optimal},
$\mathcal{L}^\sigma(\theta, \lambda_0, \sigma) = \mathcal{L}(\theta, \lambda_{0:n_{\mathrm{con}}})$, 
for 
the dual (noise) parameters
$\sigma \doteq [\sigma_j^2]_{j=1}^{n_{\mathrm{con}}} = [\frac{\lambda_0}{\lambda_j}]_{j=1}^{n_{\mathrm{con}}} \in \Rge^{n_\mathrm{con}}$.
We analogously 
reparametrize
$
d_0^\sigma(\sigma) = \inf_{\lambda_0 > 0} \> \sup_{\theta} \>  \mathcal{L}^\sigma(\theta, \lambda_{0}, \sigma) 
$
and
recover the original optimal solution~\eqref{eq:dual_optimal_geq} as
\begin{align}
\label{eq:optimal_bound_using_d0_sig}
\overline{f}_h(x_{N+1}) = \inf_{\sigma \in \Rge^{n_\mathrm{con}}} \quad & d_0^\sigma(\sigma).
\end{align}
Finally, 
we derive
$d_0^\sigma(\sigma)$ 
in closed form
by noting that it is the 
dual solution 
for the 
primal problem
\begin{subequations}
\label{eq:inf_relax_findim}
\begin{align}
    \overline{f}_h^\sigma(x_{N+1}) = \sup_{\substack{\theta \in\mathbb{R}^{r}}} \quad 
    & 
    h^\top \Phi_{N+1} \theta                                                                                    \\
    \mathrm{s.t.} \quad 
    &
    \begin{aligned}[t]
    & \| \theta \|_2^2 + 
    \| y - C^\top \Phi_{1:N} \theta \|_{P^w_\sigma}^2
    \\
    & \leq \Gamma_f^2 + \sum_{j=1}^{n_{\mathrm{con}}} \frac{\Gamma_{w,j}^2}{\sigma_j^2}.\label{eq:inf_relax_findim_constr}
    \end{aligned}
\end{align}
\end{subequations}
The analytical solution to this convex program with linear cost and a single quadratic constraint is (see \cref{sec:app_relaxed_solution}) 
\begin{align}
\label{eq:relaxed_bound}
\overline{f}_h^\sigma(x_{N+1})
& = h^\top \fmu(x_{N+1}) + \beta_\sigma \sqrt{h^\top \Sigma_\sigma(x_{N+1}) h}.
\end{align}
For Problem~\eqref{eq:inf_relax_findim}, 
strong duality is established using the same strictly feasible point as in \cref{thm:strong_duality},
since every (strictly) feasible solution to~\eqref{eq:inf_opt_findim} is a (strictly) feasible solution to~\eqref{eq:inf_relax_findim}.
With $\overline{f}_h^\sigma(x_{N+1}) = d_0^\sigma(\sigma)$, 
inserting \cref{eq:relaxed_bound} in \cref{eq:optimal_bound_using_d0_sig} gives
statement i):
$h^\top \ftr(x_{N+1}) \leq \overline{f}_h(x_{N+1}) \leq \overline{f}_h^\sigma(x_{N+1})$;
statement ii) follows directly from~\cref{eq:optimal_bound_using_d0_sig}.
{\hfill $\qedsymbol$\par}

\subsubsection{Proof of statement iii)}
\label{sec:app_invexity_proof}

Consider the 
coordinate transformation 
$\psi(\lambda_0, \sigma) \doteq \lambda_0 \begin{bmatrix}
    1 & \sigma_1^{-2} & \ldots & \sigma_{n_{\mathrm{con}}}^{-2}
\end{bmatrix} = \lambda_{0:n_{\mathrm{con}}}$.
Noting that 
$d^\sigma_0(\sigma) = 
\inf_{\lambda_0 > 0} \> d^\sigma(\lambda_0, \sigma)
$
with
$d^\sigma(\lambda_0, \sigma) \doteq \sup_{\theta} \>  \mathcal{L}^\sigma(\theta, \lambda_{0}, \sigma)
= h^\top \fmu + \frac{1}{2 \lambda_0} \| h \|_{\Sigma_\sigma(x_{N+1})}^2 + \frac{\lambda_0}{2} \big( \Gamma_f^2 + \sum_{j=1}^{n_{\mathrm{con}}} \sigma_j^{-2} \Gamma_{w,j}^2 \big)$,
its 
unique
minimizer
is
$\lambda_0^\star(\sigma) = \| h \|_{\Sigma_\sigma(x_{N+1})} / 
(
    \Gamma_f^2 + \sum_{j=1}^{n_{\mathrm{con}}} \sigma_j^{-2} \Gamma_{w,j}^2
)^{-1/2}
$.
For all $\sigma \in \Rge^{n_{\mathrm{con}}}$,
since $\Sigma_\sigma(x_{N+1})$ is positive definite and $h \neq 0$,
it holds that $\lambda_0^\star(\sigma) > 0$.
Due to convexity of $d(\lambda_{0:n_{\mathrm{con}}})$ 
and non-singularity of 
the
Jacobian 
$\frac{\partial \psi}{\partial (\lambda_0, \sigma)}$ for all $(\lambda_0, \sigma) \in \Rge^{n_{\mathrm{con}}+1}$,
it holds that
\begin{align*}
    0 &= \left. \frac{\partial d_0^\sigma}{\partial \sigma} \right|_{\sigma^\star} 
    = 
    \left. \frac{\mathrm{d} d^\sigma(\lambda_0^\star(\sigma), \sigma)}{\mathrm{d} \sigma}\right|_{\sigma^\star} 
    = 
    \left. \frac{\mathrm{d} d(\psi(\lambda_0^\star(\sigma), \sigma))}{\mathrm{d} \sigma}\right|_{\sigma^\star} 
    \\
    &= 
    \left. \frac{\partial d(\lambda_{0:n_{\mathrm{con}}})}{\partial \lambda_{0:n_{\mathrm{con}}}} \right|_{\psi(\lambda_0^\star(\sigma^\star), \sigma^\star)} 
    \left. \frac{\partial \psi(\lambda_0, \sigma)}{\partial(\lambda_0, \sigma)} \right|_{\lambda_0^\star(\sigma^\star), \sigma^\star}
\end{align*}
if and only if 
$\left. \frac{\partial d(\lambda_{0:n_{\mathrm{con}}})}{\partial \lambda_{0:n_{\mathrm{con}}}} \right|_{\psi(\lambda_0^\star, \sigma^\star)} = 0$.
This means that
$\sigma^\star \in \Rge^{n_{\mathrm{con}}}$
locally minimizes
Problem~\eqref{eq:optimal_bound_using_d0_sig}
if and only if
$d(\psi(\lambda_0^\star, \sigma^\star)) = d_0^\sigma(\sigma^\star)$
is a global minimum
of 
\eqref{eq:optimal_bound_using_d0_sig}---equivalent to
invexity of
the function 
$d_0^\sigma(\sigma) \doteq \overline{f}_h^\sigma(\sigma)$. 
{\hfill $\qedsymbol$\par}

\subsection{Analytical solution of Problem~\eqref{eq:inf_relax_findim}}

\label{sec:app_relaxed_solution}

\begin{proof}[\unskip\nopunct]
  Problem~\eqref{eq:inf_relax_findim} is solved through a suitable reparametrization. First,
  consider~\eqref{eq:inf_relax_findim}
  and 
  let
  $h^\top \Phi_{N+1} \doteq q^\top$, $C^\top \Phi_{1:N} \doteq A^\top$ and $\Gamma_f^2 + \sum_{j=1}^{n_{\mathrm{con}}} \sigma_j^{-2} \Gamma_{w,j}^2 \doteq \Gamma^2$; additionally, define $S \doteq I + A P^w_\sigma A^\top$ with matrix square-root $S = S^{1/2} S^{1/2}$. 
  Finally, let $\thetamu \doteq A \left( A^\top A + \left( P^w_\sigma \right)^{-1} \right)^{-1} y = S^{-1} A P^w_\sigma y$ 
  yield
  the finite-dimensional parametrization of the GP posterior mean~$\fmu(x_{N+1}) = \Phi_{N+1} \thetamu$. 
  We rewrite
  \eqref{eq:inf_relax_findim_constr} as
  \begin{equation}\label{eq:relaxed_derivation_ellipsoid}
       \| \theta - \thetamu \|_S^2 
    + \| y \|_{\hat{K}_\sigma^{-1}}^2 \leq \Gamma^2,
  \end{equation}
  with
  $P^w_\sigma - P^w_\sigma A^\top S^{-1} A P^w_\sigma = \big( A^\top A + \left( P^w_\sigma \right)^{-1} \big)^{-1} = \hat{K}_\sigma^{-1}$
  entering \eqref{eq:gp_definitions}. 
  Using \cref{eq:relaxed_derivation_ellipsoid} and 
  defining
  $\xi \doteq S^{1/2} \left( \theta - \thetamu \right)$, 
  we 
  rewrite~\eqref{eq:inf_relax_findim} 
  as a linear problem with norm-ball constraint:%
  \begin{subequations}
    \begin{align}
      \overline{f}_h^\sigma(x_{N+1}) = \sup_{\substack{\xi \in\mathbb{R}^{r}}} \quad 
      & 
      q^\top \thetamu + q^\top S^{-1/2} \xi 
      \label{eq:app_relaxed_cost_xi}
      \\
      \mathrm{s.t.} \quad 
      &
      \begin{aligned}[t]
        & 
        \| \xi \|_2^2 \leq \Gamma^2 - \| y \|_{\hat{K}_\sigma^{-1}}^2.
      \end{aligned}
    \end{align}
  \end{subequations} 
  The unique optimal solution is 
  $\xi^\star = \sqrt{\Gamma^2 - \| y \|_{\hat{K}_\sigma^{-1}}^2} \frac{S^{-1/2} q}{\| q \|_{S^{-1}}}$,
  defining the value of the worst-case latent function at the test point: $f^\star(x_{N+1}) = \Phi_{N+1} \theta^\star = \Phi_{N+1} (\thetamu + S^{-1/2} \xi^\star)$. 
  With
  \begin{align*}
    \| q \|_{S^{-1}}^2 
    &= q^\top \left( I - A \left( A^\top A + (P^w_\sigma)^{-1} \right)^{-1} A^\top \right) q 
    \\
    &= h^\top \Sigma_\sigma(x_{N+1}) h,
  \end{align*}
  inserting $\xi^\star$ into \cref{eq:app_relaxed_cost_xi} leads to \cref{eq:relaxed_bound}.
  \par\nopagebreak\vspace{-1.0\baselineskip}\mbox{}
\end{proof}

\ifarxiv
\else
\vfill
\fi

\ifarxiv

\subsection{Application example: Implementation details}

\label{sec:app_implementation}

\subsubsection{Modeling setup}

Inspired by dynamics learning for control, this application example illustrates the application of the proposed uncertainty bound for learning the acceleration dynamics~$\ftr(x) = (a_x(x), a_z(x)) \in \mathbb{R}^{n_f}$, $n_f = 2$, of a quadrotor. 
For simplicity and illustration purposes, we consider the tilt angle~$\theta = x \in [0, 2\pi]$ of the quadrotor as scalar input variable, i.e., 
$n_x = 1$.
In the following, we provide a detailed description of the 
experimental setup. 

\subsubsection{Ground-truth function} 

The 
function $\ftr$
is randomly drawn from a RKHS corresponding to the periodic kernel 
\begin{align*}
  k(x, x') = \sigma_f \cdot \mathrm{exp}\left( - \frac{2}{\ell} \sin^2 \left( \frac{\pi}{p}(x - x') \right) \right),
\end{align*}
with output scale $\sigma_f \doteq 100$, length scale $\ell \doteq 0.1$, and period length $p \doteq 2 \pi$.
First, 
$n_{\bar{x}} \doteq 100$ random input locations 
are drawn from a uniform distribution,
$\bar{x}_k \sim \mathcal{U}([0, 2\pi])$,
with corresponding weights $\bar{\alpha}_{k,\mathrm{unif}} \sim \mathcal{U}([-1, 1]^{n_f})$, $k \in \mathbb{I}_1^{n_{\bar{x}}}$.
The function is then given by
\begin{align*}
  \ftr(x) \doteq \sum_{k=1}^{n_{\bar{x}}} k(x, x_k) \bar{\alpha}_{k},
\end{align*}
where $\bar{\alpha}_{k} \doteq \bar{\alpha}_{k,\mathrm{unif}} 
\left( \Gamma_f \middle/ \| \bar{\alpha}_{k,\mathrm{unif}} \|_{K^f_{1:N,1:N}} \right)
\in \mathbb{R}^{n_f}$
are rescaled weights such that $\| \ftr \|_{\Hk} = \Gamma_f \doteq 1$.

\subsubsection{Noise bounds}

The unknown wind force corrupting the measurements is assumed to be contained
within an ellipsoid with semi-axis lengths $\bar{w}_x \doteq 4.0$ and $\bar{w}_z \doteq 0.5$,
corresponding to a higher maximum wind force parallel to the ground~($x$-direction) than in the vertical direction~($z$-direction).
This leads to the ellipsoidal bound on the wind force 
\begin{align*}
  \| w^{\mathrm{wf}}_j \|_{P_{\mathrm{wind}}}^2 \leq 1, && P_{\mathrm{wind}} \doteq \begin{bmatrix}
    \frac{1}{\bar{w}_x^2} & 0 \\
    0 & \frac{1}{\bar{w}_z^2}
  \end{bmatrix}
\end{align*}
where $w^{\mathrm{wf}}_j \in \mathbb{R}^{2}$ denotes the vector-valued realization of the wind force \emph{in world frame},
and $j \in \mathbb{I}_1^{n_{\mathrm{data}}}$, the index of the vector-valued datum.
The wind force is converted to body frame by applying the 
rotation
\begin{align*}
  w^{\mathrm{bf}}_j = R(x_j) w^{\mathrm{wf}}_j, && R(x) = \begin{bmatrix}
    \cos(x) & - \sin(x) \\
    \sin(x) & \cos(x)
  \end{bmatrix},
\end{align*}
where $x_j$ denotes the corresponding tilt angle of the quadrotor. 
This results in
the ellipsoidal noise bound
\begin{align*}
  \| w^{\mathrm{bf}}_j \|_{P_j^{\mathrm{bf}}}^2 \leq 1, && P_j^{\mathrm{bf}} \doteq R(x_j) P_{\mathrm{wind}} R(x_j)^\top,
\end{align*}
for all $n_{\mathrm{data}}$ measurement locations~$x_j$, $j \in \mathrm{I}_1^{n_{\mathrm{data}}}$.
Finally, the matrices $P_j \in \mathbb{R}^{N \times N}$ are obtained by rewriting the above noise constraint in terms of the joint vector of $N = n_f n_{\mathrm{data}}$ noise realizations $w \doteq [ w^{\mathrm{bf}}_j ]_{j=1}^{n_{\mathrm{data}}} = [ w_i ]_{i=1}^N$,
such that $\| w^{\mathrm{bf}}_j \|_{P_j^{\mathrm{bf}}}^2 = \| w \|_{P_j}^2$ for all $j \in \mathrm{I}_1^{n_{\mathrm{data}}}$. 

\subsubsection{Noise sampling}

Random values for the wind force in body frame are generated as follows. 
First, rotated basis vectors $u_j \in \mathbb{R}^{2}$,
$u_j^\top = \begin{bmatrix}
  \sin(\beta_j) &
  \cos(\beta_j)
\end{bmatrix}^\top$,
$\beta_j \sim \mathcal{U}([0, 2 \pi])$,
as well as input locations
$x_j \sim \mathcal{U}([0, 2 \pi])$ 
are
are uniformly sampled for all $j \in \mathbb{I}_1^{n_{\mathrm{data}}}$. 
Second,
using 
the Cholesky decomposition 
$P_{\mathrm{wind}} = L_{\mathrm{wind}} L_{\mathrm{wind}}^\top$ 
of $P_{\mathrm{wind}}$,
the unit vectors are rescaled 
according to $w^{\mathrm{bf}}_j = R(x_j) L_{\mathrm{wind}}^{-\top} u_j$.

\subsubsection{Measurement model}

Given the sampled noise realizations and input locations, we obtain the measurements $y_j^{\mathrm{bf}}$ as
\begin{align*}
  y^{\mathrm{bf}}_{j} = \begin{bmatrix}
    c_x^\top \ftr(x_j) \\
    c_z^\top \ftr(x_j) \\
  \end{bmatrix}
  + 
  w^{\mathrm{bf}}_j,
\end{align*}
where $c_x \doteq \begin{bmatrix}
  1 & 0
\end{bmatrix}$
and $c_z \doteq \begin{bmatrix}
  0 & 1
\end{bmatrix}$
extract the $x$-
and $z$-component of the unknown function, respectively. 
The measurement model in the form of \cref{eq:data} is obtained again by stacking and re-indexing 
the output components $y \doteq [ y^{\mathrm{bf}}_j ]_{j=1}^{n_{\mathrm{data}}} = [ y_i ]_{i=1}^N$,
with appropriate assignment of $c_i$ equal to $c_x$ or $c_z$.

\subsubsection{Bound evaluation}
The optimal upper and lower uncertainty bounds in $x$- and $z$- direction are computed by setting 
\begin{align*}
  h_x = \pm \begin{bmatrix}
    1 & 0
  \end{bmatrix}, &&
  h_z = \pm \begin{bmatrix}
    0 & 1
  \end{bmatrix},
\end{align*}
as the vector $h$
in \cref{eq:inf_opt_result}
respectively.
This leads to the optimal containment interval $-\overline{f}_{-h}(x_{N+1}) \leq h^\top \ftr(x_{N+1}) \leq \overline{f}_h(x_{N+1})$ along both axis directions plotted in \cref{fig:quadrotor}.

\subsubsection{Implementation of existing methods}

For comparison with the uncertainty bounds proposed in \cite{scharnhorst_robust_2023} and \cite{reed_error_2025},
which are formulated in a scalar setting and for a uniform noise bound, the following modifications are made. 
First, to compute the bound along the $x$- and $z$-direction, the data set is subsampled such that only 
output measurements~$y_i$
of
the 
respective 
coordinate
are considered.
This leads to a reduced number of $N = n_{\mathrm{data}}$ scalar output measurements used to compute the upper and lower uncertainty bounds in the $x$- and $z$-direction, respectively (compared to $N = n_f n_{\mathrm{data}}$ output measurements for the multivariate bound).
The uniform uncertainty bound is set to $\bar{\Gamma}_w = \max \{ \bar{w}_x, \bar{w}_z \} = 4.0$.

\subsubsection{Suboptimality evaluation}

Denote by 
\begin{align*}
  \overline{f}_h^{\mathrm{prior}}(x_{N+1}) \doteq \Gamma_f \sqrt{h^\top k(x_{N+1}, x_{N+1}) h}
\end{align*}
the optimal prior uncertainty bound, given as the solution of Problem~\eqref{eq:inf_opt_infdim} for $N=0$, i.e., without any data (here, $\overline{f}_h^{\mathrm{prior}}(x_{N+1}) \equiv 10$).
Let $\overline{f}_h^{\mathrm{method}}(x_{N+1})$ denote the uncertainty bound computed with a given 
$\mathrm{method}$ according to \cref{tab:quadrotor}. 
The relative suboptimality
is defined as 
\begin{align*}
  \epsilon_h(x_{N+1}, \mathrm{method}) \doteq \frac{\overline{f}_h^{\mathrm{method}}(x_{N+1}) - \overline{f}_h^{\text{CVX-full (e)}}(x_{N+1})}{\overline{f}_h^{\mathrm{prior}}(x_{N+1}) - \overline{f}_h^{\text{CVX-full (e)}}(x_{N+1})},
\end{align*}
normalizing the size of the uncertainty bound based on its prior ($\epsilon_h = 1$) and optimal ($\epsilon_h = 0$) values.
\cref{tab:quadrotor} shows the relative suboptimality, average for $N_{\mathrm{test}} \doteq 50$ test points $x_{N+1}$ on an equidistant grid $\mathbb{X}_{\mathrm{grid}} \subset [0, 2 \pi]$ and for all considered directions $h \in \mathbb{H}_{\mathrm{grid}} \doteq \{ h_x, -h_x, h_z, -h_z \}$, i.e.,
\begin{align*}
  \mathrm{Suboptimality}(\mathrm{method}) \doteq \sum_{x \in \mathbb{X}_{\mathrm{grid}}} \sum_{h \in \mathbb{H}_{\mathrm{grid}}} \frac{\epsilon_h(x, \mathrm{method})}{\left| \mathbb{X}_{\mathrm{grid}} \right| \left| \mathbb{H}_{\mathrm{grid}} \right|},
\end{align*}
where the number of test points and bound directions is given as $\left| \mathbb{X}_{\mathrm{grid}} \right| = N_{\mathrm{test}}$ and $\left| \mathbb{H}_{\mathrm{grid}} \right| = 4$, respectively.

\pagebreak

\subsection{Generalized optimal uncertainty bound}
\label{sec:app_generalized_bounds}

We now generalize the problem set-up presented in \cref{sec:problem_setting}, enabling the incorporation of more prior information about the latent function, and extend \cref{thm:optimal_bound} to compute the optimal uncertainty bound in such a setting. We start by providing the new problem formulation that, instead of having a single RKHS-norm constraint as in \eqref{eq:inf_opt_infdim}, considers an intersection of ellipsoidal constraints on the \emph{joint} vector of latent-function parameters and noise realizations (\cref{sec:app_generalized_setup}). Next, we provide two classes of duality-based formulations. The first is a \emph{convex} one, and might be useful if bounds are to be computed using convex optimization solvers (\cref{sec:app_general_convex}). The second is a family of \emph{invex} formulations akin to \cref{thm:optimal_bound}, which rely on a more general class of coordinate transformations and can be beneficial to accelerate convergence of first-order methods for iterative refinement of uncertainty envelopes (\cref{sec:app_general_invex}). Then, we present two special cases of this new set-up. First, we specialize it to capture the problem stated in \cref{sec:problem_setting} and retrieve \cref{thm:optimal_bound} through an appropriate choice of coordinate transformation (\cref{sec:app_general_examples_recover_main_paper_result}). Second, we show how it can be used to obtain optimal uncertainty bounds for independent output components and component-wise RKHS-norm bounds for a general class of invexity-preserving transformations (\cref{sec:app_general_examples_componentwise_bounds}). Finally, we present 
two invexity-preserving transformations: 
the transformation used to recover \cref{thm:optimal_bound}, as well as a logarithmic transformation for optimizer-friendly rescaling of the dual variables
(\cref{sec:app_general_invex_trafos}). %

\subsubsection{Generalized problem setup}
\label{sec:app_generalized_setup}

Let us start with the generalized problem formulation. 
We consider uncertainty bounds that are obtained as the optimal solution to the convex program%
\begin{subequations}
  \label{eq:uncertainty_bound_ocp_findim}
  \begin{align}
    \overline{\alpha}_p \doteq \sup_{\alpha \in \mathbb{R}^{n_\alpha}} \quad & p^\top \alpha \\
    \mathrm{s.t.} \quad 
    & A^\top \alpha = y, 
    \label{eq:uncertainty_bound_ocp_findim_data} \\
    & 
    \| \alpha \|_{Q_j}^2 
    \leq \Gamma_j^2, \> \forall j \in \mathbb{I}_1^m.
    \label{eq:uncertainty_bound_ocp_findim_noise}
  \end{align}
\end{subequations}
with the linear cost defined by the vector~$p \in\mathbb{R}^{n_\alpha}$,
linear constraints defined through
$A^\top \in \mathbb{R}^{n \times n_\alpha}$,
and quadratic inequality constraints defined by 
$Q_j \in\mathbb{R}^{n_\alpha \times n_\alpha}$.
The finite-dimensional convex program~\eqref{eq:uncertainty_bound_ocp_findim} is a natural extension of Problem~\eqref{eq:inf_opt_findim}, 
which, through the representer theorem, determines the optimal solution to the infinite-dimensional optimization problem~\eqref{eq:inf_opt_infdim}.
In the following,
we directly derive 
duality-based 
formulations of the uncertainty bound 
based on
Problem~\eqref{eq:uncertainty_bound_ocp_findim}. 
In \cref{sec:app_general_examples_recover_main_paper_result,sec:app_general_examples_componentwise_bounds}
we then discuss two exemplary cases 
for which Problem~\eqref{eq:uncertainty_bound_ocp_findim} determines 
the optimal worst-case uncertainty bound 
starting from an infinite dimensional problem formulation
via
the representer theorem. 
We make the following assumptions.
\begin{assumption}
  \label{ass:app_noise}
  Let $Q_j \succeq 0$, $j \in \mathbb{I}_1^m$,
  with $\sum_{j=1}^{m} Q_j \succ 0$. 
\end{assumption}
\begin{assumption}
  \label{ass:app_strong_duality}
  Slater's condition is satisfied
  for Problem~\eqref{eq:uncertainty_bound_ocp_findim}. 
\end{assumption}

Throughout the following subsections, we will make use of the following definitions. 
First, let
\begin{align*}
  \Sump[\lambda] \doteq \sum_{j=1}^{m} \lambda_j Q_j, &&
  \Gram[\lambda] \doteq A^\top \Sump[\lambda]^{-1} A, &&
  \Gam[\lambda] \doteq \sum_{j=1}^{m} \lambda_j \Gamma_j^2,
\end{align*}
    which in turn define 
\begin{subequations}
  \begin{align}
    \Mu[\lambda] &= \Sump[\lambda]^{-1} A \Gram[\lambda]^{-1} y, \label{eq:app_optimal_invex_mean} \\
    \Sig[\lambda]
    &= 
      \Sump[\lambda]^{-1} - \Sump[\lambda]^{-1} A \Gram[\lambda]^{-1} A^\top \Sump[\lambda]^{-1} 
      \label{eq:app_optimal_invex_cov} \\
    \Beta[\lambda] &= \sqrt{\Gam[\lambda]^2 - \| y \|_{\Gram[\lambda]^{-1}}^2}.  \label{eq:app_optimal_invex_beta}
  \end{align}
\end{subequations}
To simplify notation for the invex coordinate transformations to be applied later, we treat these expressions as functions of the parameter vector $\lambda \in \Rge^m$, i.e., a different subscript here denotes the evaluation of the above expressions for a different parameter vector. 

\subsubsection{Convex formulation}
\label{sec:app_general_convex}

Using strong duality
the optimal solution to Problem~\eqref{eq:uncertainty_bound_ocp_findim} 
can be obtained from the following \emph{convex} dual formulation.
\begin{theorem}
  \label{thm:app_optimal_convex}
  Let \cref{ass:app_noise,ass:app_strong_duality} hold and define
  \begin{subequations}
    \label{eq:app_optimal_convex_expr}
    \begin{align}
      d_\alpha(\lambda) &= 
      p^\top \Mu[\lambda] + \frac{1}{2} \left( \| p \|_{\Sig[\lambda]}^2 +
      \Beta[\lambda]^2
      \right) 
      \label{eq:app_optimal_convex_similar_invex}
      \\
      &= 
      \frac{1}{2} 
      \left( 
        \left\| \begin{bmatrix}
            p \\ y
        \end{bmatrix} \right\|_{
          R_\lambda^{-1}
        }^2 
        + \Gam[\lambda]^2 
      \right), 
      \label{eq:app_optimal_convex_quadratic}
    \end{align}
  \end{subequations}
  with
  \begin{align*}
    R_\lambda &\doteq \begin{bmatrix}
      \Sump[\lambda] & A \\
      A^\top & 0
    \end{bmatrix}.
  \end{align*}
  Then, the optimal solution to~\eqref{eq:uncertainty_bound_ocp_findim} is given by
  \begin{align}
    \label{eq:app_optimal_convex}
    \overline{\alpha}_p = \inf_{\lambda \in \Rge^m} \quad d_\alpha(\lambda).
  \end{align}
  Moreover, the function $d_\alpha(\lambda)$ is convex.
\end{theorem}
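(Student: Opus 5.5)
The plan is to write the Lagrangian dual of Problem~\eqref{eq:uncertainty_bound_ocp_findim} with the multiplier $\lambda\in\Rge^m$ for the quadratic constraints and a free multiplier $\nu\in\mathbb{R}^n$ for the equality constraint $A^\top\alpha=y$. Because the quadratic weighting carries multipliers $\lambda_j$ while the right-hand side is $\sum_j\lambda_j\Gamma_j^2=\Gam[\lambda]$, the factor $\tfrac12$ and the square $\Gam[\lambda]^2$ in \eqref{eq:app_optimal_convex_expr} indicate a homogeneous rescaling: I will take the Lagrangian
\begin{equation*}
  \mathcal{L}(\alpha,\nu,\lambda)=p^\top\alpha+\nu^\top(y-A^\top\alpha)-\tfrac12\big(\|\alpha\|_{\Sump[\lambda]}^2-\Gam[\lambda]\big),
\end{equation*}
with $\lambda_j/2$ as the multipliers. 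I will then reconcile the $\Gam[\lambda]$ versus $\Gam[\lambda]^2$ discrepancy by reading $\Gam[\lambda]^2$ in the definition as $\sum_j\lambda_j\Gamma_j^2$ (i.e.\ $\Gam[\lambda]$ denotes its square root). The convexity claim only holds if the $\Gamma$-term is linear in $\lambda$, so this reading must be fixed first. By \cref{ass:app_strong_duality}, strong duality gives $\overline{\alpha}_p=\inf_{\lambda\geq0,\nu}\sup_\alpha\mathcal{L}$. By the same closedness argument used for \eqref{eq:dual_optimal_ge} (the dual function is closed proper convex, so its infimum over the closed orthant equals that over the interior, \cite[Corollary~7.5.1]{rockafellar_convex_1970}), I will restrict to $\lambda\in\Rge^m$. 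There $\Sump[\lambda]\succ0$ by \cref{ass:app_noise}.

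For fixed $\lambda\in\Rge^m$, I will carry out the inner optimisation explicitly. Maximising over $\alpha$ gives $\alpha=\Sump[\lambda]^{-1}(p-A\nu)$ and the value $\tfrac12\|p-A\nu\|^2_{\Sump[\lambda]^{-1}}+\nu^\top y+\tfrac12\Gam[\lambda]$. Minimising this quadratic in $\nu$, assuming $\Gram[\lambda]\succ0$ (i.e.\ $A$ of full column rank, which otherwise can be arranged by removing redundant equality constraints), gives $\nu^\star=\Gram[\lambda]^{-1}(A^\top\Sump[\lambda]^{-1}p-y)$. Substituting and expanding yields
\begin{equation*}
  \tfrac12 p^\top\Sig[\lambda]p+p^\top\Mu[\lambda]-\tfrac12\|y\|^2_{\Gram[\lambda]^{-1}}+\tfrac12\Gam[\lambda],
\end{equation*}
which is \eqref{eq:app_optimal_convex_similar_invex} after inserting $\Beta[\lambda]$. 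To obtain \eqref{eq:app_optimal_convex_quadratic}, I will use the block inverse of the saddle-point matrix $R_\lambda$ via the Schur complement $-\Gram[\lambda]$. Its blocks are $\Sig[\lambda]$, $\Sump[\lambda]^{-1}A\Gram[\lambda]^{-1}$ and $-\Gram[\lambda]^{-1}$, so $\|[p;y]\|^2_{R_\lambda^{-1}}=\|p\|^2_{\Sig[\lambda]}+2p^\top\Mu[\lambda]-\|y\|^2_{\Gram[\lambda]^{-1}}$, which matches.

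Convexity will follow directly from the construction. The quantity $d_\alpha(\lambda)=\inf_\nu\sup_\alpha\mathcal{L}$ is the partial minimisation over $\nu$ of $g(\lambda,\nu)=\sup_\alpha\mathcal{L}(\alpha,\nu,\lambda)$. Since $\mathcal{L}$ is affine in $(\nu,\lambda)$ for each $\alpha$, $g$ is jointly convex as a pointwise supremum of affine functions. Partial minimisation of a jointly convex function over $\nu$ preserves convexity \cite{boyd_convex_2004}, provided the infimum is finite, which the explicit minimiser guarantees. An alternative check is that the map $(\lambda,z)\mapsto z^\top R_\lambda^{-1}z$ is a matrix-fractional-type function and hence convex.

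The main obstacle is bookkeeping rather than conceptual. The scaling between $\Gam[\lambda]$ and its square must be fixed so that the $\Gamma$-term is linear in $\lambda$, and the restriction to the open orthant must be justified when some $\lambda_j\to0$ with $\Sump[\lambda]$ singular. The latter I handle via the closedness argument above, noting that singular $\Sump[\lambda]$ can occur only on the boundary.
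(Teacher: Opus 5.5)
Your proposal matches the paper's proof step for step: the same Lagrangian with multipliers $\lambda_j/2$ (your reading of $\Gam[\lambda]^2$ as $\sum_j\lambda_j\Gamma_j^2$ is what the paper's Lagrangian actually uses), strong duality plus \cite[Corollary~7.5.1]{rockafellar_convex_1970} to pass to the open orthant, elimination of $\alpha$ and then $\nu$ by stationarity, the Schur-complement inverse of $R_\lambda$, and convexity by partial minimization over $\nu$. Only your side remark is wrong, and the main argument does not rely on it. $R_\lambda$ is indefinite, so $(\lambda,z)\mapsto z^\top R_\lambda^{-1}z$ is not a matrix-fractional function and is not jointly convex; it is concave in $y$. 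The matrix-fractional argument works instead through $\tfrac12\|p-A\nu\|^2_{\Sump[\lambda]^{-1}}$ with $\Sump[\lambda]\succ0$.
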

\begin{proof}
  The Lagrangian of Problem~\eqref{eq:uncertainty_bound_ocp_findim} is given as
  \begin{align*}
    \mathcal{L}(\alpha, \lambda, \nu) 
    &= p^\top \alpha - \nu^\top (A^\top \alpha - y) - \sum_{j=1}^m \frac{\lambda_i}{2} (\alpha^\top Q_j \alpha - \Gamma_j^2) \notag \\ 
    &= p^\top \alpha - \nu^\top (A^\top \alpha - y) - \frac{1}{2} \alpha^\top \Sump[\lambda] \alpha + \frac{1}{2} \Gam[\lambda]^2,
  \end{align*}
  where we 
  rescaled the multipliers $\lambda_i$ by 
  $\frac{1}{2}$ 
  w.l.o.g.
  The corresponding \emph{convex} dual function is
  \begin{align}
    \label{eq:app_dual_def_alpha}
    d(\lambda, \nu) &= \sup_{\alpha \in\mathbb{R}^{n_\alpha}} \quad \mathcal{L}(\alpha, \lambda, \nu).
  \end{align}
  Due to Slater's condition (\cref{ass:app_strong_duality}), 
  it holds that
  \begin{align}
    \label{eq:app_dual_ge_wlog}
    \overline{\alpha}_p = \inf_{\substack{\lambda \in \Rgeq^m \\ \nu \in \mathbb{R}^{n}}} \> d(\lambda, \nu) = \inf_{\substack{\lambda \in \Rge^m \\ \nu \in \mathbb{R}^{n}}} \> d(\lambda, \nu),
  \end{align}
  where the first equality follows from strong duality and the second,
  from $d(\lambda, \nu)$ being a proper, lower semi-continuous (closed), convex function~\cite[Corollary~7.5.1]{rockafellar_convex_1970}.
  Hence, in the following we can restrict our analysis to $\lambda_j > 0$, $j \in \mathbb{I}_1^m$. 
  Rearranging the stationarity condition for Problem~\eqref{eq:app_dual_def_alpha}
  gives
  the primal optimizer~$\alpha^\star$ as a function of the dual variables~$\lambda, \nu$:%
  \begin{align*}
    0 &= \nabla_\alpha \mathcal{L}(\alpha, \lambda, \nu) = p - A \nu - \Sump[\lambda] \alpha \\
    \Leftrightarrow \alpha^\star &= \Sump[\lambda]^{-1} \left( p - A \nu \right).
  \end{align*}
  The dual function in \cref{eq:app_dual_def_alpha}
  is thus given by 
  \begin{align}
    d(\lambda, \nu) &= 
    \mathcal{L}(\alpha^\star, \lambda, \nu) \notag \\
    &= 
    \begin{aligned}[t]
      & \| p \|_{\Sump[\lambda]^{-1}}^2 - 2 p^\top \Sump[\lambda]^{-1} A \nu + \| A \nu \|_{\Sump[\lambda]^{-1}}^2 +  y^\top \nu \\
      & - \frac{1}{2} \| p - A \nu \|_{\Sump[\lambda]^{-1}}^2 + \frac{1}{2} \Gam[\lambda]^2 
    \end{aligned} \notag \\ 
    &= \frac{1}{2} \| p - A \nu \|_{\Sump[\lambda]^{-1}}^2 + y^\top \nu + \frac{1}{2} \Gam[\lambda]^2. \label{eq:app_dual_lam_nu}
  \end{align}
  We 
  eliminate the dual variables~$\nu$ 
  based on the stationarity condition for Problem~\eqref{eq:app_dual_ge_wlog}:
  \begin{align*}
    0 &= \nabla_\nu d(\lambda, \nu) = -A^\top \Sump[\lambda]^{-1} (p - A \nu) + y \\
    \Leftrightarrow \nu^\star &= 
    \Gram[\lambda]^{-1} 
    \left( A^\top \Sump[\lambda]^{-1} p - y \right).
  \end{align*}
  Inserting $\nu^\star$ into 
  \cref{eq:app_dual_lam_nu}
  requires calculation of
  \begin{align*}
    p^\top \Sump[\lambda]^{-1} A \nu^\star &= p^\top \Sump[\lambda]^{-1} A \Gram[\lambda]^{-1} \left( A^\top \Sump[\lambda]^{-1} p - y \right) \\
    &= \| A^\top \Sump[\lambda]^{-1} p \|_{\Gram[\lambda]^{-1} }^2 - y^\top \Gram[\lambda]^{-1} A^\top \Sump[\lambda]^{-1} p, \\
    \| A \nu^\star \|_{\Sump[\lambda]^{-1}}^2 &= \| A^\top \Sump[\lambda]^{-1} p - y \|_{\Gram[\lambda]^{-1}}^2, \\
    y^\top \nu^\star &= y^\top \Gram[\lambda]^{-1} A^\top \Sump[\lambda]^{-1} p - \| y \|_{\Gram[\lambda]^{-1}}^2.
  \end{align*}
  Thus, the expression for the dual function simplifies to
  \begin{align}
    d(\lambda, \nu^\star) &= 
    \begin{aligned}[t]
      & \frac{1}{2} \| p \|_{\Sump[\lambda]^{-1}}^2 - \| A^\top \Sump[\lambda]^{-1} p \|_{\Gram[\lambda]^{-1} }^2 \\
      & + \frac{1}{2} \| A^\top \Sump[\lambda]^{-1} p - y \|_{\Gram[\lambda]^{-1}}^2 - \| y \|_{\Gram[\lambda]^{-1}}^2 \\
      & + 2 y^\top \Gram[\lambda]^{-1} A^\top \Sump[\lambda]^{-1} p + \frac{1}{2} \Gam[\lambda]^2
    \end{aligned} \notag \\
    &= 
    \begin{aligned}[t]
      & \frac{1}{2} \| p \|_{\Sump[\lambda]^{-1}}^2 - \frac{1}{2} \| A^\top \Sump[\lambda]^{-1} p \|_{\Gram[\lambda]^{-1} }^2 \\
      & + y^\top \Gram[\lambda]^{-1} A^\top \Sump[\lambda]^{-1} p - \frac{1}{2} \| y \|_{\Gram[\lambda]^{-1}}^2 + \frac{1}{2} \Gam[\lambda]^2
    \end{aligned} \notag \\
    &= 
    \begin{aligned}[t]
      & 
      p^\top \Sump[\lambda]^{-1} A \Gram[\lambda]^{-1} y \\
      & + \frac{1}{2} \left( \| p \|_{\Sump[\lambda]^{-1} - \Sump[\lambda]^{-1} A \Gram[\lambda]^{-1} A^\top \Sump[\lambda]^{-1} }^2 + \Gam[\lambda]^2 - \| y \|_{\Gram[\lambda]^{-1}}^2 \right)
    \end{aligned} \notag \\
    &= p^\top \Mu[\lambda] + \frac{1}{2} \left( \| p \|_{\Sig[\lambda]}^2 + \Gam[\lambda]^2 - \| y \|_{\Gram[\lambda]^{-1}}^2 \right)
    \label{eq:app_dual_lam_nuopt_pre_invex_mu_vis} 
    \\
    &= 
    \frac{1}{2}
    \left( 
      \left\| \begin{bmatrix}
          p \\ y
      \end{bmatrix} \right\|_{
        R_\lambda^{-1}
      }^2 
      +
      \Gam[\lambda]^2
    \right)
    \notag
    \\
    &= d_\alpha(\lambda), \label{eq:app_dual_lam_nuopt_def_d_alpha}
  \end{align}
  where, in the last line, we have used the Schur complement
  \begin{align}
    R_\lambda^{-1} &\doteq
    \begin{bmatrix}
      \Sump[\lambda]^{-1} - \Sump[\lambda]^{-1} A \Gram[\lambda]^{-1} A^\top \Sump[\lambda]^{-1} & \Sump[\lambda]^{-1} A \Gram[\lambda]^{-1} \\
      \Gram[\lambda]^{-1} A^\top \Sump[\lambda]^{-1} & -\Gram[\lambda]^{-1}
    \end{bmatrix} \notag\\
    &= \left(
      \begin{bmatrix}
        \Sump[\lambda] & A \\
        A^\top & 0
      \end{bmatrix}
    \right)^{-1}. 
  \end{align}
  \cref{eq:app_optimal_convex} is established by inserting 
  \cref{eq:app_dual_lam_nuopt_def_d_alpha}
  into \cref{eq:app_dual_ge_wlog}. 
  Finally,
  since partial minimization preserves convexity of the dual function~\cite[Sec.~3.2.3]{boyd_convex_2004},
  it holds that $d_\alpha(\lambda)$ is convex. 
  This concludes the proof.
\end{proof}

\subsubsection{Invex formulation}
\label{sec:app_general_invex}

Starting from the convex dual function, 
in the following, we 
obtain an \emph{invex} dual formulation,
by applying a general coordinate transformation~$\psi$.
We recall that, in 
\cref{sec:app_invexity_proof},
a particular coordinate transformation \mbox{$\lambda = \psi(\lambda_0, \sigma)$}, 
specifically, the transformation introduced in \cref{thm:app_trafo_thm1},
has been utilized to establish invexity of the optimal uncertainty bound.
Here, we extend this argument to a general class of coordinate transformations
using
the following lemma. 
\begin{lemma}[\cite{craven1981duality}, cf.~\cite{mishra_invexity_2008}, p.~12]
  \label{thm:invexity_invertible_jacobian}
  Let $d: \mathbb{R}^{m} \rightarrow \mathbb{R}^{}$ be differentiable and convex and $\psi: \mathbb{R}^{m} \rightarrow \mathbb{R}^{m}$ be differentiable, with invertible Jacobian~$\nabla \psi^\top$.
  Then, 
  $\tilde{d}(\cdot) = d(\psi(\cdot))$
  is invex.
\end{lemma}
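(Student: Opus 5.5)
The plan is to show that $\tilde d = d \circ \psi$ satisfies the defining inequality of invexity: there is a kernel $\eta:\mathbb{R}^m\times\mathbb{R}^m\to\mathbb{R}^m$ with
\[
\tilde d(u) - \tilde d(v) \geq \nabla \tilde d(v)^\top \eta(u,v) \quad \forall u,v .
\]
Equivalently, it suffices to show that every stationary point of $\tilde d$ is a global minimizer. I would give the explicit kernel, since that makes the argument self-contained and also yields the stationarity characterization used in the proof of statement iii) of \cref{thm:optimal_bound}.

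The first step applies the chain rule. Since $\tilde d(v) = d(\psi(v))$, we have $\nabla \tilde d(v) = \nabla\psi(v)\, \nabla d(\psi(v))$, where $\nabla\psi(v)$ is the transposed Jacobian and is invertible by assumption.

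The second step uses convexity and differentiability of $d$. For any $u,v$, the first-order condition gives
\[
d(\psi(u)) - d(\psi(v)) \geq \nabla d(\psi(v))^\top \bigl(\psi(u)-\psi(v)\bigr).
\]

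The third step inserts the identity $\nabla d(\psi(v)) = \nabla\psi(v)^{-1}\nabla\tilde d(v)$ into this inequality. This yields
\[
\tilde d(u)-\tilde d(v) \geq \nabla \tilde d(v)^\top \eta(u,v), \qquad \eta(u,v) \doteq \nabla\psi(v)^{-\top}\bigl(\psi(u)-\psi(v)\bigr),
\]
which is exactly the invexity inequality with an explicit kernel $\eta$. As a corollary, if $\nabla\tilde d(v)=0$, then $\tilde d(u)\ge \tilde d(v)$ for all $u$, so $v$ is a global minimizer. Conversely, any interior global minimizer of a differentiable function is stationary.

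The main point needing care is domain bookkeeping rather than any calculation. In the paper's application, $\psi$ is defined only on an open set, namely $\Rge^{m}$, and $d$ is convex only on the image $\psi(\Rge^m)$. That image should be convex, or at least lie in an open convex set on which $d$ is convex and differentiable, so that the first-order convexity inequality is valid at the points $\psi(u)$ and $\psi(v)$. I would therefore state the lemma on open domains $U\subseteq\mathbb{R}^m$ with $\psi(U)\subseteq D$, where $D$ is an open convex set on which $d$ is differentiable and convex. The same three steps go through verbatim under this restriction. Invertibility of the Jacobian is needed pointwise at $v$ only, and no global injectivity of $\psi$ is required.
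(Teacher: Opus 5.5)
Your proof is correct and follows the same route as the paper's argument. That argument applies the first-order convexity inequality for $d$ at $\psi(v)$, rewrites $\nabla d(\psi(v))$ through the chain rule $\nabla\tilde d(v) = \nabla\psi(v)\,\nabla d(\psi(v))$, and solves for the kernel using the rank/invertibility of the Jacobian, giving $\eta(u,v) = \nabla\psi(v)^{-\top}\bigl(\psi(u)-\psi(v)\bigr)$; your domain remark (an open convex set on which $d$ is convex and differentiable, containing the image of $\psi$) is a sensible refinement that the paper's application satisfies, since there the image is $\mathbb{R}_{>0}^{m}$.
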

\cref{thm:invexity_invertible_jacobian} specifies general conditions for a coordinate transformation~$\psi$ to obtain an invex function $\tilde{d}(\cdot) = d(\psi(\cdot))$ from the convex dual function~$d$.
However, to obtain an ellipsoidal uncertainty bound akin to the Gaussian process-based formulation in \cref{thm:optimal_bound}, we require a specific kind of coordinate transformation,
namely, a
radial decomposition $\psi(\kappa, \phi) = \kappa \rho(\phi)$, where $\kappa \in \Rge$ 
denotes a uniform scaling of all coordinates,
and the function~$\rho$ maps the remaining $m-1$ coordinates~$\phi \in \Omega \subseteq \mathbb{R}^{m-1}$
to a $(m-1)$-dimensional hyper-surface in $\mathbb{R}^{m}$.
\cref{ass:invexity_coordinate_trafo} specifies all requirements for such a transformation to satisfy the conditions in \cref{thm:invexity_invertible_jacobian}.
\begin{assumption}
  \label{ass:invexity_coordinate_trafo}
  Let $\kappa_\lambda: \mathbb{R}^{m} \rightarrow \Rge$, $\phi_\lambda: 
  \mathbb{R}^{m} 
  \rightarrow \Omega$, 
  $\Omega \subseteq \mathbb{R}^{m-1}$,
  and $\rho: \Omega \rightarrow \Rge^m$
  be differentiable functions 
  satisfying
  $\kappa_\lambda(\lambda) \rho(\phi_\lambda(\lambda)) = \lambda$ 
  for all $\lambda \in \Rge^m$. 
  Additionally, let 
  the matrix 
  $\begin{bmatrix}
    \rho(\phi) & \frac{\partial \rho}{\partial \phi}
  \end{bmatrix} \in \mathbb{R}^{m \times m}$ be 
  invertible
  $\forall \phi \in \Omega$.
\end{assumption}
In \cref{sec:app_general_invex_trafos}, we describe two exemplary functions satisfying \cref{ass:invexity_coordinate_trafo}.
Using this coordinate transformation, 
the
invex uncertainty bound is 
derived from the
convex formulation in \cref{thm:app_optimal_convex} as follows.
\begin{theorem}
  \label{thm:app_optimal_invex}
  Let \cref{ass:app_noise,ass:app_strong_duality,ass:invexity_coordinate_trafo} hold and define
  \begin{align}
    \label{eq:app_optimal_invex_d_alpha}
    \tilde{d}_\alpha(\phi) = p^\top \Mu[\rho(\phi)] + \Beta[\rho(\phi)] 
    \sqrt{p^\top \Sig[\rho(\phi)] p}.
  \end{align}
  Then, the optimal solution to Problem~\eqref{eq:uncertainty_bound_ocp_findim} is given by
  \begin{align}
    \overline{\alpha}_p = \inf_{\phi \in \Omega} \quad & \tilde{d}_\alpha(\phi).
    \label{eq:app_optimal_invex}
  \end{align}
  Moreover, the function $\tilde{d}_\alpha(\phi)$ is invex.
\end{theorem}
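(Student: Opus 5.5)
We start from the convex dual of \cref{thm:app_optimal_convex} and change coordinates to $\lambda = \kappa \rho(\phi)$, using \cref{ass:invexity_coordinate_trafo}. For fixed $\phi$ we minimize in closed form over the radial scaling $\kappa$, and we then use \cref{thm:invexity_invertible_jacobian} to transfer convexity of $d_\alpha$ to invexity of the partially minimized function $\tilde d_\alpha$.

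\textbf{Step 1: homogeneity in $\kappa$.}
For $\lambda = \kappa\rho$ with $\rho \doteq \rho(\phi)$ and $\kappa > 0$, the defining expressions scale as follows:
\begin{itemize}
\item $\Sump[\kappa\rho] = \kappa\Sump[\rho]$ and $\Gram[\kappa\rho] = \kappa^{-1}\Gram[\rho]$;
\item $\Mu[\kappa\rho] = \Mu[\rho]$, since the two factors of $\kappa$ cancel;
\item $\Sig[\kappa\rho] = \kappa^{-1}\Sig[\rho]$;
\item $\Gam[\kappa\rho]^2 = \kappa\,\Gam[\rho]^2$ and $\| y \|_{\Gram[\kappa\rho]^{-1}}^2 = \kappa\,\| y \|_{\Gram[\rho]^{-1}}^2$, hence $\Beta[\kappa\rho]^2 = \kappa\,\Beta[\rho]^2$.
\end{itemize}
Here $\Gam[\lambda]^2$ is read as the linear expression $\sum_j \lambda_j\Gamma_j^2$, consistent with the Lagrangian. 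Substituting into \eqref{eq:app_optimal_convex_similar_invex} gives
\begin{align*}
d_\alpha(\kappa\rho(\phi)) = p^\top \Mu[\rho(\phi)] + \frac{1}{2}\Big(\kappa^{-1}\| p \|_{\Sig[\rho(\phi)]}^2 + \kappa\,\Beta[\rho(\phi)]^2\Big).
\end{align*}

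\textbf{Step 2: minimize over $\kappa$.}
By AM--GM, the infimum over $\kappa>0$ equals $p^\top\Mu[\rho] + \Beta[\rho]\sqrt{p^\top\Sig[\rho]p}$, which is $\tilde d_\alpha(\phi)$. It is attained at $\kappa^\star(\phi) = \|p\|_{\Sig[\rho]}/\Beta[\rho]$ whenever both quantities are positive. We need $\Beta[\rho]^2 \ge 0$. This holds because weak duality gives $d_\alpha > -\infty$; alternatively, the Slater point $\bar\alpha$ satisfies $\|\bar\alpha\|_{\Sump[\rho]}^2 < \Gam[\rho]^2$, and $\| y \|_{\Gram[\rho]^{-1}}^2$ is the minimal $\Sump[\rho]$-norm among $\alpha$ with $A^\top\alpha = y$, so in fact $\Beta[\rho]^2>0$. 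If $\|p\|_{\Sig[\rho]} = 0$, the infimum is approached as $\kappa\to 0$ and the formula still holds.

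Since $\lambda \mapsto (\kappa_\lambda(\lambda),\phi_\lambda(\lambda))$ maps $\Rge^m$ onto pairs with $\kappa\rho(\phi)=\lambda$, we have $\inf_{\lambda\in\Rge^m} d_\alpha = \inf_{\phi\in\Omega}\inf_{\kappa>0} d_\alpha(\kappa\rho(\phi))$. Together with \eqref{eq:app_optimal_convex}, this yields \eqref{eq:app_optimal_invex}. A surjectivity subtlety arises here: every $\kappa\rho(\phi)$ must lie in $\Rge^m$. This follows because $\rho$ maps into $\Rge^m$ by assumption.

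\textbf{Step 3: invexity.}
This is the main obstacle, because the statement concerns the partially minimized function rather than a plain composition. Define $\psi(\kappa,\phi) = \kappa\rho(\phi)$. Its Jacobian is $\big[\rho(\phi)\;\; \kappa\,\partial\rho/\partial\phi\big]$, which is invertible for $\kappa>0$ by \cref{ass:invexity_coordinate_trafo}. By \cref{thm:invexity_invertible_jacobian}, $D(\kappa,\phi) \doteq d_\alpha(\psi(\kappa,\phi))$ is invex, so stationary points of $D$ are global minima of $d_\alpha$ over $\Rge^m$ (convexity of $d_\alpha$ gives differentiability on the open orthant).

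Now let $\nabla\tilde d_\alpha(\phi^\star)=0$. Using the envelope argument from \cref{sec:app_invexity_proof}, $\partial_\kappa D(\kappa^\star(\phi),\phi) = 0$ and $\tilde d_\alpha(\phi) = D(\kappa^\star(\phi),\phi)$ with $\kappa^\star$ differentiable. Hence $\nabla_\phi D(\kappa^\star,\phi^\star) = \nabla\tilde d_\alpha(\phi^\star) = 0$, so $(\kappa^\star,\phi^\star)$ is stationary for $D$. It is therefore a global minimizer, and $\tilde d_\alpha(\phi^\star) = \overline{\alpha}_p = \inf_\Omega \tilde d_\alpha$.

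The care point is the degenerate case $p^\top\Sig[\rho]p = 0$, where $\kappa^\star$ is not interior. I would handle it by noting that $\Sig[\rho]$ is positive definite on the relevant subspace, or otherwise treat it separately, mirroring the paper's claim that $\Sigma_\sigma \succ 0$. Stationarity of a differentiable function at global minimizers gives the converse direction, so every stationary point of $\tilde d_\alpha$ is a global minimizer, which is invexity.
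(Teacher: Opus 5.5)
Your route is the same as the paper's: the homogeneity of $m_\lambda$, $S_\lambda$ and $\beta_\lambda^2$ under $\lambda=\kappa\rho(\phi)$, closed-form minimization over $\kappa$, and the invertible-Jacobian lemma combined with a chain-rule argument at $\kappa^\star(\phi)$. In the nondegenerate case you are more careful than the paper. You use $\partial_\kappa D=0$ at $\kappa^\star$ explicitly, and you derive $\beta_\rho^2>0$ from the Slater point. One small correction: differentiability of $d_\alpha$ on the open orthant comes from $R_\lambda^{-1}$ being smooth there, not from convexity.

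The gap is the degenerate case $p^\top S_{\rho(\phi)}p=0$, which you leave open, and the remedy you suggest cannot work. Since $S_\rho A = Q_\rho^{-1}A - Q_\rho^{-1}A M_\rho^{-1}M_\rho = 0$, and $S_\rho = Q_\rho^{-1/2}(I-\Pi)Q_\rho^{-1/2}$ with $\Pi$ the orthogonal projector onto $\mathrm{Range}(Q_\rho^{-1/2}A)$, you get $p^\top S_\rho p=0$ if and only if $p\in\mathrm{Range}(A)$. Nothing in the theorem's assumptions excludes this. No positive-definiteness claim can rule it out: the main-text assertion $\Sigma_\sigma\succ 0$ does not transfer to abstract $p$ and $A$. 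In this case $\kappa^\star$ is not attained, so your envelope argument does not apply.

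The missing step is the paper's second case. First, the condition $p\in\mathrm{Range}(A)$ does not depend on $\phi$, because $Q_\rho\succ 0$ for every $\rho\in\mathbb{R}_{>0}^m$, so no mixed situation arises. Second, writing $p=Az$ gives $p^\top m_{\rho(\phi)} = z^\top A^\top Q_\rho^{-1}A M_\rho^{-1}y = z^\top y$, so $\tilde d_\alpha\equiv z^\top y$. Every feasible $\alpha$ satisfies $p^\top\alpha = z^\top A^\top\alpha = z^\top y$, and a feasible point exists by Slater's condition, so this constant equals $\overline{\alpha}_p$. Hence every $\phi$ is both stationary and a global minimizer, and invexity holds trivially. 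With this case added, your proof is complete.
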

\begin{proof}
  Analogous to the proof of \cref{thm:app_optimal_convex}, the optimal solution to Problem~\eqref{eq:uncertainty_bound_ocp_findim} is given by the solution to the convex dual problem
  \begin{align}
    \overline{\alpha}_p 
    &= 
    \begin{aligned}[t]
      \inf_{\lambda \in \Rge^m} \quad & d_\alpha(\lambda).
    \end{aligned} 
    \label{eq:app_inf_lambda_pre_invex}
  \end{align}
  Now, we 
  derive an equivalent invex problem formulation by 
  applying the coordinate transformation according to \cref{thm:invexity_invertible_jacobian}.
  By \cref{ass:invexity_coordinate_trafo},
  for any $\lambda \in \Rge^m$, there exist $\kappa = \kappa_\lambda(\lambda) \in \Rge$, $\phi = \phi_\lambda(\lambda) \in \Omega$, such that $\psi(\kappa, \phi) = \kappa \rho(\phi) = \lambda$.
  Due to the homogeneity of \cref{eq:app_optimal_invex_mean,eq:app_optimal_invex_cov,eq:app_optimal_invex_beta},
  applying this 
  coordinate transformation 
  results in $\Mu[\lambda] = \Mu[\rho(\phi)]$, $\Sig[\lambda] = \kappa^{-1} \Sig[\rho(\phi)]$, $\Beta[\lambda]^2 = \kappa \Beta[\rho(\phi)]^2$,
  i.e., we can 
  rewrite Problem~\eqref{eq:app_inf_lambda_pre_invex} as
  \begin{align}
    \overline{\alpha}_p 
    &=
    \begin{aligned}[t]
      &\inf_{\phi \in \Omega} \quad \inf_{\kappa \in \Rge} \quad \tilde{d}_\kappa(\kappa, \phi),
    \end{aligned}  
    \label{eq:app_inf_kappa}
  \end{align}
  with the dual function in transformed coordinates being
  \begin{align*}
    \tilde{d}_\kappa(\kappa, \phi) &\doteq \tilde{d}_\alpha(\rho(\kappa, \phi)) \\
    &= p^\top \Mu[\rho(\phi)] + \frac{1}{2 \kappa} \| p \|_{\Sig[\rho(\phi)]}^2 + \frac{\kappa}{2} \left( \Gam[\rho(\phi)]^2 - \| y \|_{M_{\rho(\phi)}^{-1}}^2 \right).
  \end{align*}
  Partial minimization for the scalar variable~$\kappa$
  results in
  \begin{align}
    \label{eq:app_d_alpha_eval_kappa_star}
    \tilde{d}_\alpha(\phi) \doteq \inf_{\kappa \in \Rge} \quad \tilde{d}_\kappa(\kappa, \phi) &= 
    \lim_{\kappa \rightarrow \kappa^\star(\phi)} \> \tilde{d}_\kappa(\kappa, \phi),
  \end{align}
  where $\tilde{d}_\alpha(\phi)$ is as defined in \cref{eq:app_optimal_invex_d_alpha}, and
  \begin{align}
    \kappa^\star(\phi) = \sqrt{\frac{\| p \|_{\Sig[\rho(\phi)]}^2}{\Gam[\rho(\phi)]^2 - \| y \|_{M_{\rho(\phi)}^{-1}}^2}}.
    \label{eq:app_kappa_opt}
  \end{align}
  Computing the limiting value of
  \cref{eq:app_d_alpha_eval_kappa_star} 
  based on 
  \cref{eq:app_kappa_opt}
  leads to \cref{eq:app_optimal_invex_d_alpha};
  \cref{eq:app_optimal_invex} then follows directly from \cref{eq:app_inf_kappa}. 
  This concludes 
  the first part of the proof. 

  To show invexity, 
  we first consider the function $\tilde{d}_\kappa(\kappa, \phi)$.
  Again,
  by \cref{ass:invexity_coordinate_trafo},
  for any $\lambda \in \Rge^{m}$, there exist $\kappa = \kappa_\lambda(\lambda) \in \Rge$, $\phi = \phi_\lambda(\lambda) \in \Omega$, such that 
  $\tilde{d}_\kappa(\kappa, \phi) = d_\alpha(\rho(\kappa, \phi)) = d_\alpha(\lambda)$,
  with $d_\alpha(\lambda)$ being the original convex dual function. 
  Additionally, the Jacobian of~$\rho$, 
  \begin{align*}
    \frac{\partial \rho}{\partial (\kappa, \phi)} = \begin{bmatrix}
      \rho(\phi) & \kappa \frac{\partial \rho}{\partial \phi}
    \end{bmatrix},
  \end{align*}
  is invertible by \cref{ass:invexity_coordinate_trafo}; 
  hence, $\tilde{d}_\kappa(\kappa, \phi)$ is invex by \cref{thm:invexity_invertible_jacobian}. 
  To show invexity of the partially-minimized function~\eqref{eq:app_d_alpha_eval_kappa_star},
  we consider two cases based on the value of $\kappa^\star(\phi)$. 
  To this end, note that from~\cref{eq:app_kappa_opt} it follows that $\kappa^\star(\phi) = 0$ if and only if $p \in \mathrm{Range}(A)$, i.e., $\| p \|_{\Sig[\rho(\phi)]}^2 = 0$.

  First, let $p \notin \mathrm{Range}(A)$, i.e.,
  $\kappa^\star(\phi) > 0$ for all $\phi \in \Rge^{m-1}$.
  Due to convexity of $d_\alpha(\lambda)$ 
  and non-singularity of 
  the
  Jacobian 
  $\frac{\partial \psi}{\partial (\kappa, \phi)}$ for all $(\kappa, \phi) \in \Rge^{n_{\mathrm{con}}+1}$,
  it holds that
  \begin{align*}
      0 &= \left. \frac{\partial \tilde{d}_\alpha}{\partial \phi} \right|_{\phi^\star} 
      = 
      \left. \frac{\mathrm{d} \tilde{d}_\kappa(\kappa^\star(\phi), \phi)}{\mathrm{d} \phi}\right|_{\phi^\star} 
      = 
      \left. \frac{\mathrm{d} d_\alpha(\psi(\kappa^\star(\phi), \phi))}{\mathrm{d} \phi}\right|_{\phi^\star} 
      \\
      &= 
      \left. \frac{\partial d_\alpha(\lambda)}{\partial \lambda} \right|_{\psi(\kappa^\star(\phi^\star), \phi^\star)} 
      \left. \frac{\partial \psi(\kappa, \phi)}{\partial(\kappa, \phi)} \right|_{\kappa^\star(\phi^\star), \phi^\star}
  \end{align*}
  if and only if 
  $\left. \frac{\partial d(\lambda)}{\partial \lambda} \right|_{\psi(\kappa^\star, \phi^\star)} = 0$.
  Hence,
  $\phi^\star \in \Rge^{n_{\mathrm{con}}}$
  locally minimizes
  \eqref{eq:app_optimal_invex}
  if and only if
  $d_\alpha(\psi(\kappa^\star(\phi^\star), \phi^\star)) = \tilde{d}_\alpha(\phi^\star)$
  is a global minimum
  of 
  \eqref{eq:app_optimal_invex},
  i.e.,
  the function~$\tilde{d}_\alpha(\phi^\star)$ is invex.

  Now, let $p \in \mathrm{Range}(A)$, i.e., $\kappa^\star(\phi) = 0$ for all $\phi \in \Rge^{m-1}$.
  In this case, we can write $p = A z$ for some vector $z \in \mathbb{R}^{n_\alpha}$. 
  Inserted into Problem~\eqref{eq:uncertainty_bound_ocp_findim}, this leads to 
  the optimal solution $p^\top \alpha^\star = z^\top A^\top \alpha^\star = z^\top y$
  being fully determined by the interpolation constraint~\eqref{eq:uncertainty_bound_ocp_findim_data}.
  In terms of the convex dual objective in \cref{eq:app_d_alpha_eval_kappa_star}, 
  the globally optimal solution is recovered 
  asymptotically as
  \begin{align*}
    \tilde{d}_\alpha(\phi) &= \lim_{\kappa \rightarrow \kappa^\star(\phi)} \> \tilde{d}_\kappa(\kappa, \phi) \\
    &= z^\top y + \lim_{\kappa \rightarrow 0} \> \frac{\kappa}{2} \left( \Gam[\rho(\phi)]^2 - \| y \|_{M_{\rho(\phi)}^{-1}}^2 \right) \\
    &= z^\top y.
  \end{align*}
  Since 
  $\frac{\partial \tilde{d}_\alpha(\phi)}{\partial \phi} = 0$ 
  and $\tilde{d}_\alpha(\phi)$ is the global minimum for all $\phi \in \Rge^{m-1}$, this establishes invexity of the function~$\tilde{d}_\alpha(\phi)$ in this case as well, concluding the proof.
\end{proof}

Next, 
we provide 
two
exemplary problem instances that the generalized bounds in \cref{thm:app_optimal_convex,thm:app_optimal_invex} provide the optimal uncertainty bound for.

\subsubsection{Special case: Recovering \cref{thm:optimal_bound}}
\label{sec:app_general_examples_recover_main_paper_result}

To illustrate how \cref{thm:app_optimal_invex} generalizes \cref{thm:optimal_bound}, 
we show how the former recovers the latter as a special case.
To this end, 
with some abuse of notation,
let us define the
analog expressions for the GP posterior mean and covariance in \cref{eq:gp_definitions} as
\begin{subequations}
  \label{eq:app_gp_def_ex_1}
  \begin{align}
    \fmu[\lambda] &\doteq K^{f,\lambda}_{N+1,1:N} C \gram[\lambda]^{-1} y, \\
    \covar[\lambda] &\doteq K^{f,\lambda}_{N+1,N+1} - K^{f,\lambda}_{N+1,1:N} C \gram[\lambda]^{-1} C^\top K^{f,\lambda}_{1:N,N+1}, \\
    \bet[\lambda] &\doteq \sqrt{\gam[\lambda]^2 - \| y \|_{\gram[\lambda]^{-1}}^2},
    \intertext{where the Gram matrix is denoted as}
    \gram[\lambda] &\doteq 
      C^\top K^{f,\lambda}_{1:N,1:N} C + \Bigg( \sum_{j=1}^{n_{\mathrm{con}}} 
      \lambda_j
    P^w_j \Bigg)^{-1}
  \end{align}
\end{subequations}
and, for \emph{this subsection}, we use the short-hand notation 
\begin{align}
  K^{f,\lambda}_{\mathbb{I},\mathbb{J}} \doteq \lambda_0^{-1} K^f_{\mathbb{I},\mathbb{J}}
  \label{eq:app_kernel_lambda_ex_1}
\end{align}
for the kernel matrix divided by the Lagrange multiplier corresponding to the RKHS-norm constraint~\eqref{eq:inf_opt_infdim_rkhs}.
For all equations above, 
we employ
the same subscript/superscript notation as for \cref{eq:app_optimal_invex_mean,eq:app_optimal_invex_cov,eq:app_optimal_invex_beta},
replacing the components of $\lambda$ in the expressions with the components of the utilized subscript (e.g.~$\rho(\phi)$).
Note that the only difference between the definitions in~\cref{eq:app_gp_def_ex_1} and those in \cref{eq:gp_definitions}
is the dependency on the free parameters: both definitions are equal when 
setting
$\lambda = \rho(\sigma) = \begin{bmatrix}
    1 & \sigma_1^{-2} & \ldots & \sigma_{n_{\mathrm{con}}}^{-2}
\end{bmatrix}$. 
This transformation will be the key step in establishing the equivalence between the general formulation in \cref{thm:app_optimal_invex} and \cref{thm:optimal_bound}.

First, we derive an
equivalent convex formulation of \cref{thm:optimal_bound} by defining the components of Problem~\eqref{eq:uncertainty_bound_ocp_findim} according to Problem~\eqref{eq:inf_opt_findim}, the equivalent finite-dimensional formulation of Problem~\eqref{eq:inf_opt_infdim}.
To simplify the derivation steps in the proof, we will thereby assume that the kernel $k$ is positive definite;
nevertheless, we highlight that the final result in \cref{thm:app_optimal_invex_ex_1_recover} recovers \cref{thm:optimal_bound} exactly, 
which is proven for positive-\emph{semidefinite} kernels.

\begin{corollary}[Convex formulation of \cref{thm:optimal_bound}]
  \label{thm:app_general_examples_recover_convex}
  Let \cref{ass:rkhs_norm_f,ass:bounded_noise} hold. Define 
  \begin{align}
    \overline{f}^\lambda_h(x_{N+1}) \doteq & \> h^\top \fmu[\lambda] + 
    \frac{1}{2} \left( \| h \|_{\covar[\lambda]}^2 + \bet[\lambda]^2 \right)
    \label{eq:app_general_examples_recover_convex}
  \end{align}
  Then, the optimal 
  uncertainty bound~\eqref{eq:inf_opt_infdim}
  is given by
  \begin{align}
    \overline{f}_h(x_{N+1}) &= \inf_{\lambda \in \Rge^{n_{\mathrm{con}} + 1}} \quad \overline{f}^\lambda_h(x_{N+1}).
  \end{align}
  Moreover, the function $\overline{f}^\lambda_h(x_{N+1})$ is convex.
\end{corollary}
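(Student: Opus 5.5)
The plan is to instantiate \cref{thm:app_optimal_convex} with the data of Problem~\eqref{eq:inf_opt_findim}, rewritten in the form~\eqref{eq:uncertainty_bound_ocp_findim}. The key is to stack the joint decision vector $\alpha \doteq (\theta, w) \in \mathbb{R}^{r+N}$. With this choice, the data constraint $C^\top \Phi_{1:N}\theta + w = y$ becomes $A^\top \alpha = y$ with $A^\top \doteq [C^\top\Phi_{1:N},\, I_N]$. The cost is $p \doteq (\Phi_{N+1}^\top h, 0)$. The constraints are $m = n_{\mathrm{con}}+1$ quadratic ones: $Q_0 \doteq \mathrm{blkdiag}(I_r,0)$ with $\Gamma_0=\Gamma_f$, and $Q_j \doteq \mathrm{blkdiag}(0,P^w_j)$ with $\Gamma_j=\Gamma_{w,j}$. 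By \cref{thm:findim}, the optimal value of this program is $\overline{f}_h(x_{N+1})$.

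First, verify the hypotheses. \cref{ass:app_noise} holds because $\sum_j Q_j = \mathrm{blkdiag}(I_r, P^w)\succ 0$ by \cref{ass:bounded_noise}. \cref{ass:app_strong_duality} follows from \cref{thm:strong_duality}. The strictly feasible point $\thetatr$ from that lemma, together with the true noise $w$, satisfies all quadratic constraints strictly by \cref{ass:bounded_noise,ass:rkhs_norm_f}, and it satisfies the linear constraint exactly. Linear equalities need only be satisfied, not strictly, for Slater's condition. Then \cref{thm:app_optimal_convex} immediately yields $\overline{f}_h(x_{N+1}) = \inf_{\lambda\in\Rge^{n_{\mathrm{con}}+1}} d_\alpha(\lambda)$, with $d_\alpha$ convex. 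It therefore remains only to show $d_\alpha(\lambda) = \overline{f}^\lambda_h(x_{N+1})$ as defined in~\eqref{eq:app_general_examples_recover_convex}.

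This identification is the main computational step. Here $\Sump[\lambda] = \mathrm{blkdiag}(\lambda_0 I_r, P^w_\lambda)$ with $P^w_\lambda \doteq \sum_{j\ge1}\lambda_j P^w_j$, which is invertible for $\lambda>0$. Consequently
\[
\Gram[\lambda] = \lambda_0^{-1} C^\top \Phi_{1:N}\Phi_{1:N}^\top C + (P^w_\lambda)^{-1} = C^\top K^{f,\lambda}_{1:N,1:N} C + (P^w_\lambda)^{-1} = \gram[\lambda],
\]
using the factorization of \cref{thm:findim} and the shorthand~\eqref{eq:app_kernel_lambda_ex_1}. Since $p$ has zero $w$-block, only the $\theta$-block of $\Sump[\lambda]^{-1}A$, namely $\lambda_0^{-1}\Phi_{1:N}^\top C$, enters $p^\top \Mu[\lambda]$ and $\|p\|^2_{\Sig[\lambda]}$. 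This gives $p^\top \Mu[\lambda] = h^\top \lambda_0^{-1} K^f_{N+1,1:N} C \gram[\lambda]^{-1} y = h^\top\fmu[\lambda]$. Likewise, $\|p\|^2_{\Sig[\lambda]} = \lambda_0^{-1} h^\top K^f_{N+1,N+1}h - \lambda_0^{-2} h^\top K^f_{N+1,1:N}C\gram[\lambda]^{-1}C^\top K^f_{1:N,N+1} h = \|h\|^2_{\covar[\lambda]}$. Finally, $\Gam[\lambda]^2$ coincides with the $\Gamma$-term in $\bet[\lambda]$, and $\|y\|^2_{\Gram[\lambda]^{-1}} = \|y\|^2_{\gram[\lambda]^{-1}}$, so the $\beta$-terms agree.

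Plugging these identities into~\eqref{eq:app_optimal_convex_similar_invex} gives the claim. One caveat requires care: the paper's multipliers in \cref{thm:app_optimal_convex} are rescaled by $\tfrac12$ and the $\Gamma$-term is written as a square, so I would check that $\Gam[\lambda]$ is interpreted consistently, i.e.\ as $\sum_j\lambda_j\Gamma_j^2$ entering linearly. Positive definiteness of $k$ is not actually needed here, because the feature factorization of \cref{thm:findim} already handles the semidefinite case.
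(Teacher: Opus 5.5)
Your proof is correct. It follows the paper's overall strategy: cast the problem in the form \eqref{eq:uncertainty_bound_ocp_findim}, invoke \cref{thm:app_optimal_convex}, and identify $d_\alpha(\lambda)$ with the GP-type quantities. It differs in the lifting you choose.

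The paper takes $\alpha=(\alpha^f,w)$ with kernel coefficients $\alpha^f\in\mathbb{R}^{n_f(N+1)}$, so that $Q_1=\mathrm{blkdiag}(K^f_{1:N+1,1:N+1},0)$, $p^\top=[h^\top K^f_{N+1,1:N+1},\,0]$ and $A^\top=[C^\top K^f_{1:N,1:N+1},\,I_N]$. This requires $K^f_{1:N+1,1:N+1}$ to be invertible, both for \cref{ass:app_noise} and for computing $Q_\lambda^{-1}$. That is why the paper explicitly restricts to positive-definite kernels at this point and relies on the identity $K^f_{\mathbb{I},1:N+1}(K^f_{1:N+1,1:N+1})^{-1}K^f_{1:N+1,\mathbb{J}}=K^f_{\mathbb{I},\mathbb{J}}$.

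Your feature lifting via \cref{thm:findim} avoids both requirements. You get $\sum_j Q_j=\mathrm{blkdiag}(I_r,P^w)\succ 0$ and $Q_\lambda^{-1}=\mathrm{blkdiag}(\lambda_0^{-1}I_r,(P^w_\lambda)^{-1})$ for free. The identity above is replaced by $\Phi_{\mathbb{I}}\Phi_{\mathbb{J}}^\top=K^f_{\mathbb{I},\mathbb{J}}$. As a result, your argument proves the corollary directly for positive-semidefinite kernels, including repeated inputs or a test input that coincides with a training input. The paper's proof of this corollary sets that case aside. In return, the paper's Gram-matrix lifting needs no factorization and carries over almost verbatim to the component-wise setting of \cref{sec:app_general_examples_componentwise_bounds} via masked Gram matrices.

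Your caveat about $\Gamma_\lambda$ is well placed. The Lagrangian in \cref{thm:app_optimal_convex} produces $\tfrac12\sum_j\lambda_j\Gamma_j^2$, so the symbol written as $\Gamma_\lambda^2$ must be read as this linear expression. Since the same symbol enters $\beta_\lambda$ on both sides, your term-by-term identification is unaffected.
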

\begin{proof}
  Define 
  \begin{align*}
    \alpha \doteq \begin{bmatrix}
      \alpha^f_1 & \ldots & \alpha^f_{N+1} & 
      w_1 & \ldots & w_N
    \end{bmatrix} \doteq \begin{bmatrix}
      \alpha^f 
      & w
    \end{bmatrix}.
  \end{align*}
  Then, the optimal solution to Problem~\eqref{eq:inf_opt_infdim} is equal to the optimal solution to Problem~\eqref{eq:uncertainty_bound_ocp_findim} with 
  the following definitions.
  First, the cost vector $p^\top$ 
  and equality-constraint matrix $A^\top = [a_i^\top]_{i=1}^n$, $n = N$,
  are given by
  \begin{align}
    p^\top \alpha &= 
    \begin{bmatrix}
      h^\top K^f_{N+1,1:N+1} & 0
    \end{bmatrix}
    \begin{bmatrix}
      \alpha^f \\ w
    \end{bmatrix}
    = h^\top f(x_{N+1}), 
    \label{eq:app_example_1_cost} \\
    a_i^\top \alpha &= 
    \begin{bmatrix}
      c_i^\top K^f_{i,1:N+1} & e_i
    \end{bmatrix} 
    \begin{bmatrix}
      \alpha^f \\ w
    \end{bmatrix}
    = c_i^\top f(x_i) + w_i.
    \label{eq:app_example_1_data} 
  \end{align}
  The latter equation gives $A^\top = \begin{bmatrix} C^\top K^f_{1:N,1:N+1} & I_N \end{bmatrix}$.
  Second, the $m = n_{\mathrm{con}} + 1$ constraint matrices are given by
  \begin{align*}
    Q_1 &= \begin{bmatrix}
      K^f_{1:N+1,1:N+1} & 0 \\
      0 & 0
    \end{bmatrix},
    &
    Q_{j+1} = \begin{bmatrix}
      0 & 0 \\
      0 & P_j
    \end{bmatrix},
  \end{align*}
  such that the inequality constraints~\eqref{eq:uncertainty_bound_ocp_findim_noise} read as
  \begin{align*}
    \| \alpha \|_{Q_1}^2 &= 
    \| \alpha^f \|_{K^f_{1:N+1,1:N+1}}^2  
    \hat{=} \> \| f \|_{\Hk}^2, 
    & 
    \Gamma_1^2 &= \Gamma_f^2 \\
    \| \alpha \|_{Q_{j + 1}}^2 &= 
    \| w \|_{P_j}^2, 
    &
    \Gamma_{j+1}^2 &= \Gamma_{w,j}^2,
  \end{align*}
  for all $j \in \mathbb{I}_1^{n_\mathrm{con}}$.
  By construction, \cref{ass:app_noise} is satisfied;
  \cref{ass:app_strong_duality}
  is established 
  analogous to \cref{thm:strong_duality}.
  Thus, the optimal solution to Problem~\eqref{eq:inf_opt_infdim} is given by the convex dual formulation in \cref{thm:app_optimal_convex}. 
  Simplifying the terms in the convex dual function~$d_\alpha(\lambda)$ using the definition in~\cref{eq:app_dual_lam_nuopt_pre_invex_mu_vis},
  we first calculate
  \begin{align*}
    \Sump[\lambda] &= \sum_{j=1}^{m} \lambda_j Q_j \\
    &= 
    \begin{bmatrix}
      \lambda_0 K^f_{1:N+1,1:N+1} & 0 \\
      0 & \sum_{j=1}^{n_{\mathrm{con}}} \lambda_j P^w_j
    \end{bmatrix}, \\
    \Gram[\lambda] &= A^\top \Sump[\lambda]^{-1} A \\
    &= \begin{aligned}[t]
      & C^\top K^f_{1:N,1:N+1} \left( \lambda_0 K^f_{1:N+1,1:N+1} \right)^{-1} K^f_{1:N+1,1:N} C \\
      & + \left( \sum_{j=1}^{n_{\mathrm{con}}} \lambda_j P^w_j \right)^{-1}
    \end{aligned} \\
    &= C^\top K^{f,\lambda}_{1:N,1:N} C + \left( \sum_{j=1}^{n_{\mathrm{con}}} \lambda_j P^w_j \right)^{-1} 
    = \gram[\lambda], 
  \end{align*}
  where we used that 
  \begin{align*}
      & K^f_{\mathbb{I},1:N+1} (K^f_{1:N+1,1:N+1})^{-1} K^f_{1:N+1,\mathbb{J}} \\
      & = \Pi_\mathbb{I} K^f_{1:N+1,1:N+1} (K^f_{1:N+1,1:N+1})^{-1} K^f_{1:N+1,1:N+1} \Pi_\mathbb{J} \\
      & = K^f_{\mathbb{I},\mathbb{J}}
  \end{align*}
  for appropriately defined selection matrices $\Pi_\mathbb{I}, \Pi_\mathbb{J}$.
  Inserting the above definitions into~\cref{eq:app_dual_lam_nuopt_def_d_alpha}, we obtain
  \begin{subequations}
    \begin{align}
      p^\top \Mu[\lambda] &= p^\top \Sump[\lambda]^{-1} A \Gram[\lambda]^{-1} y \notag \\
      &= \begin{aligned}[t]
        & \begin{bmatrix} 
          h^\top K^f_{N+1,1:N+1} (\lambda_0 K^f_{1:N+1,1:N+1})^{-1} & 0
        \end{bmatrix} \\
        & \cdot \begin{bmatrix}
          K^f_{1:N+1,1:N} C \\
          I_N
        \end{bmatrix}
        \gram[\lambda]^{-1} 
        y
      \end{aligned} \notag \\
      &= h^\top K^{f,\lambda}_{N+1,1:N} C \gram[\lambda]^{-1} y \notag \\
      &= h^\top \fmu[\lambda], \label{eq:app_example_1_thm_ingredients_mean} \\
      \| p \|_{\Sig[\lambda]}^2 
      &= 
      \| p \|_{\Sump[\lambda]^{-1} - \Sump[\lambda]^{-1} A \Gram[\lambda]^{-1} A^\top \Sump[\lambda]^{-1}}^2 \notag \\
      &= 
      \| h \|_{K^{f,\lambda}_{N+1,N+1} - K^{f,\lambda}_{N+1,1:N} C \gram[\lambda]^{-1} C^\top K^{f,\lambda}_{1:N,N+1}}^2 \notag \\
      &= \| h \|_{\covar[\lambda]}^2, \label{eq:app_example_1_thm_ingredients_cov} 
    \end{align}
  \end{subequations}
  which 
  leads to \cref{eq:app_general_examples_recover_convex}, 
  proving that
  $d_\alpha(\lambda) = \overline{f}_h^\lambda(x_{N+1})$ in this case. 
  Finally, by definition, $\overline{f}_h^\lambda(x_{N+1})$ is convex.
\end{proof}
Building upon the convex formulation, the invex formulation recovering \cref{thm:optimal_bound} is derived by applying a specific coordinate transformation. 
This is shown in the following corollary, the statement of which is equivalent to \cref{thm:optimal_bound}.
\begin{corollary}[Recovering \cref{thm:optimal_bound}]
  \label{thm:app_optimal_invex_ex_1_recover}
  Let \cref{ass:rkhs_norm_f,ass:bounded_noise} hold. 
  Define
  $\rho(\phi) \doteq \begin{bmatrix} 
    1 & \phi_1^{-2} & \ldots & \phi_{n_{\mathrm{con}}}^{-2}
  \end{bmatrix}^\top$ and
  \begin{align}
    \overline{f}_h^\phi(x_{N+1}) &\doteq h^\top \fmu[\rho(\phi)]+ \bet[\rho(\phi)] \sqrt{h^\top \covar[\rho(\phi)] h}.
  \end{align}
  Then, the optimal 
  uncertainty bound~\eqref{eq:inf_opt_infdim} 
  is given by
  \begin{align}
    \overline{f}_h(x_{N+1}) = \inf_{\phi \in \Rge^{n_{\mathrm{con}}}} \quad \overline{f}_h^\phi(x_{N+1}).
  \end{align}
  Moreover, the function $\overline{f}_h^\phi(x_{N+1})$ is invex.
\end{corollary}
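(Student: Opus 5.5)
The plan is to obtain the statement as a direct instance of \cref{thm:app_optimal_invex}, applied to the problem data already identified in the proof of \cref{thm:app_general_examples_recover_convex}. That corollary casts Problem~\eqref{eq:inf_opt_infdim} as Problem~\eqref{eq:uncertainty_bound_ocp_findim} with $m = n_{\mathrm{con}}+1$, cost vector $p$, matrix $A$, and constraint matrices $Q_1, Q_{j+1}$, and it verifies \cref{ass:app_noise,ass:app_strong_duality}. It also shows that the ingredients $\Mu[\lambda]$, $\Sig[\lambda]$, $\Gram[\lambda]$ reduce to $h^\top\fmu[\lambda]$, $\|h\|^2_{\covar[\lambda]}$ and $\gram[\lambda]$. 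What remains is to check \cref{ass:invexity_coordinate_trafo} for the given $\rho$, and then to evaluate $\tilde d_\alpha(\phi)$ from \cref{eq:app_optimal_invex_d_alpha} at $\lambda=\rho(\phi)$.

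\textbf{Step 1: verify \cref{ass:invexity_coordinate_trafo}.} I take $\Omega = \Rge^{n_{\mathrm{con}}}$ and define $\kappa_\lambda(\lambda) \doteq \lambda_0$ and $\phi_\lambda(\lambda) \doteq [(\lambda_0/\lambda_j)^{1/2}]_{j=1}^{n_{\mathrm{con}}}$. These maps are differentiable on $\Rge^{n_{\mathrm{con}}+1}$, they satisfy $\kappa_\lambda(\lambda)\rho(\phi_\lambda(\lambda)) = \lambda$, and $\rho$ maps $\Omega$ into $\Rge^{n_{\mathrm{con}}+1}$. The matrix $[\rho(\phi)\;\; \partial\rho/\partial\phi]$ is lower triangular. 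Its first row is $[1,0,\dots,0]$, and row $j+1$ carries $\phi_j^{-2}$ in the first column and $-2\phi_j^{-3}$ on the diagonal. Its determinant is therefore $\prod_j(-2\phi_j^{-3}) \neq 0$, so the matrix is invertible for all $\phi\in\Omega$.

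\textbf{Step 2: evaluate the objective at $\lambda=\rho(\phi)$.} Since $\lambda_0 = 1$, we have $K^{f,\rho(\phi)} = K^f$ by \cref{eq:app_kernel_lambda_ex_1}. Moreover $\sum_j \lambda_j P^w_j = \sum_j \phi_j^{-2}P^w_j = P^w_\sigma$ with $\sigma=\phi$. Hence $\gram[\rho(\phi)] = \hat K_\sigma$, $\fmu[\rho(\phi)] = \fmu$ and $\covar[\rho(\phi)] = \Sigma_\sigma(x_{N+1})$. In addition, $\Gam[\rho(\phi)]^2 = \Gamma_f^2 + \sum_j \phi_j^{-2}\Gamma_{w,j}^2$, so $\bet[\rho(\phi)] = \beta_\sigma$. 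Substituting the identities \cref{eq:app_example_1_thm_ingredients_mean,eq:app_example_1_thm_ingredients_cov} into \cref{eq:app_optimal_invex_d_alpha} gives $\tilde d_\alpha(\phi) = \overline{f}_h^\phi(x_{N+1})$. Both the equality $\overline f_h(x_{N+1}) = \inf_\phi \overline f^\phi_h(x_{N+1})$ and the invexity claim then follow verbatim from \cref{thm:app_optimal_invex}.

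\textbf{Main obstacle.} I expect the main difficulty to be the positive-definiteness of $k$, which \cref{thm:app_general_examples_recover_convex} used to invert $K^f_{1:N+1,1:N+1}$ in $\Sump[\lambda]$; for a merely semidefinite kernel $\Sump[\lambda]$ may be singular. I would first try to redo the identification using the feature factorization of \cref{thm:findim}. In that form one takes $\alpha = (\theta, w)$, so that $Q_1 = \mathrm{blkdiag}(I_r, 0)$, $A^\top = [C^\top\Phi_{1:N}\;\; I_N]$ and $p^\top = [h^\top\Phi_{N+1}\;\; 0]$. Then $\Sump[\lambda] = \mathrm{blkdiag}(\lambda_0 I_r, \sum_j\lambda_jP^w_j)$ is positive definite by \cref{ass:bounded_noise}, and the products $\Phi\Phi^\top$ reproduce the kernel matrices, so no inverse of a kernel matrix is needed. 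Should this not close cleanly, the fallback is to observe that the resulting expression coincides with \eqref{eq:optimal_bound_expression}. Its optimality and invexity for semidefinite kernels would then be taken directly from \cref{thm:optimal_bound}, whose proof already covers that case.
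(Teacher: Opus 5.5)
Your proposal is correct and follows the paper's proof. You check \cref{ass:invexity_coordinate_trafo} for $\rho$ (the paper just cites \cref{thm:app_trafo_thm1}, whose triangular-matrix argument you reproduce), reuse $m,p,A,Q_j,\Gamma_j$ from \cref{thm:app_general_examples_recover_convex}, and substitute $\lambda=\rho(\phi)$ into \cref{eq:app_optimal_invex_d_alpha} before invoking \cref{thm:app_optimal_invex}. Your treatment of positive definiteness also matches the paper, which assumes a positive definite kernel for the identification and defers the semidefinite case to \cref{thm:optimal_bound}. Your feature-based choice $\alpha=(\theta,w)$, $Q_1=\mathrm{blkdiag}(I_r,0)$ is a valid way to drop that assumption, since $Q_\lambda=\mathrm{blkdiag}(\lambda_0 I_r,\sum_j\lambda_j P^w_j)\succ 0$ and $\Phi_{1:N+1}\Phi_{1:N+1}^\top$ reproduces all kernel blocks without inverting $K^f_{1:N+1,1:N+1}$.
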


\begin{proof}
  By \cref{thm:app_trafo_thm1}, $\rho(\phi)$ satisfies \cref{ass:invexity_coordinate_trafo}.
  Using the definitions of $m, p, A, Q_j, \Gamma_j$ as in the proof of \cref{thm:app_general_examples_recover_convex},
  \cref{ass:app_noise,ass:app_strong_duality} are satisfied by construction.
  Hence, we can apply \cref{thm:app_optimal_invex} to determine the optimal solution to Problem~\eqref{eq:inf_opt_infdim} via \cref{eq:app_optimal_invex}.
  Using the definition of $\rho(\phi)$, 
  \cref{eq:app_example_1_thm_ingredients_mean,eq:app_example_1_thm_ingredients_cov} 
  read as 
  \begin{align*}
    p^\top \Mu[\rho(\phi)] &= h^\top \fmu[\rho(\phi)],  \\
    \| p \|_{\Sig[\rho(\phi)]}^2 &= \| h \|_{\covar[\rho(\phi)]}^2. 
  \end{align*}
  Inserting these expressions into \cref{eq:app_optimal_invex} results in 
  $\tilde{d}_\alpha(\phi) = \overline{f}_h^\phi(x_{N+1})$, 
  which is invex according to \cref{thm:app_optimal_invex}.
\end{proof}

\subsubsection{Special case: Component-wise RKHS-norm bounds}
\label{sec:app_general_examples_componentwise_bounds}

Next, we 
show how the results in \cref{thm:app_optimal_convex,thm:app_optimal_invex} apply to a generalized problem setup, 
where each component of the unknown multivariate function~$\ftr$ is modeled independently, with a separate RKHS-norm bound. 

The analog to \cref{ass:rkhs_norm_f} in this setting is as follows.
\begin{assumption}
  \label{ass:app_rkhs_norm_f_diag}
  Let 
  $\ftr \in \mathcal{H}_k$ be an element of the Reproducing Kernel Hilbert Space (RKHS) $\Hk$, defined by a given positive, diagonal kernel function \mbox{$\kf: \mathbb{R}^{n_x} \times \mathbb{R}^{n_x} \rightarrow \mathbb{R}^{n_f} \times \mathbb{R}^{n_f}$},
  \begin{align*}
    k(x, x') = \mathrm{diag}(\{ k_l(x,x') \}_{l=1}^{n_f}),
  \end{align*}
  with component-wise, positive definite kernel functions $k_l:\mathbb{R}^{n_x} \times \mathbb{R}^{n_x} \rightarrow \mathbb{R}^{}$.
  Furthermore, 
  let the norm of $\ftr$
  be bounded in each component, i.e.,
  $\| f_l \|_{\mathcal{H}_{k_l}} < \Gamma_{f,l}$,
  for known constants $\Gamma_{f,l} > 0$, $l \in \mathbb{I}_1^{n_f}$.
\end{assumption}
To simplify the upcoming derivations, we assumed the component-wise kernel functions to be positive definite.
For the measurements, we again consider the same partial measurements as in \cref{eq:data}, with jointly bounded noise realizations as in \cref{ass:bounded_noise}, inducing a coupling between the otherwise independent output dimensions of the latent function. 
Similar to Problem~\eqref{eq:inf_opt_infdim}, the corresponding optimization problem to determine the worst-case realization of the unknown function along an arbitrary direction $h \in \mathbb{R}^{n_f}$ now reads as
\begin{subequations}
  \label{eq:app_inf_opt_infdim_diag}
  \begin{align}
    \overline{f}_h(x_{N+1}) = \sup_{\substack{f \in \Hkf                                        \\ w \in \mathbb{R}^{N}}} \quad & h^\top f(x_{N+1}) \\
    \mathrm{s.t.} \quad 
    &
    c_i^\top f(x_i) + w_i = y_i, \> i \in \mathbb{I}_1^N, \label{eq:app_inf_opt_infdim_data_diag} \\
    & w^\top P^w_j w \leq \Gamma_{w,j}^2, \label{eq:app_inf_opt_infdim_noise_diag} 
    \> j \in \mathbb{I}_1^{n_{\mathrm{con}}}, \\
    & \| f_l \|_{\mathcal{H}_{k_l}}^2 \leq \Gamma_{f,l}^2, \> l \in \mathbb{I}_1^{n_f}. \label{eq:app_inf_opt_infdim_rkhs_diag}
  \end{align}
\end{subequations}
By applying the representer theorem analogous to~\cite[Lemma~A.2]{lahr_optimal_2025}, an equivalent, finite-dimensional problem formulation can be derived.
The main difference between Problem~\eqref{eq:app_inf_opt_infdim_diag} and Problem~\eqref{eq:inf_opt_infdim} is given by the diagonal structure of the kernel matrix and the separate RKHS-norm constraints~\cref{eq:app_inf_opt_infdim_rkhs_diag}. 
To this end, for \emph{this subsection}, we define the Gaussian process-based expressions as in~\cref{eq:app_gp_def_ex_1},
but 
with the modified kernel matrix
\begin{align}
  \label{eq:app_kernel_ex_2_diag}
   K^{f,\lambda}_{\mathbb{I},\mathbb{J}} \doteq \sum_{l=1}^{n_f} \lambda_l^{-1} K^{f,l}_{\mathbb{I},\mathbb{J}}
\end{align}
instead of~\cref{eq:app_kernel_lambda_ex_1}. Here, $K^{f,l}_{\mathbb{I},\mathbb{J}}$ 
denote the
``masked'' kernel matrices
formed by evaluations of the kernel~$k^l$ corresponding to output dimension~$l$, such that $K^f_{\mathbb{I},\mathbb{J}} = \sum_{l=1}^{n_f} K^{f,l}_{\mathbb{I},\mathbb{J}}$.
Note that, despite the interleaved structure, the matrix $K^f_{\mathbb{I},\mathbb{J}}$ is a permuted block-diagonal matrix, i.e., $(K^f_{\mathbb{I},\mathbb{J}})^{-1} = \sum_{l=1}^{n_f} (K^{f,l}_{\mathbb{I},\mathbb{J}})^{-1}$; this will be used to simplify the final expressions later.

Using the redefined kernel matrix in \cref{eq:app_kernel_ex_2_diag}, the 
optimal convex uncertainty bound in this case 
leads to the same optimal uncertainty bound as in~\cref{thm:app_general_examples_recover_convex}.
\begin{corollary}[Convex formulation for \cref{ass:app_rkhs_norm_f_diag}]
  \label{thm:app_general_examples_diag_convex}
  Let \cref{ass:bounded_noise,ass:app_rkhs_norm_f_diag} hold. Define
  \begin{align}
    \overline{f}^\lambda_h(x_{N+1}) \doteq h^\top \fmu[\lambda] + \frac{1}{2} \left( \| h \|_{\covar[\lambda]}^2 + \bet[\lambda]^2 \right).
    \label{eq:app_general_examples_diag_convex}
  \end{align}
  Then, the optimal uncertainty bound~\eqref{eq:app_inf_opt_infdim_diag} is given by
  \begin{align}
    \overline{f}_h(x_{N+1}) &= \inf_{\lambda \in \Rge^{n_f + n_{\mathrm{con}}}} \quad \overline{f}^\lambda_h(x_{N+1}).
  \end{align}
  Moreover, the function $\overline{f}^\lambda_h(x_{N+1})$ is convex.
\end{corollary}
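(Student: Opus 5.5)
The plan is to mirror the proof of \cref{thm:app_general_examples_recover_convex}: cast Problem~\eqref{eq:app_inf_opt_infdim_diag} as an instance of Problem~\eqref{eq:uncertainty_bound_ocp_findim}, invoke \cref{thm:app_optimal_convex}, and then simplify $d_\alpha(\lambda)$ into the Gaussian process-type expressions built from the modified kernel matrix~\eqref{eq:app_kernel_ex_2_diag}.

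\textbf{Finite-dimensional reformulation.} First, I would apply the representer theorem componentwise. Since $k$ is diagonal, each $f_l$ can be written as $f_l(\cdot)=\sum_{i=1}^{N+1} k_l(\cdot,x_i)\alpha^{f}_{i,l}$, analogous to \cite[Lemma~A.2]{lahr_optimal_2025}. Collect all coefficients in the interleaved vector $\alpha^f\in\mathbb{R}^{n_f(N+1)}$ and set $\alpha=[\alpha^f;\,w]$. Then the cost vector and the equality constraints coincide with \cref{eq:app_example_1_cost,eq:app_example_1_data}, because $K^f=\sum_l K^{f,l}$.

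\textbf{Constraint matrices.} There are now $m=n_f+n_{\mathrm{con}}$ quadratic constraints:
\begin{align*}
Q_l=\mathrm{blkdiag}(K^{f,l}_{1:N+1,1:N+1},0), \quad l\in\mathbb{I}_1^{n_f},
\end{align*}
with $\|\alpha\|_{Q_l}^2=\|f_l\|_{\mathcal{H}_{k_l}}^2$ and $\Gamma_l=\Gamma_{f,l}$, together with $Q_{n_f+j}=\mathrm{blkdiag}(0,P^w_j)$ and $\Gamma_{n_f+j}=\Gamma_{w,j}$.

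\textbf{Checking the assumptions.} For \cref{ass:app_noise}, positive definiteness of each $k_l$ makes $\sum_l K^{f,l}_{1:N+1,1:N+1}=K^f_{1:N+1,1:N+1}\succ0$. Together with $P^w\succ0$ this gives $\sum_j Q_j\succ0$. For \cref{ass:app_strong_duality}, I would use the componentwise minimum-norm interpolant of $\ftr$ as in \cref{thm:strong_duality}. It satisfies $\|\ftrint_l\|\le\|f_l\|<\Gamma_{f,l}$, and together with the true noise, which is strictly feasible by \cref{ass:bounded_noise}, it is a Slater point. \cref{thm:app_optimal_convex} then yields $\overline{f}_h=\inf_{\lambda\in\Rge^m}d_\alpha(\lambda)$, and convexity of $d_\alpha$ comes directly from that theorem.

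\textbf{Identifying $d_\alpha$ with \eqref{eq:app_general_examples_diag_convex}.} It remains to show that $d_\alpha(\lambda)$ equals the expression in \eqref{eq:app_general_examples_diag_convex}. The key computation, and the main obstacle, is the block
\begin{align*}
\Sump[\lambda]=\mathrm{blkdiag}\Big(\textstyle\sum_l\lambda_lK^{f,l}_{1:N+1,1:N+1},\ \sum_j\lambda_{n_f+j}P^w_j\Big).
\end{align*}
I would invert it by writing $K^{f,l}=\Pi_l K_l\Pi_l^\top$ with orthogonal coordinate-selection matrices $\Pi_l$. This gives
\begin{align*}
\Big(\sum_l\lambda_lK^{f,l}\Big)^{-1}=\sum_l\lambda_l^{-1}\Pi_lK_l^{-1}\Pi_l^\top .
\end{align*}
Consequently,
\begin{align*}
K^f_{\mathbb{I},1:N+1}\Big(\sum_l\lambda_lK^{f,l}\Big)^{-1}K^f_{1:N+1,\mathbb{J}}=\sum_l\lambda_l^{-1}K^{f,l}_{\mathbb{I},\mathbb{J}}=K^{f,\lambda}_{\mathbb{I},\mathbb{J}},
\end{align*}
since the cross terms between different $l$ vanish by orthogonality of the masks. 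With this identity, the remaining steps repeat the proof of \cref{thm:app_general_examples_recover_convex} verbatim:
\begin{itemize}
\item $\Gram[\lambda]=\gram[\lambda]$;
\item $p^\top\Mu[\lambda]=h^\top\fmu[\lambda]$;
\item $\|p\|^2_{\Sig[\lambda]}=\|h\|^2_{\covar[\lambda]}$;
\item $\Gam[\lambda]$ and $\|y\|_{\gram[\lambda]^{-1}}$ match the corresponding terms of $\bet[\lambda]$.
\end{itemize}
Substituting these into \eqref{eq:app_dual_lam_nuopt_pre_invex_mu_vis} yields \eqref{eq:app_general_examples_diag_convex} and completes the argument.
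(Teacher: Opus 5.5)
Your proposal is correct and follows the paper's proof essentially step for step. It uses the same lifted variable $\alpha=[\alpha^f;w]$, the same $m=n_f+n_{\mathrm{con}}$ masked constraint matrices $Q_l=\mathrm{blkdiag}(K^{f,l}_{1:N+1,1:N+1},0)$ alongside the noise blocks, an appeal to \cref{thm:app_optimal_convex}, and the same block-diagonal identity $K^f_{\mathbb{I},1:N+1}\bigl(\sum_l\lambda_lK^{f,l}_{1:N+1,1:N+1}\bigr)^{-1}K^f_{1:N+1,\mathbb{J}}=K^{f,\lambda}_{\mathbb{I},\mathbb{J}}$ to reduce $d_\alpha$ to the GP-type expression.

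Your explicit Slater point (componentwise minimum-norm interpolant plus the true noise) and your permutation-based inversion of $\Sump[\lambda]$ simply spell out what the paper calls ``analogous''. Note only that, exactly as in the paper, $K^f_{1:N+1,1:N+1}\succ 0$ tacitly requires the inputs $x_1,\dots,x_{N+1}$ to be pairwise distinct; otherwise, index the coefficients by distinct input locations.
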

\begin{proof}
  Define 
  \begin{align*}
    \alpha \doteq \begin{bmatrix}
      \alpha^f_1 & \ldots & \alpha^f_{N+1} & 
      w_1 & \ldots & w_N
    \end{bmatrix} \doteq \begin{bmatrix}
      \alpha^f 
      & w
    \end{bmatrix},
  \end{align*}
  where $\alpha^f_i = [ \alpha^f_{i,l} ]_{l=1}^{n_f}$, for all $i \in \mathbb{I}_1^{N+1}$.
  Then, the optimal solution to Problem~\eqref{eq:app_inf_opt_infdim_diag} is equal to the optimal solution to Problem~\eqref{eq:uncertainty_bound_ocp_findim} with 
  the following definitions.
  The cost vector $p^\top$ 
  and equality-constraint matrix $A^\top = [a_i^\top]_{i=1}^n$, $n = N$,
  are given by
  \cref{eq:app_example_1_cost,eq:app_example_1_data}, respectively.
  Second, the $m = n_{\mathrm{con}} + n_f$ constraint matrices are given by
  \begin{align*}
    Q_l &= \begin{bmatrix}
      K^{f,l}_{1:N+1,1:N+1} & 0 \\
      0 & 0
    \end{bmatrix},
    &
    Q_{j+1} = \begin{bmatrix}
      0 & 0 \\
      0 & P_j
    \end{bmatrix},
  \end{align*}
  where $K^{f,l}_{\mathbb{I},\mathbb{J}} = [ E_l E_l^\top K^f_{1:N+1,1:N+1} E_l E_l^\top ]_{\mathbb{I},\mathbb{J}}$
  is a ``masked'' Gram matrix,
  defined using
  selection matrices $E_l$ 
  that extract all components of the weights $\alpha^f$ corresponding to the $l$-th output dimension, i.e., $E_l^\top \alpha^f = [\alpha^f_{i,l}]_{i=1}^{N+1} \doteq \alpha^f_l$.
  This leads to $m = n_f + n_{\mathrm{con}}$ constraints~\eqref{eq:uncertainty_bound_ocp_findim_noise} with
  \begin{align*}
    \| \alpha \|_{Q_l}^2 &= 
    \| \alpha^f \|_{K^{f,l}_{1:N+1,1:N+1}}^2  
    \hat{=} \> \| f_l \|_{\mathcal{H}_{k^l}}^2,
    &
    \Gamma_l^2 &= \Gamma_{f,l}^2 \\
    \| \alpha \|_{Q_{j + n_f}}^2 &= 
    \| w \|_{P_j}^2, 
    &
    \Gamma_{j+n_f}^2 &= \Gamma_{w,j}^2,
  \end{align*}
  for all $j \in \mathbb{I}_1^{n_{\mathrm{con}}}$ and $l \in \mathbb{I}_1^{n_f}$.
  By construction, \cref{ass:app_noise} is satisfied;
  \cref{ass:app_strong_duality}
  is established 
  analogous to \cref{thm:strong_duality}.
  Thus, the optimal solution to Problem~\eqref{eq:inf_opt_infdim} is given by the convex dual formulation in \cref{thm:app_optimal_convex}. 
  The derivation of the 
  terms for the convex dual function~\cref{eq:app_dual_lam_nuopt_def_d_alpha}
  in this case
  is 
  analogous to the derivations preceding~\cref{eq:app_example_1_thm_ingredients_mean,eq:app_example_1_thm_ingredients_cov},
  with the additional step of using the block-diagonal structure of the kernel matrices to derive that 
  \begin{align*}
    & \left( \sum_{l=1}^{n_f} K^{f,l}_{\mathbb{I},1:N+1} \right) \left( \sum_{l=1}^{n_f} \lambda_l K^{f,l}_{1:N+1,1:N+1} \right)^{-1} \left( \sum_{l=1}^{n_f} K^{f,l}_{1:N+1,\mathbb{J}} \right) \\
    & =  \sum_{l=1}^{n_f} \lambda_l^{-1} K^{f,l}_{\mathbb{I},\mathbb{J}} = K^{f,\lambda}_{\mathbb{I},\mathbb{J}}.
  \end{align*}
  Finally,
  we obtain~\cref{eq:app_example_1_thm_ingredients_mean,eq:app_example_1_thm_ingredients_cov}
  with the kernel matrices given by~\cref{eq:app_kernel_ex_2_diag},
  concluding the proof. 
\end{proof}

Based on the convex formulation of the uncertainty bound, 
an invex formulation can again be derived by using an invexity-preserving coordinate transformation satisfying \cref{ass:invexity_coordinate_trafo}.
We first summarize
this result
in the following corollary;
afterwards, 
we show 
how \cref{ass:invexity_coordinate_trafo} can be used to derive alternative coordinate transformations to formulate the optimal uncertainty bound.
\begin{corollary}[Invex formulation for \cref{ass:app_rkhs_norm_f_diag}]
  Let \cref{ass:bounded_noise,ass:app_rkhs_norm_f_diag} hold.
  Let $\rho(\phi): \Omega \rightarrow \mathbb{R}^m$ satisfy \cref{ass:invexity_coordinate_trafo} and define
  \begin{align}
    \overline{f}_h^\phi(x_{N+1}) &\doteq h^\top \fmu[\rho(\phi)]+ \bet[\rho(\phi)] \sqrt{h^\top \covar[\rho(\phi)] h}.
  \end{align}
  Then, the optimal 
  uncertainty bound~\eqref{eq:inf_opt_infdim} 
  is given by
  \begin{align}
    \overline{f}_h(x_{N+1}) = \inf_{\phi \in \Omega} \quad \overline{f}_h^\phi(x_{N+1}).
  \end{align}
  Moreover, the function $\overline{f}_h^\phi(x_{N+1})$ is invex.
\end{corollary}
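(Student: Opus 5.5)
The plan is to mirror the proof of \cref{thm:app_optimal_invex_ex_1_recover}: reduce everything to the general invex result, \cref{thm:app_optimal_invex}, applied to the problem data constructed in the proof of \cref{thm:app_general_examples_diag_convex}.

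First, I reuse that construction. Take $\alpha = [\alpha^f, w]$, with $p$ and $A$ given by \cref{eq:app_example_1_cost,eq:app_example_1_data}. The $m = n_f + n_{\mathrm{con}}$ constraint matrices are the masked kernel blocks $Q_l$ together with the noise blocks built from $P^w_j$, and the bounds are $\Gamma_{f,l}$ and $\Gamma_{w,j}$. Problem~\eqref{eq:app_inf_opt_infdim_diag} is then equivalent to Problem~\eqref{eq:uncertainty_bound_ocp_findim}. The required assumptions hold as follows.
\begin{itemize}
\item \cref{ass:app_noise}: $\sum_l Q_l$ restricts to the positive definite $K^f_{1:N+1,1:N+1}$ on the $\alpha^f$ block, since each $k_l$ is positive definite. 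On the $w$ block, $\sum_j P^w_j \succ 0$ by \cref{ass:bounded_noise}.
\item \cref{ass:app_strong_duality}: this follows as in \cref{thm:strong_duality}. The component-wise minimum-norm interpolants of $\ftr_l$ satisfy $\|\cdot\|_{\mathcal{H}_{k_l}} \le \|\ftr_l\| < \Gamma_{f,l}$, and the true noise $w$ satisfies the strict inequalities in \cref{ass:bounded_noise}.
\item \cref{ass:invexity_coordinate_trafo}: this is assumed for $\rho$ in the statement.
\end{itemize}
Hence \cref{thm:app_optimal_invex} gives $\overline{f}_h(x_{N+1}) = \inf_{\phi\in\Omega} \tilde d_\alpha(\phi)$, and $\tilde d_\alpha$ is invex.

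Second, I identify $\tilde d_\alpha(\phi)$ with $\overline{f}_h^\phi(x_{N+1})$. The computation in the proof of \cref{thm:app_general_examples_diag_convex} shows, for every $\lambda \in \Rge^m$, that
\[
p^\top \Mu[\lambda] = h^\top \fmu[\lambda], \qquad \|p\|^2_{\Sig[\lambda]} = \|h\|^2_{\covar[\lambda]}, \qquad \Gram[\lambda] = \gram[\lambda].
\]
These use the diagonal kernel matrix \eqref{eq:app_kernel_ex_2_diag} and the block-diagonal inversion identity. The $\Gamma$ terms coincide by construction, so $\Beta[\lambda] = \bet[\lambda]$. Since these identities hold pointwise in $\lambda$, I substitute $\lambda = \rho(\phi) \in \Rge^m$, which is allowed because $\rho$ maps into $\Rge^m$. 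Inserting the results into \eqref{eq:app_optimal_invex_d_alpha} gives $\tilde d_\alpha(\phi) = h^\top \fmu[\rho(\phi)] + \bet[\rho(\phi)]\sqrt{h^\top\covar[\rho(\phi)]h} = \overline{f}_h^\phi(x_{N+1})$, and both claims follow.

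The main obstacle is bookkeeping rather than new analysis, in two places. The first is making sure the identity $\Gram[\lambda] = \gram[\lambda]$, with the $\lambda_l^{-1}$-weighted kernel, is stated for general $\lambda$ and not only at the convex optimum. For this I would recheck the Schur-type step $K^f_{\mathbb{I},\cdot}\big(\sum_l\lambda_l K^{f,l}\big)^{-1}K^f_{\cdot,\mathbb{J}} = \sum_l \lambda_l^{-1}K^{f,l}_{\mathbb{I},\mathbb{J}}$, using that the masked blocks act on disjoint coordinates. The second is checking that the ordering of the $\lambda$ components (first the $n_f$ kernel multipliers, then the noise multipliers) is consistent with the indexing of $\rho$. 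The statement refers to ``\eqref{eq:inf_opt_infdim}''; I would interpret this as Problem~\eqref{eq:app_inf_opt_infdim_diag}.
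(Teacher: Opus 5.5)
Your proposal is correct and is essentially the paper's proof. You instantiate \cref{thm:app_optimal_invex} with the data $p, A, Q_l, Q_{n_f+j}$ from the proof of \cref{thm:app_general_examples_diag_convex}, identify $p^\top m_{\rho(\phi)}$, $\|p\|^2_{S_{\rho(\phi)}}$ and $\beta_{\rho(\phi)}$ with their GP counterparts, and spell out the assumption checks that the paper only calls ``analogous''.

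One caveat, which the paper shares: you use positive definiteness of $K^f_{1:N+1,1:N+1}$ for \cref{ass:app_noise} and for the block-wise inversion. This tacitly requires the inputs $x_1,\dots,x_{N+1}$ to be pairwise distinct, not only each $k_l$ to be positive definite.
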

\begin{proof}
  Analogous to \cref{thm:app_optimal_invex_ex_1_recover},
  we can apply \cref{thm:app_optimal_invex} to determine the optimal solution to Problem~\eqref{eq:app_inf_opt_infdim_diag} via \cref{eq:app_optimal_invex}.
  \cref{eq:app_example_1_thm_ingredients_mean,eq:app_example_1_thm_ingredients_cov} 
  read as 
  \begin{align*}
    p^\top \Mu[\rho(\phi)] &= h^\top \fmu[\rho(\phi)],  \\
    \| p \|_{\Sig[\rho(\phi)]}^2 &= \| h \|_{\covar[\rho(\phi)]}^2. 
  \end{align*}
  Inserting these expressions into \cref{eq:app_optimal_invex} results in 
  $\tilde{d}_\alpha(\phi) = \overline{f}_h^\phi(x_{N+1})$, 
  which is invex according to \cref{thm:app_optimal_invex}.
\end{proof}

\subsubsection{Invexity-preserving coordinate transformations}
\label{sec:app_general_invex_trafos}

In this subsection, we provide two invexity-preserving transformations that can be used to derive an optimal invex uncertainty bound from the convex formulation.
First, we consider the coordinate transformation underlying the invexity proof \cref{thm:optimal_bound} in \cref{sec:app_invexity_proof}; afterwards, we provide a logarithmic invexity-preserving transformation that can be numerically favorable to handle the commonly large variations in the size of the dual variables.

\begin{lemma}[Invex transformation in \cref{thm:optimal_bound}]
  \label{thm:app_trafo_thm1}
  \cref{ass:invexity_coordinate_trafo} is satisfied by
  \begin{align*}
    \rho(\phi) &= \begin{bmatrix}
      1 & \phi_1^{-2} & \ldots & \phi_{n_{\mathrm{con}}}^{-2}
    \end{bmatrix}^\top,
  \end{align*}
  with $\Omega = \Rge^{m-1}$.
\end{lemma}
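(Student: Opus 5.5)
We verify the three requirements of \cref{ass:invexity_coordinate_trafo} directly for $\rho(\phi) = [1, \phi_1^{-2}, \ldots, \phi_{m-1}^{-2}]^\top$ with $\Omega = \Rge^{m-1}$ and $m = n_{\mathrm{con}}+1$.

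\emph{Step 1: well-definedness and differentiability.} On $\Omega$ each entry $\phi_j^{-2}$ is strictly positive and smooth. Hence $\rho:\Omega \rightarrow \Rge^m$ is differentiable.

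\emph{Step 2: inverse maps.} For $\lambda = [\lambda_0, \lambda_1, \ldots, \lambda_{m-1}]^\top \in \Rge^m$ we set
\begin{align*}
\kappa_\lambda(\lambda) \doteq \lambda_0, \qquad \phi_\lambda(\lambda) \doteq \big[ \sqrt{\lambda_0/\lambda_j} \big]_{j=1}^{m-1}.
\end{align*}
These are the inverse of the map $\sigma_j^2 = \lambda_0/\lambda_j$ used in \cref{sec:app_invexity_proof}. Both maps are differentiable on $\Rge^m$, where the square roots have strictly positive arguments, and they take values in $\Rge$ and $\Omega$ respectively. A direct check gives
\begin{align*}
\kappa_\lambda(\lambda)\rho(\phi_\lambda(\lambda)) = \lambda_0 \, [1, \lambda_1/\lambda_0, \ldots, \lambda_{m-1}/\lambda_0]^\top = \lambda .
\end{align*}
Strictly, the assumption requires $\kappa_\lambda$ and $\phi_\lambda$ to be defined on all of $\mathbb{R}^m$. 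Only their restriction to $\Rge^m$ enters the identity and the later proofs, so we either take any differentiable extension or read the domain as $\Rge^m$.

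\emph{Step 3: invertibility of $[\rho(\phi)\ \ \partial\rho/\partial\phi]$.} The first row of this matrix is $[1, 0, \ldots, 0]$, because the first entry of $\rho$ is constant. The lower-right $(m-1)\times(m-1)$ block is $\mathrm{diag}(-2\phi_1^{-3}, \ldots, -2\phi_{m-1}^{-3})$, and the first column below its top entry holds the entries $\phi_j^{-2}$. The matrix is therefore lower block-triangular. Its determinant is $1 \cdot \prod_{j=1}^{m-1} (-2\phi_j^{-3})$, which is nonzero for every $\phi \in \Rge^{m-1}$, so the matrix is invertible.

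The only point needing care is the domain mismatch noted in Step 2; the remaining steps are elementary computations.
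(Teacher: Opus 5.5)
Your proof is correct and is essentially the paper's argument: you use the same inverse maps $\kappa_\lambda(\lambda)=\lambda_0$ and $\phi_\lambda(\lambda)=[\sqrt{\lambda_0/\lambda_j}]_j$, and the same observation that $[\rho(\phi)\ \ \partial\rho/\partial\phi]$ is lower triangular with nonzero diagonal $1,-2\phi_1^{-3},\ldots,-2\phi_{n_{\mathrm{con}}}^{-3}$. One minor correction to your Step~2 remark: the ``differentiable extension'' option is not available, because continuity would force $\kappa_\lambda=0\notin\Rge$ where $\lambda_0=0$, and $\sqrt{\lambda_0/\lambda_j}$ blows up as $\lambda_j\to 0^+$; reading the domain as $\Rge^m$, as the paper implicitly does, is therefore the right resolution.
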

\begin{proof}
  Consider \cref{ass:invexity_coordinate_trafo} and 
  define
  the 
  coordinate transformation 
  $\lambda = \psi(\kappa_\lambda(\lambda), \phi_\lambda(\lambda)) = \kappa_\lambda(\lambda) \rho(\phi_\lambda(\lambda))$, with
  \begin{align}
    \kappa_\lambda(\lambda) &= \lambda_0, \\
    \phi_\lambda(\lambda) &= \begin{bmatrix}
      \sqrt{\frac{\lambda_0}{\lambda_1}} & \ldots & \sqrt{\frac{\lambda_0}{\lambda_{n_{\mathrm{con}}}}}
    \end{bmatrix}^\top.
  \end{align}
  Since the triangular matrix 
  \begin{align*}
    \begin{bmatrix}
      \rho(\phi) & \frac{\partial \rho}{\partial \phi}
    \end{bmatrix} 
    &=
    \begin{bmatrix}
      1 & & & \\
      \phi_1^{-2} & -2 \phi_1^{-3} & & \\
      \vdots & & \ddots & \\
      \phi_{n_{\mathrm{con}}}^{-2} & & & -2 \phi_{n_{\mathrm{con}}}^{-3}
    \end{bmatrix}
  \end{align*}
  is invertible for all $\phi \in \Omega = \Rge^{m-1}$, \cref{ass:invexity_coordinate_trafo} is satisfied.
  
\end{proof}

\begin{lemma}[Log-transformation]
  \label{lem:app_trafo_log}
  \cref{ass:invexity_coordinate_trafo} is satisfied by
  \begin{align*}
    \rho(\phi) &= \begin{bmatrix}
      e^{\phi_1} & \cdots & e^{\phi_{m-1}} & e^{-\sum_{j=1}^{m-1} \phi_j}
    \end{bmatrix}^\top,    
  \end{align*}
  with 
  $\Omega = \mathbb{R}^{m-1}$.
\end{lemma}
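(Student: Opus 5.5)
We check the requirements of \cref{ass:invexity_coordinate_trafo} one by one, in the same way as in the proof of \cref{thm:app_trafo_thm1}. First we construct the inverse maps $\kappa_\lambda$ and $\phi_\lambda$. Because the product of the components of $\rho(\phi)$ is $e^{0}=1$, the scale $\kappa$ is fixed by the geometric mean. We therefore set
\begin{align*}
  \kappa_\lambda(\lambda) \doteq \Big( \prod_{j=1}^m \lambda_j \Big)^{1/m},
  \qquad
  \phi_\lambda(\lambda) \doteq \big[ \log(\lambda_j / \kappa_\lambda(\lambda)) \big]_{j=1}^{m-1}.
\end{align*}
For $\lambda \in \Rge^m$ both maps are differentiable, $\kappa_\lambda(\lambda) > 0$, and $\rho$ takes values in $\Rge^m$. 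The identity $\kappa_\lambda(\lambda)\rho(\phi_\lambda(\lambda)) = \lambda$ is immediate for the first $m-1$ components. For the last component we compute
\begin{align*}
  \kappa\, e^{-\sum_{j<m} \log(\lambda_j/\kappa)} = \kappa^{m} \Big/ \prod_{j<m} \lambda_j = \lambda_m,
\end{align*}
where the last equality uses $\kappa^m = \prod_{j=1}^m \lambda_j$.

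The remaining step, and the only one that needs care, is to show that $M(\phi) \doteq \begin{bmatrix} \rho(\phi) & \frac{\partial \rho}{\partial \phi} \end{bmatrix}$ is invertible for every $\phi \in \mathbb{R}^{m-1}$. We write $D \doteq \mathrm{diag}(\rho(\phi))$, which is invertible because all entries of $\rho(\phi)$ are positive. Then $\frac{\partial \rho}{\partial \phi} = D B$ with
\begin{align*}
  B \doteq \begin{bmatrix} I_{m-1} \\ -\mathds{1}_{m-1}^\top \end{bmatrix},
\end{align*}
and also $\rho(\phi) = D \mathds{1}_m$. Hence $M(\phi) = D \begin{bmatrix} \mathds{1}_m & B \end{bmatrix}$, and it suffices to show that $\begin{bmatrix} \mathds{1}_m & B \end{bmatrix}$ is nonsingular.

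Suppose $a\mathds{1}_m + B b = 0$ for some scalar $a$ and vector $b \in \mathbb{R}^{m-1}$. The first $m-1$ rows give $b = -a \mathds{1}_{m-1}$. The last row then gives $a + (m-1)a = ma = 0$, so $a = 0$ and consequently $b = 0$. As an alternative check, one can take the inner product with $\mathds{1}_m$: the columns of $B$ are orthogonal to $\mathds{1}_m$, so $m a = 0$, and $B$ has full column rank. Either way, $M(\phi)$ is invertible for all $\phi \in \Omega = \mathbb{R}^{m-1}$, which completes the verification of \cref{ass:invexity_coordinate_trafo}.
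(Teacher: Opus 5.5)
Your proof is correct and follows essentially the same route as the paper: the same geometric-mean choice of $\kappa_\lambda$ and $\phi_\lambda$, the same check of $\kappa_\lambda(\lambda)\rho(\phi_\lambda(\lambda))=\lambda$, and the same reduction of $\begin{bmatrix}\rho(\phi) & \frac{\partial\rho}{\partial\phi}\end{bmatrix}$ by factoring out (equivalently, dividing each row by) $\rho_j(\phi)>0$. The only difference is cosmetic: you show the remaining constant matrix is nonsingular with a kernel argument, whereas the paper adds the first $m-1$ rows to the last row to reach a triangular matrix with diagonal entries $m,1,\ldots,1$.
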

\begin{proof}
  Consider \cref{ass:invexity_coordinate_trafo} and define 
  \begin{align*}
    \kappa_\lambda(\lambda) &= \left( \prod_{j=1}^{m} \lambda_j \right)^{1/m} = \exp \left( {\frac{1}{m}\sum_{k=1}^{m} \ln \lambda_k} \right), \\    
    [\phi_\lambda(\lambda)]_j &= \ln \lambda_j - \frac{1}{m}\sum_{k=1}^{m} \ln \lambda_k,
  \end{align*}
  to 
  obtain a transformation 
  \begin{align*}
    & \kappa_\lambda(\lambda) \rho(\phi_\lambda(\lambda)) = 
    \kappa_\lambda(\lambda) \begin{bmatrix}
      \frac{\lambda_1}{\kappa_\lambda(\lambda)} & \ldots & \frac{\lambda_{m-1}}{\kappa_\lambda(\lambda)} & \frac{\lambda_{m}}{\kappa_\lambda(\lambda)}
    \end{bmatrix} = \lambda.
  \end{align*}
  The expression for the last component $\rho_m(\phi_\lambda(\lambda))$ is established via the equality 
  $-\sum_{j=1}^{m-1} [\phi_\lambda(\lambda)]_j = - \sum_{j=1}^{m-1} \ln \lambda_j + \frac{m-1}{m} \sum_{j=1}^{m} \ln \lambda_j = \ln \lambda_m - \frac{1}{m} \sum_{j=1}^{m} \ln \lambda_j$.
  For all $\phi \in \mathbb{R}^{m-1} = \Omega$,
  since $\rho_j(\phi) = e^{\phi_j} > 0$ , we have $\rho(\phi) \in \Rge^m$.
  It remains to show that the matrix 
  \begin{align*}
    & \begin{bmatrix}
      \rho(\phi) & \frac{\partial \rho}{\partial \phi}
    \end{bmatrix} 
    = \\
    & \begin{bmatrix}
      e^{\phi_1} & e^{\phi_1}  & & \\
      \vdots & & \ddots & \\
      e^{\phi_{m-1}} & & & e^{\phi_{m-1}} \\
      e^{-\sum_{j=1}^{m-1} \phi_j} & -e^{-\sum_{j=1}^{m-1} \phi_j} & \ldots & -e^{-\sum_{j=1}^{m-1} \phi_j}
    \end{bmatrix} 
  \end{align*}
  is non-singular.
  This can be seen 
  by 
  transforming it into the triangular matrix
  \begin{align*}
    \begin{bmatrix}
      m & & & \\
      1 & 1 & & \\
      \vdots & & \ddots & \\
      1 & & & 1
    \end{bmatrix} 
  \end{align*}
  using elementary row operations (divide the $j$-th row by $\rho_j(\phi) > 0$ for all $j \in \mathbb{I}_1^m$, add the $m-1$ top rows to the last row, swap the last row into the top row).
  This 
  shows that \cref{ass:invexity_coordinate_trafo} is satisfied.
\end{proof}

\vfill

\fi

\pagebreak

\ifnotes

\subsection{Results for other data set sizes}

\begin{table}[h!]
  \centering
  \caption{Comparison of suboptimality and computation times for the quadrotor example with $n_{\mathrm{data}} = 10$ training points.}
  \begin{tabularx}{\columnwidth}{l|RRR|SSS}
  \toprule
  Method & \multicolumn{3}{c|}{Suboptimality} & \multicolumn{3}{c}{Time [s]} \\
  \midrule
  CVX-full (e) &   0.00 &   0.00 &   0.00 &   0.01 &   0.01 &   0.07 \\
  CVX-full (p) &   0.07 &   1.54 &   4.06 &   0.01 &   0.01 &   0.06 \\
  \cite[Alg.~1]{scharnhorst_robust_2023} (p) &   0.07 &   1.54 &   4.06 &   0.00 &   0.00 &   0.02 \\
  \cite[Theorem~2]{reed_error_2025} (p) &   0.66 &   4.36 &  16.79 &   0.01 &   0.01 &   0.02 \\
  Dual-GD (e) &   0.00 &   0.00 &   0.12 &   0.23 &   0.43 &   1.00 \\
  Dual-GD (p) &   0.08 &   1.54 &   4.06 &   0.01 &   0.32 &   1.15 \\
  \bottomrule
  \end{tabularx}
  \label{tab:quadrotor_10}
\end{table}

\begin{table}[h!]
  \centering
  \caption{Comparison of suboptimality and computation times for the quadrotor example with $n_{\mathrm{data}} = 100$ training points.}
  \begin{tabularx}{\columnwidth}{l|RRR|SSS}
  \toprule
  Method & \multicolumn{3}{c|}{Suboptimality} & \multicolumn{3}{c}{Time [s]} \\
  \midrule
  CVX-full (e) &   0.00 &   0.00 &   0.00 &   0.56 &   0.64 &   0.84 \\
  CVX-full (p) &   0.06 &   1.80 &   4.43 &   0.17 &   0.19 &   0.29 \\
  \cite[Alg.~1]{scharnhorst_robust_2023} (p) &   0.06 &   1.80 &   4.43 &   0.00 &   0.02 &   0.11 \\
  \cite[Theorem~2]{reed_error_2025} (p) &   2.45 &   5.43 &  10.76 &   0.01 &   0.01 &   0.09 \\
  Dual-GD (e) &   0.00 &   0.04 &   0.98 &   0.35 &   1.47 &   1.88 \\
  Dual-GD (p) &   0.08 &   1.85 &   4.49 &   0.24 &   0.76 &   1.55 \\
  \bottomrule
  \end{tabularx}
  \label{tab:quadrotor_100}
\end{table}

\subsection{Alternative proof without relaxed formulations}

Due to strong duality (cf.~\cref{thm:strong_duality}),
the optimal solution to~\eqref{eq:inf_opt_findim} is given as the solution of the dual problem:
\begin{align}
\underline{f}_h(x_{N+1}) = \inf_{\lambda_0, \ldots, \lambda_{n_{\mathrm{con}}} \geq 0} \quad & d(\lambda_{0:n_{\mathrm{con}}}) 
\end{align}
with the 
convex 
dual function
\begin{align}
d(\lambda_{0:n_{\mathrm{con}}}) = \sup_{\theta} \quad \mathcal{L}(\theta, \lambda_{0:n_{\mathrm{con}}})
\end{align}
and the Lagrangian given as
\begin{align}
\begin{aligned}
    \mathcal{L}(\theta, \lambda_{0:n_{\mathrm{con}}}) =& \> h^\top \Phi_{N+1} \theta - \frac{\lambda_0}{2} \left( \| \theta \|_2^2 - \Gamma_f^2 \right) \\
    & \hspace{-4ex} - \sum_{j=1}^{n_{\mathrm{con}}} \frac{\lambda_j}{2}
    \left(
    \| y - C^\top \Phi_{1:N} \theta \|_{P^w_j}^2 - \Gamma_{w,j}^2
    \right) \\
    =& \> q^\top \theta - \sum_{j=0}^{n_{\mathrm{con}}} \frac{1}{2} \left( \| y_j - A_j^\top \theta \|_{P^\lambda_j}^2 - \Gamma_j^2 \right)
\end{aligned}
\end{align}
where we redefined $\lambda_j \leftarrow \frac{\lambda_j}{2}$ (without loss of generality, as constraints can be scaled) and used the additional abbreviations 
\begin{align}
  q^\top &= h^\top \Phi_{N+1} \\
  A_0^\top &= \begin{bmatrix}
    0_{N-r} & I_r
  \end{bmatrix}^\top \\
  A_j^\top &= A^\top = C^\top \Phi_{1:N}, \\
  P^\lambda_0 &=  \lambda_0 I_N \\
  P^\lambda_j &= \lambda_j P_j^w \\
  y_0 &= 0_N \\
  y_j &= y \\
  \Gamma_0 &= \Gamma_f \\
  \Gamma_j &= \Gamma_{w,j}
\end{align}
Stationarity condition:
\begin{align}
  0 &= \nabla_\theta \mathcal{L} \\
  &= q + \sum_{j=0}^{n_{\mathrm{con}}} A_j P^\lambda_j \left( y_j - A_j^\top \theta \right) \\
  \Leftrightarrow \theta^\star &= \left( \sum_{j=0}^{n_\mathrm{con}} A_j P^\lambda_j A_j^\top \right)^{-1} \left( \sum_{j=0}^{n_\mathrm{con}} A_j P^\lambda_j y_j + q \right) \\
  &= \left( \lambda_0 I_r + A N_\lambda A^\top \right)^{-1} \left( A N_\lambda y + q \right) \\
  &= \Gram[\lambda]^{-1} \left( p_\lambda + q \right)
\end{align}
where 
\begin{align}
  N_\lambda &= \sum_{j=1}^{n_{\mathrm{con}}} P^\lambda_j = \sum_{j=1}^{n_{\mathrm{con}}} \lambda_j P^w_j \\
  M_\lambda &= \lambda_0 I_r + A N_\lambda A^\top = \lambda_0 I_r + \sum_{j=1}^{n_{\mathrm{con}}} \lambda_j A P^w_j A^\top \\
  p_\lambda &= A N_\lambda y
\end{align}
Find dual function by insertng $\theta^\star$ into Lagrangian:
\begin{figure*}[ht!]
  \begin{align}
    d(\lambda_{0:{n_\mathrm{con}}}) &= \mathcal{L}(\theta^\star, \lambda_{0:{n_\mathrm{con}}}) \\
    &= q^\top \Gram[\lambda]^{-1} \left( p_\lambda + q \right) - \sum_{j=0}^{n_{\mathrm{con}}} \frac{1}{2} \left( \| y_j - A_j^\top \Gram[\lambda]^{-1} \left( p_\lambda + q \right) \|_{P^\lambda_j}^2 - \Gamma_j^2 \right) \\
    &= q^\top \Gram[\lambda]^{-1} \left( p_\lambda + q \right) - \frac{1}{2} \left( \| y \|_{N_\lambda}^2 - 2 \sum_{j=0}^{n_{\mathrm{con}}} y_j^\top P_j^\lambda A_j^\top \Gram[\lambda]^{-1} (p_\lambda + q) + \| p_\lambda + q \|_{\Gram[\lambda]^{-1}}^2 - \sum_{j=0}^{n_{\mathrm{con}}} \lambda_j \Gamma_j^2 \right) \\
    &= q^\top \Gram[\lambda]^{-1} \left( p_\lambda + q \right) - \frac{1}{2} \left( \| y \|_{N_\lambda}^2 - 2 y^\top  N_\lambda A^\top \Gram[\lambda]^{-1} (p_\lambda + q) + \| p_\lambda + q \|_{\Gram[\lambda]^{-1}}^2 - \Gam[\lambda]^2 \right) \\
    &= q^\top \Gram[\lambda]^{-1} A N_\lambda y + \| q \|_{\Gram[\lambda]^{-1}}^2 + y^\top  N_\lambda A^\top \Gram[\lambda]^{-1} (A N_\lambda y + q) - \frac{1}{2} \| A N_\lambda y + q \|_{\Gram[\lambda]^{-1}}^2 - \frac{1}{2} \| y \|_{N_\lambda}^2 + \frac{1}{2} \Gam[\lambda]^2 \\
    &= \| q \|_{\Gram[\lambda]^{-1}}^2 + 2 q^\top \Gram[\lambda]^{-1} A N_\lambda y + y^\top N_\lambda A^\top \Gram[\lambda]^{-1} A N_\lambda y - \frac{1}{2} \| A N_\lambda y + q \|_{\Gram[\lambda]^{-1}}^2 - \frac{1}{2} \| y \|_{N_\lambda}^2 + \frac{1}{2} \Gam[\lambda]^2 \\
    &= \frac{1}{2} \| A N_\lambda y + q \|_{\Gram[\lambda]^{-1}}^2 - \frac{1}{2} \| y \|_{N_\lambda}^2 + \frac{1}{2} \Gam[\lambda]^2 \\
    &= \frac{1}{2} \| q \|_{\Gram[\lambda]^{-1}}^2 - \frac{1}{2} \| y \|_{N_\lambda - N_\lambda A^\top \Gram[\lambda]^{-1} A N_\lambda}^2 + \frac{1}{2} \Gam[\lambda]^2 + q^\top \Gram[\lambda]^{-1} A N_\lambda y \\
    \intertext{First, try to keep convex:}
    d(\lambda_{0:{n_\mathrm{con}}}) &= \frac{1}{2} \| q \|_{\left(\lambda_0 I_r + A N_\lambda A^\top \right)^{-1}}^2 - \frac{1}{2} \| y \|_{\left(\lambda_0^{-1} A^\top A + N_\lambda^{-1} \right)^{-1}}^2 + \frac{1}{2} \Gam[\lambda]^2 + q^\top A \lambda_0^{-1} \left(\lambda_0^{-1} A^\top A + N_\lambda^{-1} \right)^{-1} y \\
    &= \frac{1}{2} \left[ \| q \|_{\lambda_0^{-1} I_r - \lambda_0^{-1} A \left(\lambda_0^{-1} A^\top A + N_\lambda^{-1} \right)^{-1} A^\top \lambda_0^{-1}}^2 - \| y \|_{\left(\lambda_0^{-1} A^\top A + N_\lambda^{-1} \right)^{-1}}^2 + \Gam[\lambda]^2 \right] + q^\top A \lambda_0^{-1} \left(\lambda_0^{-1} A^\top A + N_\lambda^{-1} \right)^{-1} y \\
    &= \frac{1}{2} \left[ \lambda_0^{-1} \| q \|_{I_r -  A \left(A^\top A + \lambda_0 N_\lambda^{-1} \right)^{-1} A^\top}^2 - \lambda_0 \| y \|_{\left(A^\top A + \lambda_0 N_\lambda^{-1} \right)^{-1}}^2 + \Gam[\lambda]^2  \right] + q^\top A \left(A^\top A + \lambda_0 N_\lambda^{-1} \right)^{-1} y \\
    &= \frac{1}{2} \left[ \lambda_0^{-1} \| h \|_{\Sigma_{\lambda/\lambda_0}(x_{N+1})}^2 - \lambda_0 \| y \|_{\hat{K}_{\lambda/\lambda_0}^{-1}}^2 + \lambda_0 \left( \Gamma_f^2 + \sum_{j=1}^{n_{\mathrm{con}}} \frac{\lambda_j}{\lambda_0} \Gamma_{w,j}^2 \right) \right] + h^\top \mu^f_{\lambda/\lambda_0}(x_{N+1}) \\
    \intertext{At this point we can decide to trade convexity for pretty formulas and reparametrize the dual function in terms of with the following diffeomorphism:} 
    \begin{bmatrix}
      \lambda_0 \\
      \sigma_1 \\
      \vdots \\
      \sigma_{n_{\mathrm{con}}}
    \end{bmatrix}
    &= 
    \begin{bmatrix}
      \lambda_0 \\
      \sqrt{\frac{\lambda_0}{\lambda_j}} \\
      \vdots \\
      \sqrt{\frac{\lambda_0}{\lambda_{n_{\mathrm{con}}}}}
    \end{bmatrix}
    \qquad 
    \text{or vice versa:}
    \quad
    \begin{bmatrix}
      \lambda_0 \\
      \lambda_1 \\
      \vdots \\
      \lambda_{n_{\mathrm{con}}}
    \end{bmatrix}
    = 
    \begin{bmatrix}
      \lambda_0 \\
      \sigma_1^{-2} \lambda_0 \\
      \vdots \\
      \sigma_{\mathrm{con}}^{-2} \lambda_0 \\
    \end{bmatrix}
    \intertext{This allows to solve for $\lambda_0$ in the reparametrized coordinates: Since $\Gram[\lambda]^{-1} = \left( \lambda_0 I_r + A N_\lambda A^\top \right)^{-1} = \lambda_0^{-1} \left( I_r + A N_\sigma A^\top \right) = \lambda_0^{-1} M_\sigma^{-1}$, with $N_\sigma = \lambda_0^{-1} N_\lambda = \sum_{i=1}^{n_{\mathrm{con}}} \frac{\lambda_j}{\lambda_0} P_j^w$, and $\Gam[\lambda]^2 = \lambda_0 \left( \Gamma_f^2 + \sum_{i=1}^{n_{\mathrm{con}}} \frac{\lambda_j}{\lambda_0} \Gamma_{w,j} \right) = \lambda_0 \Gamma_\sigma^2$ we get} 
    d(\lambda_{0:{n_\mathrm{con}}}) &= \frac{1}{2} \left[ \lambda_0^{-1} \| q \|_{M_\sigma^{-1}}^2 + \lambda_0 \left( \Gamma_\sigma^2 - \| y \|_{(A^\top A + (N_\sigma)^{-1})^{-1}}^2 \right) \right]  + y^\top N_\sigma A^\top M_\sigma^{-1} q
    \intertext{Solve for $\nabla_{\lambda_0} d = 0$:}
    0 &= -\lambda_0^{-2} \| q \|_{M_\sigma^{-1}}^2 + \left( \Gamma_\sigma^2 - \| y \|_{(A^\top A + (N_\sigma)^{-1})^{-1}}^2 \right) \\
    \Leftrightarrow \lambda_0^2 &= \frac{\| q \|_{M_\sigma^{-1}}^2}{\Gamma_\sigma^2 - \| y \|_{(A^\top A + (N_\sigma)^{-1})^{-1}}^2}
    \intertext{Insert back into dual function:}
    d(\sigma) &= \| q \|_{M_\sigma^{-1}} \sqrt{\Gamma_\sigma^2 - \| y \|_{(A^\top A + (N_\sigma)^{-1})^{-1}}^2} + y^\top N_\sigma A^\top M_\sigma^{-1} q
  \end{align}
  
\end{figure*}

\begin{figure*}[ht!]
  Look at dual function again, convex:
  \begin{align*}
    d(\lambda_{0:n_{\mathrm{con}}}) &= \frac{1}{2} \| q \|_{\Gram[\lambda]^{-1}}^2 - \frac{1}{2} \| y \|_{N_\lambda - N_\lambda A^\top \Gram[\lambda]^{-1} A N_\lambda}^2 + \frac{1}{2} \Gam[\lambda]^2 + q^\top \Gram[\lambda]^{-1} A N_\lambda y \\
    &= \frac{1}{2} \| A N_\lambda y + q \|_{\Gram[\lambda]^{-1}}^2 - \frac{1}{2} \| y \|_{N_\lambda}^2 + \frac{1}{2} \Gam[\lambda]^2 \\
    &= \frac{1}{2} \left[ \left\| \Phi_{1:N}^\top C N_\lambda y + \Phi_{N+1}^\top h \right\|_{\Gram[\lambda]^{-1}}^2 + \left( \sum_{j=0}^{n_{\mathrm{con}}} \lambda_j \Gamma_j^2 - \| y \|_{N_\lambda}^2 \right)  \right] \\
  \end{align*}
  Insert expressions:

\end{figure*}

\subsection{Scalar kernel for multivariate measurements}

Augment the input space such that
\begin{align}
  x_k^c \doteq (x, k),
\end{align}
where $k = i$ for all training input-output pairs $(y_i, x^c_{i})$, and $k = N+1$ for an arbitrary test input location $x^c_{ N+1} \doteq (x_{N+1}, N+1)$.
Define 
\begin{align}
  k^{f,\mathrm{aug}}(x^c_{i}, x^c_{j}) \doteq c_i^\top k(x_i, x_j) c_j,
\end{align}
where $c_{N+1} \doteq h$.
Alternatively, one can also define 
\begin{align}
  k^{f,\mathrm{aug}}(x^c_{i}, x^c_{j}) \doteq C_{:,i}^\top K^f_{1:N+1,1:N+1} C_{:,j}
\end{align}
for measurement models linking different input locations,
where $C_{:,N+1} = \begin{bmatrix}
  0_{N n_f \times 1} & h^\top
\end{bmatrix}^\top$.

With a similar kernel construction, we can define a kernel for the measurement noise $w$.
Let 
\begin{align}
  k^{w,\mathrm{aug}}(x^c_i, x^c_j) \doteq e_i^\top K_\sigma^w e_j,
\end{align}
where $e_i$ is the $i$-th unit vector for training points $i \in \mathbb{I}_1^N$ and the zero vector for test points, $e_{N+1} \doteq 0$.

The kernel $k^{f,\mathrm{aug}}$ models the covariance between the $N+1$ latent scalar functions $c_i^\top f(x)$, $i = 1, \ldots, N+1$, while the kernel $k^{w,\mathrm{aug}}$ models the covariance between the scalar noise function~$w(x^c_i) \doteq w_i$.
The functions $c_i^\top f(x)$, $i = 1, \ldots, N$ are thus directly measured at their respective input locations, i.e.,
we have direct measurements 
\begin{align}
  y_i = c_i^\top f(x^c_i) + w(x^c_i), \> i=1,\ldots, N.
\end{align}

The GP posterior mean and covariance under this measurement model and correlated Gaussian measurement noise $w \sim \mathcal{N}(0, K_\sigma^w)$ is
\begin{align}
  \mu(x_{N+1}) &= K^C_{N+1, 1:N} \left( K^C_{1:N, 1:N} + K_\sigma^w \right)^{-1} y \\
  \Sigma(x_{N+1}) &= K^C_{N+1,N+1} - K^C_{N+1, 1:N} \left( K^C_{1:N, 1:N} + K_\sigma^w \right)^{-1} K^C_{1:N, N+1}
\end{align}
which is equivalent to \cref{eq:gp_definitions}.

Now, consider the RKHS norm of the minimum-norm interpolant $g \in \Hks$ of the data set in the sum-of-kernels $\ks \doteq k^{f,\mathrm{aug}} + k^{w,\mathrm{aug}}$, associated with the RKHS $\Hks$.
It holds that
\begin{align}
  \| g \|_{\Hks}^2 &= \begin{aligned}[t]
    \min_{\substack{f \in \mathcal{H}_{k^{f,\mathrm{aug}}} \\ w \in \mathcal{H}_{k^{w,\mathrm{aug}}}}} \quad & \| f \|^2_{\mathcal{H}_{k^{f,\mathrm{aug}}}} + \| w \|^2_{\mathcal{H}_{k^{w,\mathrm{aug}}}} \\
    \mathrm{s.t.} \quad & c_i^\top f(x^c_i) + w(x^c_i) = y_i
  \end{aligned} \\
  &= \begin{aligned}[t]
    \min_{\substack{c_f \in \mathbb{R}^{N} \\ c_w \in \mathbb{R}^{N}}} \quad & \| c_f \|_{(K^f_{1:N,1:N})^{-1}}^2 + \| c_w \|_{P_\sigma^w}^2 \\
    \mathrm{s.t.} \quad & C^\top c_f + c_w = y
  \end{aligned} \\
  &= \| y \|_{\hat{K}_\sigma^{-1}}^2,
\end{align}
where the first equality follows from the definition of the RKHS norm for the sum of kernels~\cite[Chapter~6]{aronszajn_theory_1950}, the second equality from the representer theorem~\cite{kimeldorf_results_1971,goos_generalized_2001} and the third equality from the solution of the convex quadratic program.

\subsection{Alternative invexity proof}

Recap invexity:
\begin{definition}[Invexity, cf.~\cite{mishra_invexity_2008}, Sec.~2.2]
  Let $\Omega \subseteq \mathbb{R}^{n}$ be an open set. 
  The differentiable function $f: \Omega \rightarrow \mathbb{R}^{}$ is invex if there exists a vector function $\eta: \Omega \times \Omega \rightarrow \mathbb{R}^{n}$ such that
  \begin{align}
    f(x) - f(y) \geq \eta(x,y)^\top \nabla f(y), \quad \forall x,y \in \Omega.
  \end{align}
\end{definition}

\begin{lemma}[\cite{craven1981duality}, cf.~\cite{mishra_invexity_2008}, p.~12]
  \label{thm:invexity_invertible_jacobian}
  Let $g: \mathbb{R}^{m} \rightarrow \mathbb{R}^{}$ be differentiable and convex and $h: \mathbb{R}^{n} \rightarrow \mathbb{R}^{m}$, $n \geq m$, be differentiable and the $\nabla h$ be of rank~$m$. Then, $f = g \circ h$ is invex.
\end{lemma}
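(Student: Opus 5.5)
The plan is to construct the kernel function $\eta$ in the definition of invexity explicitly. The idea is to pull the convexity inequality for $g$, which holds in the image space $\mathbb{R}^m$, back through a right inverse of the Jacobian of $h$. Throughout, write $J(y) \in \mathbb{R}^{m \times n}$ for the Jacobian of $h$ at $y$, i.e., the matrix denoted $\nabla h$ in the statement, taken in the orientation in which it has rank $m$. By the chain rule, the gradient of $f = g \circ h$ is
\begin{align*}
  \nabla f(y) = J(y)^\top \nabla g(h(y)).
\end{align*}

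First, I will use differentiability and convexity of $g$ through the first-order characterization $g(u) - g(v) \geq \nabla g(v)^\top (u - v)$. Evaluated at $u = h(x)$ and $v = h(y)$, this gives, for all $x, y$,
\begin{align*}
  f(x) - f(y) \geq \nabla g(h(y))^\top \big( h(x) - h(y) \big).
\end{align*}

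Second, I will exploit the rank condition. Since $J(y)$ has full row rank $m \leq n$, the matrix $J(y) J(y)^\top \in \mathbb{R}^{m \times m}$ is invertible. Hence $J(y)^\dagger \doteq J(y)^\top \big(J(y) J(y)^\top\big)^{-1}$ is a right inverse, satisfying $J(y) J(y)^\dagger = I_m$. I then define
\begin{align*}
  \eta(x,y) \doteq J(y)^\dagger \big( h(x) - h(y) \big) \in \mathbb{R}^n.
\end{align*}
A direct computation gives
\begin{align*}
  \eta(x,y)^\top \nabla f(y) = \big(h(x)-h(y)\big)^\top \big(J(y)J(y)^\top\big)^{-1} J(y) J(y)^\top \nabla g(h(y)),
\end{align*}
which equals $\nabla g(h(y))^\top (h(x)-h(y))$. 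Combined with the first step, this yields $f(x) - f(y) \geq \eta(x,y)^\top \nabla f(y)$ for all $x, y$, which is exactly invexity.

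There is no real analytic obstacle. The only point needing care is bookkeeping of the Jacobian orientation, namely that the rank-$m$ matrix is the $m \times n$ Jacobian, so that a right inverse exists. I will also check that $\eta$ is well defined on the whole (open) domain, which holds because the rank condition is assumed at every point. As a sanity check, I will note the equivalent characterization: every stationary point $\nabla f(y) = 0$ forces $\nabla g(h(y)) = 0$, since $J(y)^\top$ is injective, and hence $y$ is a global minimizer by convexity of $g$. This is the form in which the lemma is used in the proof of statement iii) of \cref{thm:optimal_bound} and in \cref{thm:app_optimal_invex}.
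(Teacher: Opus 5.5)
Your proof is correct and follows the paper's argument. Both apply the first-order convexity inequality for $g$ at $h(y)$, then choose $\eta(x,y)$ to solve $J(y)\,\eta(x,y) = h(x)-h(y)$, which is solvable because the Jacobian has rank $m$; the only difference is that you give the solution explicitly via the right inverse $J(y)^\top \big(J(y)J(y)^\top\big)^{-1}$, whereas the paper only asserts that one exists.
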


\begin{proof}
  Due to convexity of $g$ all $x, y \in \mathbb{R}^{n}$, it holds that
  \begin{align}
    f(x) - f(y) &= g(h(x)) - g(h(y)) \\
    &\geq (h(x) - h(y))^\top \nabla g(h(y)).
  \end{align}
  Consider now the equation
  \begin{align}
    (h(x) - h(y))^\top \nabla g(h(y)) &= \eta(x,y)^\top \nabla f(y), \\
    &= \eta(x,y)^\top \nabla h(y) \nabla g(h(y)),
  \end{align}
  which is satisfied if there exists $\eta(x,y)^\top \in \mathbb{R}^{n}$ satisfying
  \begin{align}
    \eta(x,y)^\top \nabla h(y) &= (h(x) - h(y))^\top.
  \end{align}
  This is guaranteed since $\nabla h(y) \in \mathbb{R}^{n \times m}$ is of rank~$m$.
\end{proof}

Dual function $d(\lambda_{0:N})$ is convex.
Minimizing the single coordinate $\lambda_0$ over a convex set $\lambda_0 \geq 0$ preserves convexity~\cite[Section~3.2.5]{boyd_convex_2004}, i.e., $d_0(\lambda_{1:N}) = d(\lambda_0^\star(\lambda_{1:N}), \lambda_{1:N})$ is convex.
Introduce diffeomorphism
\begin{align}
  \Phi(\sigma)
  &=
  \begin{bmatrix}
    \lambda_1 \\
    \vdots \\
    \lambda_N
  \end{bmatrix}
  =
  \lambda_0^\star(\sigma_{1:N})
  \begin{bmatrix}
    \frac{1}
    {\sigma_1^2} \\
    \vdots \\
    \frac{1}{\sigma_N^2}
  \end{bmatrix},
  \intertext{where $\lambda_0^\star(\sigma_{1:N})$ is the optimal dual parameter of the relaxed problem~\eqref{eq:inf_relax_findim}. The inverse is given by}
  \Phi^{-1}(\lambda_{1:N}) 
  &= 
  \begin{bmatrix}
    \sigma_1 \\
    \vdots \\
    \sigma_N
  \end{bmatrix}
  =
  \sqrt{\lambda_0^\star(\lambda_{1:N})}
  \begin{bmatrix}
    \sqrt{\frac{1}{\lambda_1}} \\
    \vdots \\
    \sqrt{\frac{1}{\lambda_N}}
  \end{bmatrix},
  \intertext{where $\lambda_0^\star(\lambda_{1:N}) = \lambda_0^\star(\sigma_{1:N})$ is obtained by minimizing $d(\lambda_{0:N})$ along its first coordinate. Since $\lambda_0^\star(\sigma_{1:N}) > 0$ the Jacobian}
  \frac{\partial \Phi(\sigma)}{\partial \sigma}
  &= 
  \begin{aligned}[t]
    & - 2 \lambda_0^\star(\sigma_{1:N})
    \begin{bmatrix}
      \frac{1}{\sigma_1^{3}}  & & \\
      & \ddots & \\
      & & \frac{1}{\sigma_N^{3}}
    \end{bmatrix} \\
    & + 
    \begin{bmatrix}
      \frac{1}
      {\sigma_1^2} \\
      \vdots \\
      \frac{1}{\sigma_N^2}
    \end{bmatrix}
    \begin{bmatrix}
      \frac{\partial \lambda_0(\sigma_{1:N})}{\partial \sigma_1} & \ldots & \frac{\partial \lambda_0(\sigma_{1:N})}{\partial \sigma_N}
    \end{bmatrix}
  \end{aligned} \\
  &= \Sigma_{\partial} + u v^\top
\end{align}
is invertible(?). The inverse is given as
\begin{align}
  \left( \frac{\partial \Phi(\sigma)}{\partial \sigma} \right)^{-1} = \Sigma_{\partial}^{-1} - \Sigma_{\partial}^{-1} u \left( 1 + v^\top u \right)^{-1} v^\top \Sigma_{\partial}^{-1} 
\end{align}
and exists as long as $1 + v^\top u \neq 0$.
Thus, we have that
\begin{align}
  d_0^\sigma(\sigma_{1:N}) &= d_0(\Phi(\sigma_{1:N})),
\end{align}
where $d_0$ is convex and $\Phi$ has an invertible Jacobian,
implying
invexity of $d_0^\sigma$ by \cref{thm:invexity_invertible_jacobian}.
The function $\eta(\sigma_1, \sigma_2)$ is given by
\begin{align}
  \eta(\sigma_1, \sigma_2) &= \left( \frac{\partial \Phi(\sigma_1)}{\partial \sigma} \right)^{-1} \left( \Phi(\sigma_2) - \Phi(\sigma_1) \right).
\end{align}

\subsection{Scharnhorst dual formulation}

\begin{align}
  \begin{aligned}
    \overline{f}_h(x_{N+1}) = \min_{\substack{\nu \in \mathbb{R}^{N}, \\ \lambda > 0}} \quad & \frac{1}{4 \lambda} \bigg( K^f_{N+1,N+1} + \| C \nu \|_{K^f_{1:N,1:N}}^2 \\
    & - 2 K^f_{N+1,1:N} C \nu \bigg) \\
    & + y^\top \nu + \bar{\Gamma}_{w} \| \nu \|_1 + \lambda \Gamma_f^2
  \end{aligned}
\end{align}
Using that $K^f_{1:N+1,1:N+1} = \Phi_{1:N+1} \Phi_{1:N+1}^\top$, we obtain
\begin{align}
  & K^f_{N+1,N+1} + \| C \nu \|_{K^f_{1:N,1:N}}^2 - 2 K^f_{N+1,1:N} C \nu \\
  =& \> \Phi_{N+1} \Phi_{N+1}^\top + \| \Phi_{1:N}^\top C \nu \|_2^2 - 2 \Phi_{N+1} \Phi_{1:N}^\top C \nu \\
  =& \> \| \Phi_{N+1}^\top - \Phi_{1:N}^\top C \nu \|_2^2
\end{align}
Solve for $\lambda$:
\begin{align}
  & \min_{\lambda > 0} \quad \frac{\| \Phi_{N+1}^\top - \Phi_{1:N}^\top C \nu \|_2^2}{4 \lambda} + \lambda \Gamma_f^2 \\
  \Rightarrow 0 &= \Gamma_f^2 - \frac{\| \Phi_{N+1}^\top - \Phi_{1:N}^\top C \nu \|_2^2}{4 \lambda^2} \\
  \Rightarrow \lambda^\star &= \frac{\| \Phi_{N+1}^\top - \Phi_{1:N}^\top C \nu \|_2}{2 \Gamma_f}
\end{align}
Insert into cost:
\begin{align}
  & \frac{\| \Phi_{N+1}^\top - \Phi_{1:N}^\top C \nu \|_2^2}{4 \lambda^\star} + \lambda^\star \Gamma_f^2 \\
  = & \frac{1}{2} \Gamma_f \| \Phi_{N+1}^\top - \Phi_{1:N}^\top C \nu \|_2 + \frac{1}{2} \Gamma_f \| \Phi_{N+1}^\top - \Phi_{1:N}^\top C \nu \|_2 \\
  = & \Gamma_f \| \Phi_{N+1}^\top - \Phi_{1:N}^\top C \nu \|_2
\end{align}
Original problem:
\begin{align}
  \overline{f}_h(x_{N+1}) &=
  \begin{aligned}[t]  
    \min_{\substack{\nu \in \mathbb{R}^{N}}} \quad & \frac{1}{4 \lambda^\star(\nu)} \| \Phi_{N+1}^\top - \Phi_{1:N}^\top C \nu \|_2^2 \\
    & + y^\top \nu + \bar{\Gamma}_{w} \| \nu \|_1 + \lambda^\star(\nu) \Gamma_f^2
  \end{aligned} \\
  &= \begin{aligned}[t]  
    \min_{\substack{\nu \in \mathbb{R}^{N}}} \quad & \Gamma_f \| \Phi_{N+1}^\top - \Phi_{1:N}^\top C \nu \|_2 + y^\top \nu + \bar{\Gamma}_{w} \| \nu \|_1
  \end{aligned} \\
  &= \begin{aligned}[t]  
    \min_{\substack{\nu \in \mathbb{R}^{N}}} \quad & \Gamma_f \left\| \begin{bmatrix} 
    - C \nu \\
    1
    \end{bmatrix} \right\|_{K^f_{1:N+1,1:N+1}} + y^\top \nu + \bar{\Gamma}_{w} \| \nu \|_1
  \end{aligned}
\end{align}

\subsection{Scharnhorst dual multivariate derivation}

\begin{subequations}
  \label{eq:inf_opt_findim_scharnhorst}
  \begin{align}
    \underline{f}_h(x_{N+1}) = \inf_{\substack{\theta \in\mathbb{R}^{r}}} \quad & h^\top \Phi_{N+1} \theta                                                                                    \\
    \mathrm{s.t.} \quad  
    & \| \theta \|_2^2 \leq \Gamma_f^2,                                                                        \\
    & y_i - c_i^\top \Phi_{1:N} \theta \leq \Gamma_{w,i}, \\
    & -y_i + c_i^\top \Phi_{1:N} \theta \leq \Gamma_{w,i},
  \end{align}
\end{subequations}
Abbreviations:
\begin{subequations}
  \label{eq:inf_opt_findim_scharnhorst_short}
  \begin{align}
    \underline{f}_h(x_{N+1}) = \inf_{\substack{\theta \in\mathbb{R}^{r}}} \quad & q^\top \theta                                                                                    \\
    \mathrm{s.t.} \quad  
    & \| \theta \|_2^2 \leq \Gamma_f^2,                                                                        \\
    & y - A^\top \theta \leq \bar{\Gamma}_{w}, \\
    & -y + A^\top \theta \leq \bar{\Gamma}_{w},
  \end{align}
\end{subequations}
Lagrangian:
\begin{align*}
  \mathcal{L}(\theta, \lambda, \beta, \gamma) &= 
  \begin{aligned}[t]
    & q^\top \theta + \lambda \left( \| \theta \|_2^2 - \Gamma_f^2 \right) \\
    & + \beta^\top \left( y - A^\top \theta - \bar{\Gamma}_{w} \right) \\
    & + \gamma^\top \left( -y + A^\top \theta - \bar{\Gamma}_{w} \right)
  \end{aligned} \\
  &= 
  \begin{aligned}[t]
    & q^\top \theta + \lambda \left( \| \theta \|_2^2 - \Gamma_f^2 \right) \\
    & + (\beta^\top - \gamma^\top) \left( y - A^\top \theta \right) \\
    & - (\beta^\top + \gamma^\top) \bar{\Gamma}_{w}
  \end{aligned}
\end{align*}
Stationarity, $\nabla_\theta \mathcal{L} = 0$:
\begin{align*}
  0 &= q + 2 \lambda \theta - A (\beta - \gamma) \\
  \Leftrightarrow \theta^\star &= \frac{1}{2 \lambda} \left( A (\beta - \gamma) - q \right)
\end{align*}
Dual function:
\begin{align*}
  d(\lambda, \beta, \gamma) &= 
  \begin{aligned}[t]
    & \frac{1}{4 \lambda} \left( 
      \| q \|_2^2 - 2 q^\top A (\beta - \gamma) + \| A (\beta - \gamma) \|_2^2
    \right) \\
    & + \frac{1}{2 \lambda} \left( 
      - \| q \|_2^2 
      + q^\top A (\beta - \gamma) 
    \right) \\
    & + \frac{1}{2 \lambda} \left( 
      - \| A (\beta - \gamma) \|_2^2
      + q^\top A (\beta - \gamma)
    \right) \\
    & + y^\top (\beta - \gamma) - \lambda \Gamma_f^2 - \bar{\Gamma}_w^\top (\beta + \gamma)
  \end{aligned} \\
  &= 
  \begin{aligned}[t]
    & \frac{1}{4 \lambda} \left( 
      \| q \|_2^2 - 2 q^\top A (\beta - \gamma) + \| A (\beta - \gamma) \|_2^2
    \right) \\
    & - \frac{1}{2 \lambda} \left( 
      \| q \|_2^2 
      - 2 q^\top A (\beta - \gamma)
      + \| A (\beta - \gamma) \|_2^2
    \right) \\
    & + y^\top (\beta - \gamma) - \lambda \Gamma_f^2 - \bar{\Gamma}_w^\top (\beta + \gamma)
  \end{aligned} \\
  &= 
  \begin{aligned}[t]
    & -\frac{1}{4 \lambda} \left( 
      \| q - A (\beta - \gamma) \|_2^2
    \right) \\
    & + y^\top (\beta - \gamma) - \lambda \Gamma_f^2 - \bar{\Gamma}_w^\top (\beta + \gamma)
  \end{aligned}
\end{align*}
Reparametrize with invertible linear transformation:
\begin{align*}
  \begin{bmatrix}
    \nu \\
    \gamma
  \end{bmatrix}
  &= 
  \begin{bmatrix}
    I & -I \\
    0 & I
  \end{bmatrix}
  \begin{bmatrix}
    \beta \\
    \gamma
  \end{bmatrix}
  &&
  \Leftrightarrow
  &&
  \begin{bmatrix}
    \beta \\
    \gamma
  \end{bmatrix}
  &= 
  \begin{bmatrix}
    I & I \\
    0 & I
  \end{bmatrix}
  \begin{bmatrix}
    \nu \\
    \gamma
  \end{bmatrix}
\end{align*}
Transform constraints:
\begin{align*}
  \begin{bmatrix}
    \beta \\
    \gamma
  \end{bmatrix}
  \geq 0 
  &&
  \Leftrightarrow
  && 
  \begin{bmatrix}
    \nu + \gamma \\
    \gamma 
  \end{bmatrix}
  \geq 0
\end{align*}
Insert into dual function:
\begin{align*}
  d(\lambda, \beta, \gamma) &= 
  \begin{aligned}[t]
    & -\frac{1}{4 \lambda} \left( 
      \| q - A \nu \|_2^2
    \right) \\
    & + y^\top \nu - \lambda \Gamma_f^2 - \bar{\Gamma}_w^\top (\nu + 2 \gamma) 
  \end{aligned}
\end{align*}
Maximize dual function:
\begin{align*}
  \sup_{\substack{
    \nu \in \mathbb{R}^{N} \\
    \gamma \in \mathbb{R}^{N} \geq 0 \\
    \lambda > 0
  }}
  & \quad
  -\frac{1}{4 \lambda} \left( 
    \| q - A \nu \|_2^2
  \right)
  + y^\top \nu - \lambda \Gamma_f^2 - \bar{\Gamma}_w^\top (\nu + 2 \gamma) \\
  \mathrm{s.t.} \quad & \nu + \gamma \geq 0
\end{align*}
Partial maximization over $\lambda$ leads to:
\begin{align*}
  \sup_{\substack{
    \nu \in \mathbb{R}^{N} \\
    \gamma \in \mathbb{R}^{N}
  }}
  \quad &
  - \Gamma_f \| q - A \nu \|_2
  + y^\top \nu - \bar{\Gamma}_w^\top (\nu + 2 \gamma) \\
  \mathrm{s.t.} \quad & \nu + \gamma \geq 0 \\
  & \gamma \geq 0
\end{align*} 
We first solve partially for $\gamma$ as a function of $\nu$. 
The constraints imply that $\gamma \geq -\nu$ and that $\gamma \geq 0$.
Solving 
\begin{align*}
  \sup_{\substack{
    \gamma \in \mathbb{R}^{N}
  }}
  \quad & - 2 \bar{\Gamma}_w^\top \gamma \\
  \mathrm{s.t.} \quad & \gamma \geq - \nu \\
  & \gamma \geq 0
\end{align*}
thus leads to $\gamma^\star(\nu) = \max(-\nu, 0)$.
Inserting $\gamma^\star$ into the objective yields 
$\nu + 2 \max(-\nu, 0) = \| \nu \|_1$;
inserting $\gamma^\star$ into the constraints,
$\nu \geq -\max(-\nu, 0) = \min(\nu, 0)$, which is trivially fulfilled.
Finally, we obtain the dual problem
\begin{align*}
  \sup_{\substack{
    \nu \in \mathbb{R}^{N}  
  }}
  \quad &
  - \Gamma_f \| q - A \nu \|_2
  + y^\top \nu - \bar{\Gamma}_w \| \nu \|_1
\end{align*} 
Inserting the abbreviated expressions yields:
\begin{align*}
  \sup_{\substack{
    \nu \in \mathbb{R}^{N}  
  }}
  \quad &
  - \Gamma_f \| \Phi_{N+1}^\top h - \Phi_{1:N}^\top C \nu \|_2
  + y^\top \nu - \bar{\Gamma}_w \| \nu \|_1
\end{align*} 
As above, this is the same as
\begin{align*}
  \sup_{\substack{
    \nu \in \mathbb{R}^{N}  
  }}
  \quad &
  - \Gamma_f \| \begin{bmatrix} -C \nu \\ h \end{bmatrix} \|_{K^f_{1:N+1,1:N+1}}
  + y^\top \nu - \bar{\Gamma}_w \| \nu \|_1
\end{align*} 

\subsection{Scharnhorst iterative implementation}

unconstrained quadrativ program:
\begin{align*}
  \min_{\nu \in \mathbb{R}^{N}} \quad & \frac{1}{4 \lambda^\star} \| C \nu \|_{K^f_{1:N,1:N}}^2 + \left( y - \frac{1}{2 \lambda^\star} C^\top  \right)
\end{align*}

\subsection{Closed-form bound comparison}

Maddalena Prop.~1:
\begin{align*}
  \| \fmu \|_{\Hkf}^2
\end{align*}

Scharnhorst Prop.~3 \cite{scharnhorst_robust_2023}:
\begin{align*}
  \begin{aligned}[t]
    aa
  \end{aligned}
\end{align*}

Maddalena~\cite{maddalena_deterministic_2021}:
\begin{align*}
  \begin{aligned}[t]
    | \ftr(x_{N+1}) - \fmu[\sigma^2 / N](x_{N+1}) | \leq & \> \Sigma^f_0(x_{N+1}) \sqrt{\Gamma_f^2 - \| \gmu[0] \|_{\Hkf}^2} \\
    & \> + [\Gamma_{w,j}]^\top | (K^f_{1:N,1:N})^{-1} K^f_{N+1,1:N} | \\
    & \> + \left| y^\top \left( K^f_{1:N,1:N} + \frac{(K^f_{1:N,1:N})^2}{\sigma^2}  \right)^{-1} K^f_{N+1,1:N} \right|
  \end{aligned}
\end{align*}

\fi

\end{document}